\newif\ifarxiv\arxivtrue
\newif\ifnotarxiv
\newcommand{\arxiv}[1]{#1}
\newcommand{\arxiv}[1]{}
\newcommand{\notarxiv}[1]{}
\newcommand{\notarxiv}[1]{#1}
    \definecolor{cornellred}{rgb}{0.7, 0.11, 0.11}
    \definecolor{dgreen}{rgb}{0.0, 0.5, 0.0}
    \definecolor{ballblue}{rgb}{0.13, 0.67, 0.8}
    \definecolor{royalblue(web)}{rgb}{0.25, 0.41, 0.88}
    \definecolor{bleudefrance}{rgb}{0.19, 0.55, 0.91}
    \definecolor{royalazure}{rgb}{0.0, 0.22, 0.66}
\title{Objective-Based Hierarchical Clustering of Deep Embedding Vectors}
\newcommand*\samethanks[1][\value{footnote}]{\footnotemark[#1]}
\author{
    Stanislav Naumov\thanks{ITMO University.}\\{\tt josdas@mail.ru} \and
    Grigory Yaroslavtsev\thanks{Indiana University, Bloomington. Research supported by NSF award CCF-1657477 and Facebook Faculty Research Award.}\\{\tt grigory@grigory.us} \and
    Dmitrii Avdiukhin\samethanks[2]\\{\tt davdyukh@iu.edu}
}
\author{
    Stanislav Naumov\textsuperscript{\rm 1},
    Grigory Yaroslavtsev\textsuperscript{\rm 2},
    Dmitrii Avdiukhin\textsuperscript{\rm 2}
}
\thanks{Supported by NSF CCF-1657477 and Facebook Faculty Award.}
\pgfplotsset{compat=newest}
\newtheorem{theorem}{Theorem}[section]
\newtheorem{lemma}[theorem]{Lemma}
\newtheorem{proposition}[theorem]{Proposition}
\newtheorem{definition}[theorem]{Definition}
\newcommand{\multiline}[1]{
        \begin{tabular}{@{}c@{}}
        #1
        \end{tabular}
}
\newenvironment{proof}{\noindent{\bf Proof : \ }}{\hfill$\Box$\par\medskip}
\DeclareMathOperator*{\argmin}{\arg\!\min}
\let\oldnl\nl
\newcommand{\nonl}{\renewcommand{\nl}{\let\nl\oldnl}}
\newcommand{\items}{V}
\newcommand{\tree}{\mathcal T}
\newcommand{\lca}{\mathrm{LCA}}
\newcommand{\clustercount}{K}
\newcommand{\simty}{w}
\newcommand{\dist}{d}
\newcommand{\purity}{\mathrm{DP}}
\newcommand{\objective}{Q}
\newcommand{\ourchar}{+}
\newcommand{\dasgupta}[1][]{\objective_D^{#1}}
\newcommand{\ckmm}[1][]{\objective_{C}^{#1}}
\newcommand{\mw}[1][]{\objective_{M}^{#1}}
\newcommand{\ourmw}[1][]{\objective_{M^{\ourchar}}^{#1}}
\newcommand{\ourckmm}[1][]{\objective_{C^{\ourchar}}^{#1}}
\newcommand{\aproxobjective}{\alpha}
\newcommand{\aproxmw}[1][]{\aproxobjective_{M}^{#1}}
\newcommand{\aproxckmm}[1][]{\aproxobjective_{C}^{#1}}
\newcommand{\aproxourckmm}[1][]{\aproxobjective_{C^{\ourchar}}^{#1}}
\newcommand{\aproxourmw}[1][]{\aproxobjective_{M^{\ourchar}}^{#1}}
\newcommand{\normchar}{*}
\newcommand{\upperbound}{\objective^{ub}}
\newcommand{\dtriple}{\objective^{\Delta}_{\tree,dis}}
\newcommand{\balancedBisection}{\textsc{Balanced-Bisection-HC}\xspace}
\newcommand{\avglink}{\textsc{Average-Linkage}\xspace}
\newcommand{\random}{\textsc{Random}\xspace}
\newcommand{\tpath}{\textsc{Path}\xspace}
\newcommand{\maxsat}{\textsc{Max-2-Sat}\xspace}
\newcommand{\bmaxsat}{\textsc{Balanced Max-2-Sat}\xspace}
\newcommand{\bmaxsatpar}{\bmaxsat \textsc{ Partitioning}\xspace}
\newcommand{\f}{f}
\newcommand{\optsat}{OPT_{SAT}}
\newcommand{\topt}{\tree^*}
\newcommand{\topts}{\widehat{\topt}}
\newcommand{\opt}{OPT}
\newcommand{\algsat}{ALG_{SAT}}
\newcommand{\alg}{ALG}
\newcommand{\sol}{\mathcal{O}}
\newcommand{\ouralgo}{\textsc{B++\&C}\xspace}
\newcommand{\ouralgofull}{\textsc{Bisect++ and Conquer}\xspace}
\newcommand{\oursatalgo}{\textsc{B2SAT\&C}\xspace}
\newcommand{\eps}{\varepsilon}
\newcommand{\R}{\mathbb R}
\newcommand{\vx}{\mathbf x}
\newcommand{\vy}{\mathbf y}
\newcommand{\xt}[1]{\vx^{(#1)}}
\newcommand{\yt}[1]{\vy^{(#1)}}
\newcommand{\xti}[2]{x^{(#1)}_{#2}}
\newcommand{\iternum}{I}
\newcommand{\noise}{r}
\newcommand{\rate}{\eta}
\newcommand{\body}{\mathcal B}
\newcommand{\imbalance}{\delta}
\newcommand{\maxavgsz}{\theta}
\newcommand{\avglName}{\textsc{AverageLinkage}}
\newcommand{\gdName}{\textsc{GradientDescentPartitioning}}
\newcommand{\simmat}{W}
\newcommand{\kerleft}{\phi}
\newcommand{\kerright}{\psi}
\newcommand{\kermatleft}{\Phi}
\newcommand{\kermatright}{\Psi}
\newcommand{\cossim}{{cos\text{-}sim}}
\newcommand{\Ltwosqr}{{L_2^2}}
\newcommand{\rbf}[1]{{RBF_{#1}}}
\newcommand{\laplacian}[1]{{Laplacian_{#1}}}
\newcommand{\ghhc}{\textsc{gHHC}\xspace}
\newcommand{\grinch}{\textsc{Grinch}\xspace}
\newcommand{\perch}{\textsc{Perch}\xspace}
\newcommand{\birch}{\textsc{BIRCH}\xspace}
\newcommand{\hkmeans}{\textsc{BKMeans}\xspace}
\newcommand{\hdbscan}{\textsc{RobustSL}\xspace}
\newcommand{\Random}{\textsc{Random}\xspace}
\newcommand{\randomCut}{\textsc{Random cut}\xspace}
\newcommand{\completeLinkage}{\textsc{CompleteL}\xspace}
\newcommand{\wards}{\textsc{Ward’sM}\xspace}
\newcommand{\averageLinkage}{\textsc{AverageL}\xspace}
\newcommand{\singleLinkage}{\textsc{SingleL}\xspace}
\newcommand{\affinityClustering}{\textsc{AffinityC}\xspace}
\newcommand{\bbisection}{\textsc{B++\&C}($\delta=0$)\xspace}
\newcommand{\twosat}{\textsc{2SAT}\xspace}
\newcommand{\imagenetVTwo}{ImageNetV2\xspace}
\newcommand{\nabirds}{NaBirds\xspace}
\newcommand{\imagenet}{ImageNet\xspace}
\newcommand{\imagenetins}{ImageNetI\xspace}
\newcommand{\sstTwo}{SST-2\xspace}
\newcommand{\twitter}{Twitter\xspace}
\newcommand{\wikipedia}{Wikipedia\xspace}
\newcommand{\classicSetup}{features\xspace}
\newcommand{\generalizationSetup}{generalization\xspace}
\newcommand{\shiftSetup}{shift\xspace}
\newcommand{\wordSetup}{word\xspace}
\newcommand{\sentenceSetup}{sentence\xspace}
\newcommand{\generalSetup}{general\xspace}
\newcommand{\columnWidth}{1.85cm}
\newcolumntype{P}{>{\Centering}p{\columnWidth}}
\newcommand{\columnWidthS}{1.6cm}
\newcolumntype{S}{>{\Centering}p{\columnWidthS}}
\newcommand{\narrowText}[1]{\scalebox{1}[1]{#1}}
\newcommand{\MimagenetVTwo}{\narrowText{\multiline{\imagenetVTwo\\ResNet34\\CV\\\generalizationSetup\\small}}}
\newcommand{\Mnabirds}{\narrowText{\multiline{\nabirds\\ResNet34\\CV\\\shiftSetup\\medium}}}
\newcommand{\MimagenetSmall}{\narrowText{\multiline{\imagenet\\ResNet34\\CV\\\classicSetup\\large}}}
\newcommand{\MimagenetBig}{\narrowText{\multiline{\imagenet\\Inception\\CV\\\classicSetup\\large}}}
\newcommand{\MsstTwo}{\narrowText{\multiline{\sstTwo\\SBERT\\NLP\\\sentenceSetup\\medium}}}
\newcommand{\Mtwitter}{\narrowText{\multiline{\twitter\\Glove\\NLP\\\wordSetup\\large}}}
\newcommand{\Mwikipedia}{\narrowText{\multiline{\wikipedia\\Word2vec\\NLP\\\wordSetup\\large}}}
\newcommand{\MtabSetup}{\narrowText{\multiline{Dataset\\Method\\Domain\\Setting\\Size}}}
\newcommand{\glass}{Glass\xspace}
\newcommand{\spambase}{Spambase\xspace}
\newcommand{\aloi}{ALOI\xspace}
\newcommand{\covType}{CovType\xspace}
\newcommand{\imageNetAdditional}{ImageNet\xspace}
\newcommand{\Mglass}{\narrowText{\multiline{\glass\\--\\--\\\generalSetup\\small}}}
\newcommand{\Mspambase}{\narrowText{\multiline{\spambase\\--\\--\\\generalSetup\\small}}}
\newcommand{\Maloi}{\narrowText{\multiline{\aloi\\--\\--\\\generalSetup\\medium}}}
\newcommand{\McovType}{\narrowText{\multiline{\covType\\--\\--\\\generalSetup\\medium}}}
\newcommand{\MimageNetAdditional}{\narrowText{\multiline{\imagenetins\\Inception\\CV\\\classicSetup\\large}}}
\newcommand{\hltodo}[2]{\hl{#1}\todo{#2}}
\newcommand{\avgImprovementCkmm}{20$\%$\xspace}
\newcommand{\avgImprovementMw}{5$\%$\xspace}
\newcommand{\rangeImprovementCkmm}{4-59$\%$\xspace}
\newcommand{\rangeImprovementMw}{2-17$\%$\xspace}
\newcommand{\lcaind}[1]{\hat{#1}}
\definecolor{bleudefrance}{rgb}{0.19, 0.55, 0.91}
\definecolor{bittersweet}{rgb}{1.0, 0.44, 0.37}
\definecolor{brightlavender}{rgb}{0.75, 0.58, 0.89}
\definecolor{dandelion}{rgb}{0.94, 0.88, 0.19}
\definecolor{chocolate}{rgb}{0.82, 0.41, 0.12}
\definecolor{emerald}{rgb}{0.31, 0.78, 0.47}
\definecolor{lightsalmon}{rgb}{1.0, 0.63, 0.48}
\newcommand{\newtext}[1]{{#1}}
\newcommand{\header}[1]{\paragraph{#1}}
\newcommand{\header}[1]{\textbf{#1}}
\newcommand{\citepOur}[2][]{\citep*[#1]{#2}}
\newcommand{\citepOur}[2][]{\citep[#1]{#2}}
\newcommand{\citetOur}[2][]{\citet*[#1]{#2}}
\newcommand{\citetOur}[2][]{\citet[#1]{#2}}
\renewcommand{\cellcolor}[2][]{}
\begin{document}

\maketitle




\etocdepthtag.toc{mtchapter}
\etocsettagdepth{mtchapter}{subsection}
\etocsettagdepth{mtappendix}{none}

\begin{abstract}

We initiate a comprehensive experimental study of objective-based hierarchical clustering methods on massive datasets consisting of deep embedding vectors from computer vision and NLP applications.
This includes a large variety of image embedding (ImageNet, ImageNetV2, NaBirds), word embedding (Twitter, Wikipedia), and sentence embedding (SST-2) vectors from several popular recent models (e.g. ResNet, ResNext, Inception V3, SBERT).
Our study includes datasets with up to $4.5$ million entries with embedding dimensions up to $2048$.

In order to address the challenge of scaling up hierarchical clustering to such large datasets we propose a new practical hierarchical clustering algorithm \ouralgo.
It gives a \avgImprovementMw/\avgImprovementCkmm improvement on average for the popular Moseley-Wang (MW) / Cohen-Addad et al. (CKMM) objectives (normalized) compared to a wide range of classic methods and recent heuristics. 
We also introduce a theoretical algorithm \oursatalgo which achieves a $0.74$-approximation for the CKMM objective in polynomial time.
This is the first substantial improvement over the trivial $2/3$-approximation achieved by a random binary tree.
Prior to this work, the best poly-time approximation of $\approx 2/3 + 0.0004$ was due to Charikar et al. (SODA'19).

\end{abstract}
\section{Introduction}

Vector embeddings, in particular those obtained via deep neural nets, are an extremely popular technique for representing unstructured data (e.g. images, text, videos, etc.) as vectors in a $d$-dimensional feature space. While resulting vectors are most frequently used for classification they can also serve as representations for other downstream machine learning tasks, including clustering, deduplication, recommendation systems, etc.
Flat clustering of vector embeddings has been studied extensively (e.g.~\citepOur{MinGLZCL18clusteringFeatures,GuerinGTN17cnnFeatures}). 
In this paper we focus on \emph{hierarchical clustering}, which has a large number of applications, including anomaly detection~\citepOur{DbD17outlier,ParwezRG17anmaly,ZhouyuWT05anomaly}, personalized recommendations~\citepOur{YuchenAVA14recommendation}
and construction of flat clusterings~\citepOur{SanderQLNK03}.
There are classical and recent approaches which allow one to learn a hierarchy on objects in either supervised ~\citepOur{WuTL19, NickelK17poincare} or unsupervised fashion~\citepOur{YangPB16,SuJinKI19,MathieuLMTT19}.
However, such approaches are substantially more expensive than hierarchical clustering of embedding vectors. Hence hierarchical clustering of deep vector embeddings has emerged as a computationally efficient alternative (e.g. for applications to face recognition~\citepOur{LinCC17face}). 

In this paper we focus on scalable algorithms for \emph{objective-based hierarchical clustering}, i.e. clustering which optimizes a certain well-defined objective function. Designing an objective function for hierarchical clustering which can be approximated efficiently is challenging, and only recently substantial progress has been made following the work by~\citetOur{Dasgupta15}.
In this paper we focus on two popular objectives inspired by it: a similarity-based objective introduced in~\citetOur{MoseleyW17} (MW) and a distance-based objective introduced in~\citetOur{CohenKMM19} (CKMM).

Intuitively, these objectives measure the quality of the resulting hierarchical clustering on a random triple of objects from the dataset. They incentivize solutions where the more similar pair in the triple is closer in the resulting hierarchy (see Sec~\ref{sec:prelims} for formal definitions).
Worst-case approximation algorithms for these objectives are known \citepOur{MoseleyW17,CharikarCN19,AhmadianCEMMLY20,CohenKMM19,AlonAV20}.
Beyond worst-case analysis has been given for the hierarchical stochastic block model~\citepOur{CohenKM17} and for vector data~\citepOur{CharikarCNY2019}.

We study performance of objective-based hierarchical clustering methods on large beyond worst-case datasets consisting of deep vector embeddings.
We perform experiments on massive datasets
    (number of objects $n$ is in range $[5 \cdot 10^4, 4.5 \cdot 10^6]$ and embedding dimension $d$ is in range $[100, 2048]$)
    of word, sentence, and image embedding vectors from the last hidden layer of various popular neural architectures. 
We study three types of algorithms:
    1) algorithms with rigorous guarantees for the MW/CKMM objectives,
    2) classic hierarchical agglomerative methods,
    3) some other popular algorithms without guarantees scalable to large data. 

While the best worst-case approximations for MW and CKMM objectives are $0.585$~\citepOur{AlonAV20} and $\approx \nicefrac 23 + 0.0004$~\citepOur{CharikarCN19} respectively, we show that in practice, for deep vector embeddings in computer vision and natural language processing, many algorithms achieve a much better approximation.
We conduct our experiments for cosine similarity (for MW) and squared Euclidean distance (for CKMM) due to their widespread use as similarity/distance measures in deep learning applications,
    but we believe that our findings are likely to hold for various other measures as well.

Given the popularity of various heuristics for hierarchical clustering, we don't aim to provide a full list of all possible approaches and objectives
    (see classic and recent surveys for an overview~\citepOur{MurtaghC12overview,MurtaghC17overview,JainMF1999overview,ChristopherPH08overview}).
A non-exhaustive list of other methods and objectives, which we omit in this study due to lack of rigorous guarantees for MW/CKMM or scalability issues, includes various spectral methods~\citepOur{WuCYXXA18kernelSpectral,HuangWWLK19spectral},
    LSH-based average-linkage~\citepOur{CochezM15avgLsh,AbboudCH19avgLsh}, various $k$-means algorithms~\citepOur{Zhong05cosineKmeans,WangGM19kernelKmaens,ChenZM20kernelKmeans} and a recent objective for bisecting $k$-means~\citepOur{WangM20hkmeansObjective}.
    \subsection{Preliminaries}
\label{sec:prelims}




\paragraph{Distances and similarities.} In machine learning applications, some of the most popular similarity and dissimilarity measures for feature vectors are cosine similarity $\cossim(\vx, \vy) = \frac{\langle \vx, \vy \rangle}{2\|\vx\|_2\|\vy\|_2} + \frac12$ (e.g. for deep representation learning~\citepOur{ReimersG19SBERT}) and squared Euclidean distance $\Ltwosqr(\vx, \vy) = \|\vx - \vy\|_2^2$ (e.g. used in $k$-means).
Another frequently used class of similarity functions is radial basis functions $\rbf{\gamma}(\vx, \vy) = e^{- \gamma \|\vx - \vy\|_2^2}$.
These measures are examples of asymmetric kernel functions, i.e. there exist kernel-defining functions $\kerleft$ and $\kerright$ which allow to compute these measures either exactly ($\cossim$, $\Ltwosqr$) or approximately ($\rbf{\gamma}$) as dot products $\langle \kerleft(\vx), \kerright(\vy) \rangle$ in some inner product spaces~\citepOur{RahimiR07}\arxiv{, see Appendix~\ref{sec:kernels} for details}.
Such representation allows us to use an important optimization which we call an \emph{inverse kernel trick} (see Section~\ref{sec:algorithm}).


\paragraph{Hierarchical clustering.}  Given a set of $n$ objects, the goal of \emph{hierarchical clustering} (HC) is to find a tree $\tree$ (also referred to as a \emph{dendrogram}) which contains them as leaves.
The internal nodes of $\tree$ then correspond to clusters of objects at various levels of granularity.
We hence refer to the internal nodes of $\tree$ as \emph{clusters}, while also treating them as sets of all objects in their subtrees.
For a pair of leaves $(e_1, e_2)$ let $\lca_\tree(e_1, e_2)$ be the cluster $C \in \tree$ of the smallest cardinality such that $e_1, e_2 \in C$.

\paragraph{Objectives for HC.} Measuring the quality of a HC is more challenging than evaluating the performance of basic flat clustering and label prediction tasks.
Two major obstacles which have inhibited progress on optimization algorithms for HC are:
\ifarxiv \begin{enumerate} \item \else 1) \fi
    difficulty of collecting accurate ground truth tree information (instead, triples~\citepOur{VaggosRM2018}, quadruples~\citepOur{GhoshdastidarP2019} and flat classes~\citepOur{KobrenMKM17perch} are often used) and
\ifarxiv \item \else 2) \fi
    diversity of methods using which such ground truth can be compared with the algorithm's output $\tree$: tree edit distance~\citepOur{CochezM2015,Bille2005}, flat clustering score~\citepOur{AbboudCH19avgLsh,BateniBDHKLM17,KobrenMKM17perch}.
\ifarxiv \end{enumerate} \fi
In a typical scenario when a ground truth partition $\set{C_i}_{i=1}^\clustercount$ into flat clusters is known, a popular quality measure is \emph{dendrogram purity} (DP)~\citepOur{HellerG05}, defined as maximizing:
\begin{align}
   \purity(\tree) = \frac 1 {\sum_{i=1}^\clustercount |C_i|^2} \sum_{i=1}^\clustercount \sum_{e_1,e_2 \in C_i} \frac {|C_i \cap \lca_\tree(e_1, e_2)|} {|\lca_\tree(e_1, e_2)|} 
\end{align}
However, DP says little about the quality of $\tree$ overall -- perfect DP can be achieved when each ground truth cluster corresponds to a subtree, regardless of hierarchy on top or inside of the subtrees.

Addressing the above challenges, a recent line of work by \citetOur{Dasgupta15,MoseleyW17,CohenKMM19} has proposed a family of related optimization objectives for HC. Instead of relying on ground truth information, these methods only use either distances ($\dist_{ij}$) or similarities ($\simty_{ij}$) between the data points.
\newtext{
\begin{definition}
Let $\simty\colon \items \times \items \to \R_{\ge 0}$ be a similarity function.
Then Dasgupta's objective minimizes
\begin{align}
\dasgupta(\tree) := \sum_{i<j}\simty_{ij} |\lca_\tree(e_i, e_j)| \to \min
\end{align}
A complementary Moseley-Wang's (MW) objective maximizes
\begin{align}
\mw(\tree)
:= \sum_{i<j}\simty_{ij} (n - |\lca_\tree(e_i, e_j)|) \to \max
\end{align}
\end{definition}}

\newtext{
\begin{definition}
Let $\dist\colon \items \times \items \to \R_{\ge 0}$ be a distance function.
Then Cohen-Addad et al. (CKMM) objective maximizes
\begin{align}
\ckmm(\tree)
&:= \sum_{i<j} \dist_{ij} |\lca_\tree(e_i, e_j)| \to \max
\end{align}
over \emph{binary} trees.
\end{definition}}

\newtext{Note that $\dasgupta(\tree) + \mw(\tree) = \sum_{i<j} \simty_{ij} n = const$, and hence minimizing $\dasgupta$ is equivalent to minimizing $\mw$.
$\dasgupta$ and $\ckmm$ have similar expressions; however, since one uses similarities and another one uses distance, $\dasgupta$ is minimized, while $\ckmm$ is maximized.
In the worst-case, optimizing these three objectives exactly is NP-hard~\citepOur{Dasgupta15, CohenKMM19}}

\paragraph{Approximations and normalized objectives for HC.} 
One of our goals is to measure approximations achieved by various algorithms.
Since computing the optimum is NP-hard, we use an upper bound $\upperbound \ge \opt$ on it instead.
\emph{Approximation factor} is then defined as $\aproxobjective(\tree) = \nicefrac {\objective(\tree)} {\upperbound}$.

One of the effects of considering similarity/dissimilarity graphs induced by high-dimensional vector representations is the concentration of measure.
The distributions of weights have small standard deviations and large means\footnote{E.g. for embeddings of \imagenetVTwo using ResNet34 mean cosine similarity $\cossim$ between vectors in the same class is $\approx0.88$ and between different classes is $\approx0.75$.}.
This is why the approach which returns a \arxiv{simple }random binary tree (\ifarxiv which we denote \else denoted \fi as \random) gives a good approximation for the objectives above.
To highlight the differences in quality between different algorithms, we propose normalized versions of the objectives which measures advantage over \random.
\begin{definition}
Let $\tree_R$ be a random binary tree clustering and $\objective$ be a maximization objective.
Then we define \emph{normalized} approximation factors for $\objective$ as
\begin{align}
  \aproxobjective^*(\tree) = \frac {\objective(\tree) - \mathbb E[\objective(\tree_R)]} {\upperbound - \mathbb E[\objective(\tree_R)]}  
\end{align}
\end{definition}
With a slight abuse of notation, for a triple $(i,j,k)$ let $(\lcaind{i},\lcaind{j},\lcaind{k})$ be a permutation of indices such that $\lcaind{i}$ and $\lcaind{j}$ have the smallest cardinality $\lca_\tree$ in the triple.
\citetOur[Section 4.3]{CharikarCN19} show that $\mw(\tree) = \sum_{i < j < k} \simty_{\lcaind{i}\lcaind{j}}$ and hence for the MW objective we can use a standard upper bound
$$\mw[ub]= \sum_{i < j < k} \max(\simty_{ij}, \simty_{ik}, \simty_{jk})$$

For Dasgupta's objective, \citetOur[Claim 1]{WangW2018} show that $\dasgupta(\tree) = \sum_{i < j < k}(\simty_{\lcaind{i}\lcaind{k}} + \simty_{\lcaind{j}\lcaind{k}}) + 2\sum_{i < j} \simty_{ij}$. 
Since the expressions for $\dasgupta$ and $\ckmm$ are similar, we have
$$\ckmm[ub] = \sum_{i < j < k} \max(\dist_{ij} + \dist_{ik}, \dist_{ik} + \dist_{jk}, \dist_{ij} + \dist_{jk}) + 2\sum_{i < j} \dist_{ij}$$

    \subsection{Our Contributions}

The main contributions of our paper are the following:

\textbf{Experimental study of objective-based HC.} We provide the first comprehensive experimental study of objective-based hierarchical clustering methods on massive datasets consisting of deep embedding vectors. For such vectors, we compare various HC clustering approaches, including classical hierarchical agglomerative clustering, top-down approaches and various other recent HC algorithms.

\textbf{New normalized objectives}. Due to the special structure of vector embedding data\footnote{Similarities/distances between vectors from the same classes and from different classes in such data can be very close.}, even the simplest approaches produce solutions with very high MW and CKMM scores (even a trivial random tree achieves 85-97\% approximation).
To address this issue, in Section~\ref{sec:prelims} we introduce \emph{normalized} MW/CKMM objectives which are better suited for deep vector embedding data. They capture the advantage over a trivial random tree and allow to better separate performance of different algorithms.

\textbf{New algorithm \ouralgo}. In Section~\ref{sec:algorithm}, we propose an algorithm \ouralgo inspired by~\citetOur{AhmadianCEMMLY20}. The main idea is to consider a graph whose vertices are objects and edges are similarities (dissimilarities). We perform a top-level split by partitioning this graph, so that the cut between the resulting parts is minimized (maximized). Our approach differs from~\citetOur{AhmadianCEMMLY20} in that we perform multiple levels of partitioning while also allowing imbalance at each level. We show that on deep vector embeddings this algorithm outperforms a wide range of alternatives (see Figure~\ref{fig:main_hist}, Tables~\ref{tab:distance_based} and~\ref{tab:mw_sim}).

\textbf{Scaling up \ouralgo.} One of the main advantages of \ouralgo is its efficiency. The algorithm is a gradient descent approach inspired by \citetOur{AvdiukhinPY19} applied to a quadratic function.
Using a technique which we refer to as \emph{inverse kernel trick} (see Section~\ref{sec:algorithm}), for several widely used similarities and distance measures, we can represent $A$ as a product of low-rank matrices, which allows us to compute the gradient efficiently.

\textbf{\oursatalgo and improved approximation for CKMM.} In Section~\ref{sec:bsat_algorithm}, we introduce a theoretical hierarchical clustering algorithm \oursatalgo which achieves a $0.74$-approximation for the CKMM objective in polynomial time, significantly improving existing $\approx \nicefrac 23 + 0.0004$ approximation~\citepOur{CharikarCN19}. The main idea is to reduce the problem of performing a top-level split to the \bmaxsat problem.

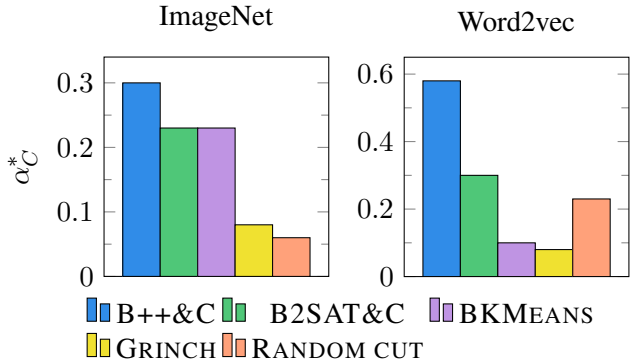
\begin{figure}[t]

\centering

\begin{tikzpicture}
 \begin{groupplot}[
     group style = {group size = 2 by 1},
     height = 4.5cm,
     width = 11cm,
    ]
    \nextgroupplot[
        title=ImageNet,
        width=130, 
        ybar interval,
        ymax=0.34, 
        ymin=0, 
        minor y tick num = 1,
        xticklabels=\empty,
        xtick=\empty,
        ylabel = {$\aproxckmm[\normchar]$},
    ]
    \addplot [fill=bleudefrance] coordinates { (0, 0.3) (5, 0) };
    \addplot [fill=emerald] coordinates { (0, 0.23)  (5, 0) };
    \addplot [fill=brightlavender] coordinates { (0, 0.23)  (5, 0) };
    \addplot [fill=dandelion] coordinates { (0, 0.08) (5, 0) };
    \addplot [fill=lightsalmon] coordinates { (0, 0.06) (5, 0) };
    \nextgroupplot[
        title=Word2vec,
        width=130, 
        ybar interval,
        ymax=0.65, 
        ymin=0, 
        minor y tick num = 1,
        legend columns=3,
        legend style={
            at={(-0.27,-0.07)},
            anchor=north,
            draw=none,
        },
        xticklabels=\empty,
        xtick=\empty,
    ]
    \addplot [fill=bleudefrance] coordinates { (0, 0.58) (5, 0) };
    \addplot [fill=emerald] coordinates { (0, 0.30)  (5, 0) };
    \addplot [fill=brightlavender] coordinates { (0, 0.10)  (5, 0) };
    \addplot [fill=dandelion] coordinates { (0, 0.08) (5, 0) };
    \addplot [fill=lightsalmon] coordinates { (0, 0.23) (5, 0) };
    \legend{\ouralgo, \oursatalgo, \hkmeans, \grinch, \randomCut} 
  \end{groupplot}
\end{tikzpicture}
\caption{Normalized distance-based CKMM objectives $\aproxckmm[\normchar]$ under squared Euclidean distance for embeddings of \imagenet using ResNet34 and word embeddings of Wikipedia using Word2vec. \ouralgo outperforms other approaches by 7\% for \imagenet and by 35\% for Word2vec. 
}
\label{fig:main_hist}
\end{figure}

\section{\ouralgofull}
\label{sec:algorithm}
Our algorithm \ouralgofull (Algorithm~\ref{alg:hiclustering}) is an extension of the \textsc{Bisect and Conquer} technique from~\citetOur{AhmadianCEMMLY20} with several crucial modifications which allow one to achieve high practical performance and solution quality. 
If the set of items $\items$ is small (less than a specified parameter $\maxavgsz$, typically $\maxavgsz \in [100,5000]$), we solve HC using average-linkage.
Otherwise, we reduce our problem to graph partitioning: we introduce a complete graph where vertices are objects and edge weights are similarities (distances) between them, and our goal is to partition the vertices into two sets of a fixed size so that the total weight of edges between the parts is minimized (maximized for distance-based objectives).

\newtext{Let $\vx \in \set{-1, 1}^n$ be a vector such that $\vx_i = 1$ if element $i$ belongs to the first part and $\vx_i=-1$ otherwise.
The graph partitioning problem can be reduced to optimizing a quadratic function $f(\vx) = \vx^\top \simmat \vx$, where $\simmat$ is the similarity matrix, under constraints $\vx_i \in \set{-1, 1}$ (each vertex belongs to some part) and $\sum_i x_i \approx 2\imbalance n$ (balance constraint).
Note that in $f(\vx)$, $\simmat_{uv}$ is taken with a positive sign if $u$ and $v$ are in the same part ($\vx_u = \vx_v$), and with a negative sign otherwise.
$\imbalance$ is an imbalance parameter controlling sizes of the parts, which are approximately $(\nicefrac 12 - \imbalance)|V|$ and $(\nicefrac 12 + \imbalance)|V|$.
If the parts should be equal, $\imbalance=0$; otherwise, we can tune $\imbalance$ to improve partition quality on imbalanced datasets.

Our algorithm is based on the approach described in~\citetOur{AvdiukhinPY19}:
we optimize a continuous relaxation ($\vx_i \in [-1, 1]$) of the function above.
\ifarxiv Algorithm~\ref{alg:gd} \else The algorithm \fi is a \emph{projected gradient descent approach} which optimizes $f(\vx) = \vx^\top \simmat \vx$ under constraints $\vx_i \in [-1, 1]$ and $\sum_i x_i = 2\imbalance n$.
In the end, $i$-th element goes to the first part with probability $\nicefrac {(\vx_i + 1)} 2$.
The key idea of our approach is the ``Inverse kernel trick'', which helps us to avoid building an explicit graph, as described below.}

    \begin{algorithm}[tb]
    	\SetKwInOut{Input}{input}
    	\SetKwInOut{Output}{output}
    	\SetKwRepeat{Do}{do}{while}
    	\nonl \textbf{parameters:} noise variance $\noise$, learning rates $\set{\rate_t}$, the number of iterations $\iternum$, kernel-defining functions $\kerleft, \kerright: \R^d \to \R^k$\\
    	\Input{Feature vectors $V = \set{v_1, \dots, v_n} \subseteq \R^d$,\\ imbalance $\imbalance \in [0, 0.5]$}
    	\Output{imbalanced partition of $\items$ into $(\items_1, \items_2)$ }
    	\caption{\gdName: $\imbalance$-imbalanced Graph $2$-Partitioning via Randomized Projected Gradient Descent}
        \label{alg:gd}
        $\body = [-1, 1]^n \cap \set{\vx \in \R^n | \sum_i x_i = 2\imbalance n}$ \\
        \For(\tcp*[f]{\texttt{Compute} $\kermatleft, \kermatright \in \R^{n \times k}$}){$i = 1$ \KwTo $n$}{\label{line:gd:init_Kernel_start}
    	    $\kermatleft_i, \kermatright_i \gets \phi(v_i), \psi(v_i)$\\ \label{line:gd:init_Kernel_end}
    	}
    	$\xt{0} \gets \argmin\limits_{\vx \in \body} \|\mathcal N_n(\mathbf 0, \noise) - \vx\|$ \\	
    	\nonl \texttt{// Projected gradient descent} \\
    	\For{$t = 0$ \KwTo $\iternum-1$}{
    		$\yt{t + 1} \gets \xt{t} - \rate_t \kermatleft \kermatright^\top \xt{t}$ \label{line:gd:grad-step}\\
    		$\xt{t + 1} \gets \argmin\limits_{\vx \in \body} \|\yt{t + 1} - \vx\|$ \label{line:gd:projection}\\
    	}
        $\items_1 \gets \items_2 \gets \emptyset$; \label{line:gd:randomized_rounding_start} \\
        \For(\tcp*[f]{\texttt{Randomized rounding}}){each $i \in \items$} {
        	\texttt{With probability $\frac {\xti ti + 1} 2$,
        	\notarxiv{\\}let $\items_1 \gets \items_1 \cup \{i\}$ \\
        	otherwise, \arxiv{let }$\items_2 \gets \items_2 \cup \{i\}$} \label{line:gd:randomized_rounding_end} \\
        }
    	\KwRet{$(\items_1, \items_2)$}
    \end{algorithm}

\begin{algorithm}[t]
	\SetKwInOut{Input}{input}
	\SetKwInOut{Output}{output}
	\SetKwRepeat{Do}{do}{while}
	\nonl \textbf{parameters:} required imbalance $\imbalance$, the maximum number of elements to run average linkage $\maxavgsz$\\
	\Input{Feature vectors $V = \set{v_1, \dots, v_n} \subseteq \R^d$}
	\Output{Clustering tree on $\items$}
	\caption{\ouralgofull: Hierarchical clustering via imbalanced graph partitioning}
    \label{alg:hiclustering}
    
    \If{$n < \maxavgsz$}{
        \KwRet{$\avglName(\items)$}
    }
    \mbox{$\items_1, \items_2 \gets \gdName(\items, \imbalance)$}
    
	\KwRet{$\set{\items_1, \items_2} \ \cup \ \ouralgo(\items_1) \ \cup \ \ouralgo(\items_2)$}
    
\end{algorithm}

\header{Inverse kernel trick.}
Note that computing the gradient $\nabla f(\vx) = 2 \simmat \vx$ na\"ively requires either $O(n^2 d)$ time or $O(n^2)$ space/time per iteration.
To scale \ouralgo, we use a technique which we call an \emph{inverse kernel trick}.
This technique is applicable for \textit{kernelizable} similarities and distance measures which can be represented as $\simty_{ij} = \langle \kerleft(v_i), \kerright(v_j) \rangle$ for functions $\kerleft, \kerright: \R^d \to \R^k$, which we call \emph{kernel-defining functions} (examples can be  found in \ifarxiv Appendix~\ref{sec:kernels}\else the full version\fi). These functions can be defined using matrices $\kermatleft, \kermatright \in \R^{n \times k}$, whose $i$-th rows are $\kermatleft_i = \kerleft(v_i)$ and $\kermatright_i = \kerright(v_i)$.
Then the gradient can be computed as $\simmat \vx = \kermatleft (\kermatright^\top \vx)$ in $O(n k)$ time.
\newtext{Some kernels (e.g. RBF) do not have finite-dimensional kernelization. In such cases we use an unbiased finite-dimensional estimation of the kernel (see Table~3 in \ifarxiv Appendix~\ref{sec:kernels}\else the full version\fi)}

We now outline \ifarxiv Algorithm~\ref{alg:gd}\else the algorithm\fi.
\ifarxiv In lines~\ref{line:gd:init_Kernel_start}-\ref{line:gd:init_Kernel_end} we \else We first \fi precompute matrices $\kermatleft$ and $\kermatright$ as described above.
\ifarxiv In line~\ref{line:gd:grad-step} \else Then \fi we \notarxiv{repeatedly} perform a gradient descent step\ifarxiv.
In line~\ref{line:gd:projection} we \else and \fi project the current point onto the feasible space.
\ifarxiv In lines~\ref{line:gd:randomized_rounding_start}-\ref{line:gd:randomized_rounding_end} \else In the end \fi we perform randomized rounding.
Time complexity of \ouralgo is $O(\iternum n k \log{n} + \maxavgsz n k)$, where $\iternum$ is the number of iterations (since when we use average linkage, we have $\nicefrac n \maxavgsz$ sets of size $\maxavgsz$, and therefore the total complexity of average linkage is $O(\maxavgsz n k)$). Space complexity
is $O(n k)$. \ouralgo is highly parallelizable since after each iteration, the tree is divided into two independent subtrees.
Additionally, each iteration is highly parallelizable since the most expensive operations are matrix-vector multiplications.



\section{\oursatalgo: Improved Approximation for the CKMM Objective}
\label{sec:bsat_algorithm}


\newtext{In this section, we introduce our main algorithm (Algorithm~\ref{alg:maxsathc}) which achieves $0.74$-approximation for the CKMM objective, significantly improving the previous $\approx \nicefrac 23 + 0.0004$ approximation.
The main idea behind our algorithm is to use \bmaxsatpar as a subroutine. In \bmaxsatpar the objective is to partition a set $V$ into two sets $S$ and $T$ of approximately the same size, such that the total weight of edges with at least one endpoint in $S$, i.e. $\sum_{u,v \in V \colon u \in S \text{ or } v \in S} \dist_{uv}$, is maximized.

This objective can be expressed as an instance of \bmaxsat:
given a $2$-SAT formula whose clauses have specified weights, the goal is to find an assignment for which exactly half of the variables are true and the total weight of satisfied clauses is maximized.
Our Algorithm~\ref{algo:max_2_sat_partitioning} constructs this instance of \bmaxsat as a collection of $n^2$ disjunctions: for each $u,v \in V$ we introduce a clause $(x_u \vee x_v)$ with weight $\dist_{uv}$ (Line~\ref{line:max2sat:add_clause}).
Given a solution for the instance let $S = \{u \colon x_u = 1\}$ and $T = \{u \colon x_u = 0\}$, we select the best of the following three solutions $\sol_1, \sol_2, \sol_3$ (see \ifarxiv Figure~\ref{fig:solutions} in Appendix \else Appendix in the full version \fi for an illustration):}
\begin{compactitem}
\item ($\sol_1$) We define $\random(S)$ and $\random(T)$ as random permutations of elements of $S$ and $T$. We then concatenate these permutations and define the solution as a path tree, where elements from $S$ are at the top and elements from $T$ are at the bottom.
\item ($\sol_2$) Union of recursive bisections of $S$ and $T$.
\item ($\sol_3$) Recursive bisection of $V$.
\end{compactitem}
The motivation for \newtext{$\textsc{Balanced Max-2-Sat Partitioning}$} is that, in the path solution, edges inside $S$ and edges between $S$ and $T$ will have a greater $\lca$ compared to edges inside $T$. Thus they make a more significant contribution to the CKMM objective.
As a result, we can show the following:

\begin{theorem}\label{thm:b2sat-approx}[Proof in \ifarxiv Appendix~\ref{sec:proof}\else the full version\fi]
Algorithm~\ref{alg:maxsathc} constructs a tree which gives a multiplicative $\gamma$-approximation of the optimum value of the CKMM objective, where \ifarxiv $\gamma  = \frac{4/3}{1 + \frac{3}{4 \cdot 0.94}} \ge 0.74.$ \else $\gamma \ge 0.74.$ \fi \end{theorem}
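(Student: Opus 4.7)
The plan is to show that the best of the three candidate solutions $\{\sol_1,\sol_2,\sol_3\}$ constructed by Algorithm~\ref{alg:maxsathc} attains a $\gamma \ge 0.74$ fraction of $\opt$. Since $\max_j \sol_j \ge \sum_j \lambda_j \sol_j$ for any distribution $(\lambda_j)$, it suffices to exhibit weights $\lambda$ whose convex combination dominates $\gamma \cdot \opt$ on every instance. To make both sides comparable I would use the triple identity $\ckmm(\tree) = \sum_{i<j<k}(d_{\hat i \hat k} + d_{\hat j \hat k}) + 2\sum_{i<j} d_{ij}$ so that $\opt$, each $\sol_j$, and the upper bound $\ckmm[ub]$ all decompose along triples; I would then group contributions by how many of $i,j,k$ lie in the \bmaxsat-selected part $S$. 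Let $M = D_{SS}+D_{ST}$ and $D = \sum_{u<v} d_{uv}$; the Balanced Max-2-SAT guarantee gives $M \ge 0.94 \cdot M^*$, which is the driver of the improved approximation.

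The bound for $\sol_3$ (recursive random bisection of $V$) is standard: over random binary trees each of the three choices of $\hat k$ per triple appears with equal probability, so the expected per-triple contribution is $\tfrac{2}{3}(d_{ij}+d_{ik}+d_{jk}) \ge \tfrac{2}{3}\max(\cdots)$, which combined with the exact $2D$ term yields $\sol_3 \ge \tfrac{2}{3}\,\ckmm[ub] \ge \tfrac{2}{3}\,\opt$. The harder step is the analysis of the path solution $\sol_1$. For a uniformly random ordering of $S$ followed by $T$, expected-minimum-of-random-position calculations give LCA cardinalities $\approx \tfrac{5n}{6}$, $\tfrac{3n}{4}$, and $\tfrac{n}{3}$ for within-$S$, cross, and within-$T$ pairs respectively, hence
\[
\mathbb{E}[\ckmm(\sol_1)] \;\approx\; \tfrac{5n}{6}D_{SS} + \tfrac{3n}{4}D_{ST} + \tfrac{n}{3}D_{TT},
\]
which scales linearly with $M$ and is therefore large precisely on the instances where the \bmaxsat partition concentrates weight on $S$.

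Finally I would take a convex combination $\lambda \sol_1 + (1-\lambda)\sol_3$ and tune $\lambda$ so that the combined bound meets $\gamma \cdot \opt$ at the worst-case ratio of $M$ to $\opt$: the trivial bound $\sol_3 \ge \tfrac{2}{3}\opt$ dominates when $M$ is small, while $\sol_1 \gtrsim \tfrac{n}{2} M$ dominates when $M$ is large, with the crossover pinned by $M \ge 0.94 \cdot M^*$. Balancing the two regimes and using $\opt \le \ckmm[ub]$ should collapse algebraically to the stated $\gamma = (4/3)/(1 + 3/(4 \cdot 0.94)) \ge 0.74$.

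I expect the main obstacle to be the tight quantitative link between $M^*$ and $\opt$: a lossy bound there immediately weakens the ratio, and the $0.74$ constant is sharp only when the full $0.94$-factor of Balanced Max-2-SAT is combined with a careful accounting of the four triple types classified by $|\{i,j,k\}\cap S| \in \{0,1,2,3\}$, including the cross terms that blend $\sol_1$'s and $\sol_3$'s contributions. A secondary but routine nuisance is absorbing the lower-order terms when $|S| \ne |T|$ so that the leading ratio is preserved, and verifying that the role of $\sol_2$ in the final combination is either redundant (as I expect) or contributes strictly to handle a boundary regime.
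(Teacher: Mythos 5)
Your bounds on the individual solutions match the paper's: the expected LCA fractions $\tfrac56$, $\tfrac34$, $\tfrac13$ for within-$S$, cross, and within-$T$ pairs in $\sol_1$ are exactly the paper's Proposition on the path solution, and $\sol_3 \ge \tfrac23\,\opt$ is its Proposition on \balancedBisection. But the proof has a genuine gap precisely at the point you flag as "the main obstacle" and then wave past: you never establish a quantitative link between $\optsat$ (your $M^*$) and $\opt$, and without it no balancing of $\sol_1$ against $\sol_3$ can beat $\tfrac23$. The $0.94$-approximation only relates the algorithm's cut value $M$ to $M^*$; it says nothing about how large $M^*$ must be when $\opt$ is large. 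Your suggested fix, "using $\opt \le \ckmm[ub]$ should collapse algebraically," does not work: $\ckmm[ub]$ is a triple-wise max bound with no useful relation to the value of a \emph{balanced} 2-SAT partition. The paper's crucial (and missing from your plan) idea is structural: take the optimal tree $\topt$, locate the node of size $>\nicefrac n2$ whose children $A,B$ both have size $\le \nicefrac n2$, set $C = \items\setminus(A\cup B)$, and then (i) upper bound $\opt$ by $a\,d(A)+b\,d(B)+(1-c)\,d(A,B)+d(C)+d(A,C)+d(B,C)$, and (ii) lower bound $\optsat$ via the explicit balanced assignment that puts $C$ on the true side, $A$ on the false side, and splits $B$ randomly. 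These two bounds share the same variables, and a subsequent case analysis (reducing to $d(B)=0$, then three cases on where the minimizing $d(A)$ sits) is what actually produces $\gamma = \frac{4/3}{1+\frac{3}{4\cdot 0.94}}$. Your per-triple bookkeeping by $|\{i,j,k\}\cap S|$ gives no handle on this, because $S$ is chosen by the SAT solver, not by the optimal tree.

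A second, quantitative problem: your expectation that $\sol_2$ is redundant is wrong for the claimed constant. In the paper, $\sol_1$ alone gives only $\tfrac13+\tfrac12 D_1+\tfrac{5}{12}D_2 \ge \tfrac13+\tfrac{5}{12}\optsat$ (worst case $D_1=0$), whereas balancing $\sol_1$ against $\sol_2$ (which carries coefficient $\tfrac23$ on $D_2$) lifts the coefficient to $\tfrac49$, i.e.\ $\algsat \ge \tfrac13+\tfrac49\cdot 0.94\cdot\optsat$. Feeding $\xi = \tfrac{5}{12}\cdot 0.94$ instead of $\xi=\tfrac49\cdot 0.94$ through the final optimization $\gamma = \frac{4\xi}{1+3\xi}$ yields roughly $0.72$, not $0.74$; so $\sol_2$ contributes materially to the stated bound rather than handling a boundary regime.
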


\begin{algorithm}[t]
\caption{\textsc{Balanced Max-2-Sat Partitioning}}
\label{algo:max_2_sat_partitioning}
	\SetKwInOut{Input}{input}
	\SetKwInOut{Output}{output}
    \Input{Distance function $\dist \colon \items \times \items \to \mathbb R_{\ge 0}$.}
	\Output{Partitioning of $S$}
    Set $C \gets \emptyset$\\
    \For{$(u,v) \in \items \times \items$}  {
        Add clause $(x_u \lor x_v)$ to $C$ with weight $\dist_{uv}$\label{line:max2sat:add_clause}
    }
    \Return \textsc{Balanced Max-2-Sat}(C)
\end{algorithm}
\begin{algorithm}[t]
	\SetKwInOut{Input}{input}
	\SetKwInOut{Output}{output}
	\SetKwRepeat{Do}{do}{while}
	\Input{Feature vectors $V = \set{v_1, \dots, v_n} \subseteq \R^d$,\\
	distance function $\dist \colon \items \times \items \to \mathbb R_{\ge 0}$}
	\Output{Clustering tree on $\items$}
    \caption{Hierarchical Clustering via \textsc{Max-2-Sat} (\oursatalgo)}
    \label{alg:maxsathc}
    Let $S \subseteq \items$ be the set of vertices corresponding to positive variables in the assignment given by $\bmaxsatpar(\dist)$\;
    $\begin{aligned}\sol_1 \gets \tpath(\textsc{Concat}(
        &\random(S),
        \notarxiv{}\random(\items \setminus S)))
        \end{aligned}$\\
    $\begin{aligned}\sol_2 \gets (
        &\balancedBisection(S,\dist),
        \notarxiv{\\&}\balancedBisection(\items \setminus S, \dist))
        \end{aligned}$\\
    $\sol_3 \gets \balancedBisection(\items,\dist)$\\
	\KwRet{Best of $\sol_1, \sol_2, \sol_3$}
    
\end{algorithm}

\section{Experiments}
\label{sec:experiments}

We give a comprehensive experimental evaluation of a wide range of hierarchical clustering algorithms, including:
\ifarxiv
\begin{itemize}
\else
\begin{compactitem}
\fi
\item Grafting and rotation-based incremental hierarchical clustering (\textbf{\grinch})~\citepOur{MonathKGM19grinch},
\item Local rotation-based incremental hierarchical clustering (\textbf{\perch})~\citepOur{KobrenMKM17perch},
\item Bisecting k-means (\textbf{\hkmeans}),
\item Robust single-linkage implementation from the HDBSCAN library~\citepOur{McinnesH2017hdbscan,McInnesHA17hdbscan} (\textbf{\hdbscan})~\citepOur{ChaudhuriD10robustSingleL,ChaudhuriDKV14robustSingleL},
\item Minimum spanning tree based affinity clustering (\textbf{\affinityClustering})~\citepOur{BateniBDHKLM17},
\item Ward's method (\textbf{\wards})~\citepOur{WardMethod}
\item Average linkage (\textbf{\averageLinkage})~\citepOur{SokalMAvgLinkage},
\item Single linkage (\textbf{\singleLinkage})~\citepOur{GowerRSingleLinkage}
\item Complete linkage (\textbf{\completeLinkage})~\citepOur{SorensenSSSSSB1948CompleteLinkage},
\item 1D projection based \textbf{\randomCut}~\citepOur{CharikarCNY2019}; we project all vectors onto random line and partition them based on the order they appear on the line,
\item Our own new algorithms \textbf{\oursatalgo}\footnote{Implementation of our theoretical algorithm using kernelized gradient descent based balanced \twosat solver.} and \textbf{\ouralgo}.
\item A special case of \ouralgo with no imbalance ($\imbalance=0$), which we denote as \textbf{\bbisection}. This algorithm is similar to a balanced bisection algorithm from \citetOur{AhmadianCEMMLY20}, with the main difference being is that we perform bisection on multiple levels.
\ifarxiv
    \end{itemize}
\else
    \end{compactitem}
\fi
\ifarxiv
    In Appendix~\ref{sec:additonal_results},
\else
    In the full version,
\fi
we additionally perform comparison with the following HC algorithms which produce non-binary trees:
\ifarxiv
    \begin{itemize}
\else
    \begin{compactitem}
\fi
\item Gradient-based optimization of representations of trees in hyperbolic space (\textbf{\ghhc})~\citepOur{MonathZSMA19ghhc},
\item Top-down incremental hierarchical clustering (\textbf{\birch})~\citepOur{ZhangRL96birch},
\ifarxiv
    \end{itemize}
\else
    \end{compactitem}
\fi

\subsection{Datasets}
\label{subsec:datasets}
We use a large number of vector embeddings including basic supervised and unsupervised constructions, embeddings with pre-trained networks on fresh data from the same distribution, embeddings with pre-trained networks on a different distribution and metric/similarity learning with triplet loss. 
We focus on large high-dimensional datasets ($n$ up to $4.5 \cdot 10^6$, $d$ up to $2048$) consisting of embedding vectors arising from applications to computer vision (CV) and natural language processing (NLP). 
We study five different types of datasets: three types of image data (``\classicSetup'', ``\generalizationSetup'', ``\shiftSetup''), and two types of text data (``\wordSetup'', ``\sentenceSetup''). 
In order to facilitate comparison with the previous work, we also provide results on smaller datasets for other machine learning applications
(\glass\footnote{\label{foot:data}\url{http://archive.ics.uci.edu/ml}}, \spambase\footnotemark[\getrefnumber{foot:data}], \covType\footnotemark[\getrefnumber{foot:data}],
\aloi~\citepOur{GeusebroekBS05aloiDataset}) and on larger datasets (ImageNet Inception~\citepOur{MonathZSMA19ghhc}) in
\ifarxiv Appendix~\ref{sec:additonal_results} (Table~\ref{tab:additional_ckmm_other_datasets}, Table~\ref{tab:additional_mw_other_datasets}, Table~\ref{tab:additional_dp_other_datasets}). \else the full version. \fi


\header{CV: supervised embeddings (``\classicSetup'').} This is the most vanilla setting, in which embedding vectors are constructed via supervised learning. Vectors are taken from the last hidden layer of a pre-trained neural net. We use image embeddings of \imagenet ILSVRC 2012~\citepOur{DengDSLLF09imagenet} via ResNet34~\citepOur{KaimingXSJ15Resnet}.

\header{CV: generalization (``\generalizationSetup'').}
The ``\generalizationSetup'' setting is similar to ``\classicSetup'' except that we perform evaluation on a fresh set of samples from a \textit{similar distribution}. We use ResNet34 pre-trained on \imagenet ILSVRC 2012 to compute embedding vectors of images from \imagenetVTwo~\citepOur{RechtRSS19imagenetv2}. 
As shown in~\citetOur{RechtRSS19imagenetv2}, Top-1 accuracy drops by 12.1\% on this dataset. 

\header{CV: distribution shift (``\shiftSetup'').} 
The ``\shiftSetup'' setting is similar to ``\classicSetup'' and ``\generalizationSetup'' except that we perform evaluation on a \textit{different distribution}. 
We use ResNet34 pre-trained on \imagenet ILSVRC 2012 to compute embedding vectors of \nabirds~\citepOur{VanBFHBIPB15nabirds}. 
These two datasets have very little intersection on classes: \imagenet ILSVRC 2012 is very general and contains 1000 classes, of which only 55 are birds, while \nabirds is more domain-specific and contains 555 classes of birds. 

\header{NLP: word embeddings (``\wordSetup'').} In the ``\wordSetup'' setting we use unsupervised word embedding vectors trained on Twitter~\footnote{\url{https://nlp.stanford.edu/projects/glove}} and Wikipedia~\citepOur{YamadaASSTTM20wiki} using two classic methods Glove~\citepOur{PenningtonSM14glove} and Word2vec~\citepOur{YamadaSTT16,MikolovSCD13word2vec}. 
We emphasize that this setting corresponds to a \textit{fully unsupervised pipeline} since both datasets and HC algorithms we use are unsupervised.

\header{NLP: sentence embeddings (``\sentenceSetup'').} In the ``\sentenceSetup'' setting we use a pre-trained  Sentence-BERT~\citepOur{ReimersG19SBERT} to construct embeddings from the sentiment analysis dataset of movie reviews \sstTwo~\citepOur{SocherPWCMNP13sst2}. We use a RoBERTa-based~\citepOur{Liu19OGDJCLLZS} model roberta-base-nli-stsb-mean-tokens~\footnote{\url{https://github.com/UKPLab/sentence-transformers}} which has been trained to produce meaningful sentence representations. 
Comparing to ``\shiftSetup'' this setting shows more advanced techniques of similarity and representation learning including siamese architecture and triplet loss. Cosine similarity $\cossim$ between sentence representations corresponds to semantic similarity.

\subsection{Results}

\begin{table*}[p]
\centering

\setlength{\tabcolsep}{0.05em}
\begin{tabular}{cPPPPPP}
\hline

\MtabSetup & \MimagenetSmall & \MimagenetVTwo & \Mnabirds &  \Mtwitter & \Mwikipedia & \MsstTwo \\
\hline

\textbf{\ouralgo} & \boldmath{\cellcolor[rgb]{0.73, 0.89, 0.70}$.30 / .95$} & \boldmath{\cellcolor[rgb]{0.17, 0.58, 0.30}$.83 / .99$} & \boldmath{\cellcolor[rgb]{0.23, 0.64, 0.34}$.71 / .97$} & \boldmath{\cellcolor[rgb]{0.29, 0.69, 0.38}$.60 / .97$} & \boldmath{\cellcolor[rgb]{0.33, 0.71, 0.40}$.58 / .94$} & \boldmath{\cellcolor[rgb]{0.46, 0.77, 0.47}$.49 / .97$}\\
\averageLinkage & -- & \cellcolor[rgb]{0.31, 0.70, 0.39}$.59 / .97$ & -- & -- & -- & --\\
\textbf{\oursatalgo} & \cellcolor[rgb]{0.80, 0.92, 0.77}$.23 / .95$ & \cellcolor[rgb]{0.77, 0.91, 0.74}$.26 / .95$ & \cellcolor[rgb]{0.34, 0.71, 0.41}$.57 / .96$ & \cellcolor[rgb]{0.90, 0.96, 0.89}$.12 / .93$ & \cellcolor[rgb]{0.72, 0.89, 0.69}$.30 / .90$ & \cellcolor[rgb]{0.52, 0.80, 0.52}$.46 / .96$\\
\hdbscan & -- & \cellcolor[rgb]{0.23, 0.64, 0.35}$.70 / .98$ & \cellcolor[rgb]{0.53, 0.80, 0.52}$.45 / .95$ & -- & -- & \cellcolor[rgb]{0.88, 0.95, 0.85}$.15 / .94$\\
\textbf{\bbisection} & \cellcolor[rgb]{0.80, 0.92, 0.77}$.23 / .95$ & \cellcolor[rgb]{0.77, 0.91, 0.74}$.26 / .95$ & \cellcolor[rgb]{0.34, 0.71, 0.41}$.57 / .96$ & \cellcolor[rgb]{0.91, 0.97, 0.90}$.10 / .93$ & \cellcolor[rgb]{0.86, 0.94, 0.83}$.17 / .88$ & \cellcolor[rgb]{0.52, 0.80, 0.52}$.46 / .96$\\
\completeLinkage & -- & \cellcolor[rgb]{0.47, 0.78, 0.48}$.48 / .97$ & -- & -- & -- & --\\
\hkmeans & \cellcolor[rgb]{0.80, 0.92, 0.77}$.23 / .95$ & \cellcolor[rgb]{0.77, 0.91, 0.75}$.26 / .95$ & \cellcolor[rgb]{0.29, 0.69, 0.38}$.61 / .96$ & \cellcolor[rgb]{0.91, 0.97, 0.89}$.10 / .93$ & \cellcolor[rgb]{0.91, 0.97, 0.90}$.10 / .87$ & \cellcolor[rgb]{0.52, 0.80, 0.52}$.45 / .96$\\
\grinch & \cellcolor[rgb]{0.92, 0.97, 0.91}$.08 / .94$ & \cellcolor[rgb]{0.93, 0.97, 0.92}$.06 / .94$ & \cellcolor[rgb]{0.50, 0.79, 0.50}$.47 / .95$ & \cellcolor[rgb]{0.96, 0.99, 0.96}$.01 / .92$ & \cellcolor[rgb]{0.93, 0.97, 0.91}$.08 / .86$ & \cellcolor[rgb]{0.87, 0.95, 0.85}$.16 / .94$\\
\perch & \cellcolor[rgb]{0.94, 0.98, 0.93}$.05 / .94$ & \cellcolor[rgb]{0.92, 0.97, 0.91}$.08 / .94$ & \cellcolor[rgb]{0.65, 0.86, 0.63}$.36 / .94$ & \cellcolor[rgb]{0.97, 0.99, 0.96}$.00 / .92$ & -- & \cellcolor[rgb]{0.88, 0.95, 0.86}$.15 / .94$\\
\randomCut & \cellcolor[rgb]{0.94, 0.98, 0.92}$.06 / .94$ & \cellcolor[rgb]{0.92, 0.97, 0.91}$.09 / .94$ & \cellcolor[rgb]{0.90, 0.96, 0.88}$.12 / .92$ & \cellcolor[rgb]{0.91, 0.96, 0.89}$.11 / .93$ & \cellcolor[rgb]{0.80, 0.92, 0.78}$.23 / .89$ & \cellcolor[rgb]{0.93, 0.97, 0.92}$.07 / .94$\\
\Random & \cellcolor[rgb]{0.97, 0.99, 0.96}$.00 / .94$ & \cellcolor[rgb]{0.97, 0.99, 0.96}$.00 / .94$ & \cellcolor[rgb]{0.97, 0.99, 0.96}$.00 / .91$ & \cellcolor[rgb]{0.97, 0.99, 0.96}$.00 / .92$ & \cellcolor[rgb]{0.97, 0.99, 0.96}$.00 / .85$ & \cellcolor[rgb]{0.97, 0.99, 0.96}$.00 / .93$\\
\hline
$n\approx$ & $1.2\cdot10^6$ & $10^4$ & $5 \cdot 10^4$ & $1.3\cdot10^6$ & $4.5\cdot10^6$ & $7 \cdot 10^4$\\
$d$ & $512$ & $512$ & $512$ & $200$ & $100$ & $768$\\
\#classes & $10^3$ & $10^3$ & $555$ & -- & -- & $2$\\
\hline
\end{tabular}

\caption{Normalized/unnormalized ($\aproxckmm[\normchar]$/$\aproxckmm$) distance-based CKMM objectives under squared Euclidean distance. On $\aproxckmm$ all algorithms (including \random) give at least 85-94$\%$ approximation. On $\aproxckmm[\normchar]$ \ouralgo outperforms other approaches on all datasets by \rangeImprovementCkmm. Among other scalable algorithms, \hkmeans shows good average performance. Among non-scalable algorithms, basic HAC methods (\averageLinkage, \completeLinkage) and robust single linkage (\hdbscan) show competitive performance. Our worst-case theoretical algorithm \oursatalgo also shows substantial gains. Even a simple 1D random projection technique (\randomCut) gives non-trivial results on NLP datasets. Algorithms that performed worse than \randomCut (\singleLinkage, \affinityClustering, \wards) are not shown. See \ifarxiv Appendix~\ref{sec:additonal_results} \else the full version \fi for complete results which include comparison with \ghhc and \birch.}
\label{tab:distance_based}
\end{table*}

\begin{table*}[p]
\centering


\setlength{\tabcolsep}{0.05em}
\begin{tabular}{cPPPPPP}
\hline

\MtabSetup & \MimagenetSmall & \MimagenetVTwo & \Mnabirds &  \Mtwitter & \Mwikipedia & \MsstTwo \\
\hline

\textbf{\ouralgo} & \boldmath{\cellcolor[rgb]{0.59, 0.83, 0.57}$.40 / .98$} & \boldmath{\cellcolor[rgb]{0.44, 0.76, 0.46}$.51 / .99$} & \boldmath{\cellcolor[rgb]{0.22, 0.63, 0.34}$.73 / .99$} & \boldmath{\cellcolor[rgb]{0.54, 0.81, 0.53}$.44 / .97$} & \boldmath{\cellcolor[rgb]{0.60, 0.84, 0.58}$.40 / .96$} & \boldmath{\cellcolor[rgb]{0.45, 0.76, 0.46}$.51 / .96$}\\
\completeLinkage & -- & \boldmath{\cellcolor[rgb]{0.43, 0.76, 0.45}$.51 / .99$} & -- & -- & -- & --\\
\hkmeans & \cellcolor[rgb]{0.63, 0.85, 0.61}$.37 / .98$ & \cellcolor[rgb]{0.61, 0.84, 0.59}$.39 / .98$ & \cellcolor[rgb]{0.23, 0.64, 0.34}$.71 / .99$ & \cellcolor[rgb]{0.76, 0.90, 0.73}$.27 / .96$ & \cellcolor[rgb]{0.62, 0.84, 0.60}$.38 / .96$ & \cellcolor[rgb]{0.53, 0.80, 0.52}$.45 / .95$\\
\textbf{\bbisection} & \cellcolor[rgb]{0.63, 0.85, 0.61}$.37 / .98$ & \cellcolor[rgb]{0.61, 0.84, 0.59}$.39 / .98$ & \cellcolor[rgb]{0.25, 0.66, 0.36}$.67 / .99$ & \cellcolor[rgb]{0.80, 0.92, 0.77}$.23 / .95$ & \boldmath{\cellcolor[rgb]{0.60, 0.84, 0.58}$.40 / .96$} & \cellcolor[rgb]{0.52, 0.80, 0.52}$.46 / .95$\\
\textbf{\oursatalgo} & \cellcolor[rgb]{0.63, 0.85, 0.61}$.37 / .98$ & \cellcolor[rgb]{0.61, 0.84, 0.59}$.39 / .98$ & \cellcolor[rgb]{0.25, 0.66, 0.36}$.67 / .99$ & \cellcolor[rgb]{0.80, 0.92, 0.77}$.23 / .95$ & \boldmath{\cellcolor[rgb]{0.60, 0.84, 0.58}$.40 / .96$} & \cellcolor[rgb]{0.52, 0.80, 0.52}$.46 / .95$\\
\averageLinkage & -- & \cellcolor[rgb]{0.63, 0.85, 0.61}$.38 / .98$ & -- & -- & -- & --\\
\wards & -- & \cellcolor[rgb]{0.66, 0.86, 0.64}$.35 / .98$ & -- & -- & -- & --\\
\grinch & \cellcolor[rgb]{0.91, 0.97, 0.90}$.10 / .98$ & \cellcolor[rgb]{0.90, 0.96, 0.88}$.12 / .98$ & \cellcolor[rgb]{0.38, 0.73, 0.43}$.54 / .98$ & \cellcolor[rgb]{0.92, 0.97, 0.90}$.09 / .95$ & \cellcolor[rgb]{0.92, 0.97, 0.90}$.09 / .94$ & \cellcolor[rgb]{0.86, 0.95, 0.84}$.17 / .93$\\
\hdbscan & -- & \cellcolor[rgb]{0.85, 0.94, 0.83}$.17 / .98$ & \cellcolor[rgb]{0.86, 0.95, 0.84}$.16 / .97$ & -- & -- & \cellcolor[rgb]{0.83, 0.93, 0.80}$.20 / .93$\\
\perch & \cellcolor[rgb]{0.93, 0.97, 0.91}$.07 / .97$ & \cellcolor[rgb]{0.90, 0.96, 0.88}$.13 / .98$ & \cellcolor[rgb]{0.56, 0.82, 0.55}$.43 / .98$ & \cellcolor[rgb]{0.97, 0.99, 0.96}$.01 / .94$ & -- & \cellcolor[rgb]{0.88, 0.95, 0.86}$.15 / .92$\\
\randomCut & \cellcolor[rgb]{0.94, 0.98, 0.92}$.06 / .97$ & \cellcolor[rgb]{0.94, 0.98, 0.92}$.06 / .97$ & \cellcolor[rgb]{0.90, 0.96, 0.88}$.13 / .97$ & \cellcolor[rgb]{0.94, 0.98, 0.92}$.06 / .94$ & \cellcolor[rgb]{0.89, 0.96, 0.87}$.13 / .95$ & \cellcolor[rgb]{0.93, 0.97, 0.91}$.07 / .92$\\
\Random & \cellcolor[rgb]{0.97, 0.99, 0.96}$.00 / .97$ & \cellcolor[rgb]{0.97, 0.99, 0.96}$.00 / .97$ & \cellcolor[rgb]{0.97, 0.99, 0.96}$.00 / .96$ & \cellcolor[rgb]{0.97, 0.99, 0.96}$.00 / .94$ & \cellcolor[rgb]{0.97, 0.99, 0.96}$.00 / .94$ & \cellcolor[rgb]{0.97, 0.99, 0.96}$.00 / .91$\\
\hline
\end{tabular}
\caption{Normalized/unnormalized ($\aproxmw[\normchar]$/$\aproxmw$) similarity-based MW objectives under cosine similarity. On $\aproxmw$ all algorithms give at least 94-97$\%$ approximation. For $\aproxmw[\normchar]$, \ouralgo outperforms other approaches on all medium-large datasets by \rangeImprovementMw. Among other scalable algorithms \hkmeans shows good average performance. Among non-scalable algorithms HAC methods (\completeLinkage, \averageLinkage, \wards) show competitive performance, while performance of \hdbscan drops compared to the distance-based CKMM objective. Our theoretical algorithm \oursatalgo shows the same performance as \bbisection. Algorithms which performed worse than \randomCut (\singleLinkage, \affinityClustering) are not shown. See \ifarxiv Appendix~\ref{sec:additonal_results} \else the full version \fi for complete results.}
\label{tab:mw_sim}
\end{table*}

We report our key experimental results in Table~\ref{tab:distance_based} and Table~\ref{tab:mw_sim}.
Experiments were performed on $8$ CPUs $2.0$GHz Intel Xeon Scalable Processor (Skylake), $90$Gb RAM.
Missing entries are due to timeouts (5 hours) or memory limits. The scalability of different methods is discussed in \ifarxiv Appendix~\ref{sec:scalability}. \else the full version. \fi
In order to highlight the quality of the resulting HC, the algorithms are sorted by their average rank.
Table~\ref{tab:distance_based} and Table~\ref{tab:mw_sim} contains only mean value of 5 repetitions,
    full results and standard deviations are provided in \ifarxiv Appendix~\ref{sec:additonal_results}. \else the full version. \fi
MW and CKMM objectives are not suitable for non-binary trees. 
In order to compare with algorithms which produce non-binary trees (\ghhc, \birch), in \ifarxiv Appendix~\ref{sec:additonal_results} \else the full version \fi
we introduce appropriate extensions of these objectives to non-binary trees. See \ifarxiv Appendix~\ref{sec:additonal_results} \else the full version \fi for a complete set of experimental results. 

\header{Na\"ive random baseline.}
For the MW objective, a random binary tree (\random) achieves a $\nicefrac13$-approximation in expectation, while for CKMM objective it achieves a $\nicefrac23$-approximation. 
Worst-case analysis predicts that beating these baselines is challenging: current best poly-time approximations are 0.42 for MW (\bbisection ~\citepOur{AhmadianCEMMLY20}) and 0.74 for CKMM (\oursatalgo, our work, Theorem~\ref{thm:b2sat-approx}).
Furthermore, for many practical algorithms worst-case analysis predicts that they either can't go above the na\"ive baselines (\averageLinkage~\citepOur{CharikarCN19}) or even fail to meet it (\hkmeans~\citepOur{MoseleyW17}). Such worst-case instances are folklore for $\completeLinkage$, $\affinityClustering$ and $\singleLinkage$.

\header{Approximation and normalization of objectives.} On deep-learned vector embeddings, almost all algorithms dramatically outperform the na\"ive approximation baselines. 
Due to concentration of measure (as discussed in the end of Section~\ref{sec:prelims}) even \random gets at least 85-94\% / 91-97\% of the optimum,
which noticeably outperform the worst case $\nicefrac13$ and $\nicefrac23$ approximations.
Furthermore, classic HAC algorithms (\averageLinkage, \completeLinkage, 
\wards) also work much better than predicted by the worst-case analysis.
Our results in Table~\ref{tab:mw_sim} show that approximations achieved by various algorithms are very close, with many of them being within $1\%$ difference.
Therefore, to highlight performance variations between the algorithms, we measure advantage over \random and focus on normalized objectives $\aproxmw[*]/\aproxckmm[*]$.

\header{Performance on different types of data.}
Our experiments show that the quality of HC produced by different algorithms can vary substantially across different types of data. 
For example, optimizing CKMM on unsupervised word embedding vectors (\twitter, \wikipedia) turns out to be rather challenging (see Table~\ref{tab:distance_based}).
Performance of most approaches drops drastically on these vectors, sometimes even below a simple \randomCut.
Nevertheless, despite this variablity, \ouralgo shows consistently best performance across the different types of vector embeddings.

\header{Performance of various types of algorithms.}
While HAC approaches (\averageLinkage, \completeLinkage, \wards) often show good performance, their running time scales superlinearly making them prohibitively slow on large datasets.
Among scalable algorithms, our results show the advantage of top-down methods (\ouralgo, \oursatalgo, \hkmeans) over
nearest-neighbor based approaches (\singleLinkage, \affinityClustering, \perch, and \grinch).
This is due to the fact that MW/CKMM objectives encourage solutions with good global structure which top-down methods tend to recover better than approaches focused on exploiting local structure.


\header{Performance of \ouralgo.} Our proposed combination of top-down unbalanced bisection (with kernelization and gradient descent optimization) and average-linkage \ouralgo appears to robustly give the highest quality performance across a diverse collection of datasets.
On normalized MW/CKMM objectives ($\aproxmw[\normchar]/\aproxckmm[\normchar]$), \ouralgo outperforms other approaches on all datasets by \rangeImprovementMw/\rangeImprovementCkmm respectively.

\header{Generalization properties of HC objectives.} Since we use pretrained neural nets to generate vector embeddings, we can compute HC on fresh samples from a similar distribution very quickly\footnote{At the cost of a single forward pass + running a HC algorithm.}. Applying this approach to \imagenetVTwo (a fresh sample from the \imagenet distribution) our results show that most algorithms show similar MW/CKMM scores compared to \imagenet. This is rather different from the consistent $\approx 10\%$ prediction accuracy drop reported in~\citetOur{RechtRSS19imagenetv2}. We believe that explanation of this phenomenon might be a promising direction for future work.

\section{Conclusion and Future Work}

In this paper, we initiate a comprehensive experimental study of HC algorithms on deep embedding vectors. Our results indicate that CKMM is particularly well-suited for such data as it captures the quality of the overall hierarchy and only relies on distances between vectors.
We introduce a new scalable algorithm \ouralgo which outperforms existing approaches on all considered datasets.
Moreover, we present a polynomial-time algorithm \oursatalgo that significantly improves the existing approximation for the CKMM objective to $0.74$.

A possible future direction is to show approximation guarantees for imbalanced bisection.
It might also be interesting to understand why there is no drop in HC quality (contrary to the drop in the classification accuracy shown in~\citetOur{RechtRSS19imagenetv2}) when generalizing \imagenet embeddings for \imagenetVTwo dataset.






\small
\arxiv{\bibliographystyle{plainnat}}
\bibliography{references}

\begin{thebibliography}{73}
\providecommand{\natexlab}[1]{#1}
\providecommand{\url}[1]{\texttt{#1}}
\expandafter\ifx\csname urlstyle\endcsname\relax
  \providecommand{\doi}[1]{doi: #1}\else
  \providecommand{\doi}{doi: \begingroup \urlstyle{rm}\Url}\fi

\bibitem[Abboud et~al.(2019)Abboud, Cohen-Addad, and
  Houdroug{\'e}]{AbboudCH19avgLsh}
Amir Abboud, Vincent Cohen-Addad, and Hussein Houdroug{\'e}.
\newblock Subquadratic high-dimensional hierarchical clustering.
\newblock In \emph{Advances in Neural Information Processing Systems}, pages
  11576--11586, 2019.

\bibitem[Alman and Williams(2015)]{AlmanW15}
Josh Alman and Ryan Williams.
\newblock Probabilistic polynomials and hamming nearest neighbors.
\newblock In Venkatesan Guruswami, editor, \emph{{IEEE} 56th Annual Symposium
  on Foundations of Computer Science, {FOCS} 2015, Berkeley, CA, USA, 17-20
  October, 2015}, pages 136--150. {IEEE} Computer Society, 2015.
\newblock \doi{10.1109/FOCS.2015.18}.
\newblock URL \url{https://doi.org/10.1109/FOCS.2015.18}.

\bibitem[Alon et~al.(2020)Alon, Azar, and Vainstein]{AlonAV20}
Noga Alon, Yossi Azar, and Danny Vainstein.
\newblock Hierarchical clustering: {A} 0.585 revenue approximation.
\newblock In \emph{Conference on Learning Theory, {COLT} 2020, 9-12 July 2020},
  volume 125 of \emph{Proceedings of Machine Learning Research}, pages
  153--162. {PMLR}, 2020.

\bibitem[Arthur and Vassilvitskii(2006)]{ArthurV06kmeansPP}
David Arthur and Sergei Vassilvitskii.
\newblock k-means++: The advantages of careful seeding.
\newblock Technical report, Stanford, 2006.

\bibitem[Austrin et~al.(2016)Austrin, Benabbas, and Georgiou]{AustrinBG17}
Per Austrin, Siavosh Benabbas, and Konstantinos Georgiou.
\newblock Better balance by being biased: A 0.8776-approximation for max
  bisection.
\newblock \emph{ACM Trans. Algorithms}, 13\penalty0 (1), oct 2016.
\newblock ISSN 1549-6325.
\newblock \doi{10.1145/2907052}.
\newblock URL \url{https://doi.org/10.1145/2907052}.

\bibitem[Avdiukhin et~al.(2019)Avdiukhin, Pupyrev, and
  Yaroslavtsev]{AvdiukhinPY19}
Dmitrii Avdiukhin, Sergey Pupyrev, and Grigory Yaroslavtsev.
\newblock Multi-dimensional balanced graph partitioning via projected gradient
  descent.
\newblock \emph{Proc. VLDB Endow.}, 12\penalty0 (8):\penalty0 906–919, April
  2019.

\bibitem[Bateni et~al.(2017)Bateni, Behnezhad, Derakhshan, Hajiaghayi, Kiveris,
  Lattanzi, and Mirrokni]{BateniBDHKLM17}
Mohammadhossein Bateni, Soheil Behnezhad, Mahsa Derakhshan, MohammadTaghi
  Hajiaghayi, Raimondas Kiveris, Silvio Lattanzi, and Vahab Mirrokni.
\newblock Affinity clustering: Hierarchical clustering at scale.
\newblock In \emph{Advances in Neural Information Processing Systems 30}, pages
  6864--6874. Curran Associates, Inc., 2017.

\bibitem[Bille(2005)]{Bille2005}
Philip Bille.
\newblock A survey on tree edit distance and related problems.
\newblock \emph{Theor. Comput. Sci.}, 337\penalty0 (1–3):\penalty0 217–239,
  June 2005.

\bibitem[Charikar et~al.(2019{\natexlab{a}})Charikar, Chatziafratis, and
  Niazadeh]{CharikarCN19}
Moses Charikar, Vaggos Chatziafratis, and Rad Niazadeh.
\newblock Hierarchical clustering better than average-linkage.
\newblock In \emph{Proceedings of the Thirtieth Annual ACM-SIAM Symposium on
  Discrete Algorithms}, SODA ’19, page 2291–2304, USA, 2019{\natexlab{a}}.
  Society for Industrial and Applied Mathematics.

\bibitem[Charikar et~al.(2019{\natexlab{b}})Charikar, Chatziafratis, Niazadeh,
  and Yaroslavtsev]{CharikarCNY2019}
Moses Charikar, Vaggos Chatziafratis, Rad Niazadeh, and Grigory Yaroslavtsev.
\newblock Hierarchical clustering for euclidean data.
\newblock In \emph{Proceedings of Machine Learning Research}, volume~89 of
  \emph{Proceedings of Machine Learning Research}, pages 2721--2730. PMLR,
  16--18 Apr 2019{\natexlab{b}}.

\bibitem[Chatziafratis et~al.(2018)Chatziafratis, Niazadeh, and
  Charikar]{VaggosRM2018}
Vaggos Chatziafratis, Rad Niazadeh, and Moses Charikar.
\newblock Hierarchical clustering with structural constraints.
\newblock \emph{CoRR}, abs/1805.09476, 2018.
\newblock URL \url{http://arxiv.org/abs/1805.09476}.

\bibitem[Chatziafratis et~al.(2020)Chatziafratis, Yaroslavtsev, Lee,
  Makarychev, Ahmadian, Epasto, and Mahdian]{AhmadianCEMMLY20}
Vaggos Chatziafratis, Grigory Yaroslavtsev, Euiwoong Lee, Konstantin
  Makarychev, Sara Ahmadian, Alessandro Epasto, and Mohammad Mahdian.
\newblock Bisect and conquer: Hierarchical clustering via max-uncut bisection.
\newblock In Silvia Chiappa and Roberto Calandra, editors, \emph{The 23rd
  International Conference on Artificial Intelligence and Statistics}, volume
  108 of \emph{Proceedings of Machine Learning Research}, pages 3121--3132.
  {PMLR}, 2020.

\bibitem[Chaudhuri and Dasgupta(2010)]{ChaudhuriD10robustSingleL}
Kamalika Chaudhuri and Sanjoy Dasgupta.
\newblock Rates of convergence for the cluster tree.
\newblock In \emph{Advances in neural information processing systems}, pages
  343--351, 2010.

\bibitem[Chaudhuri et~al.(2014)Chaudhuri, Dasgupta, Kpotufe, and
  Von~Luxburg]{ChaudhuriDKV14robustSingleL}
Kamalika Chaudhuri, Sanjoy Dasgupta, Samory Kpotufe, and Ulrike Von~Luxburg.
\newblock Consistent procedures for cluster tree estimation and pruning.
\newblock \emph{IEEE Transactions on Information Theory}, 60\penalty0
  (12):\penalty0 7900--7912, 2014.

\bibitem[Chen et~al.(2020)Chen, Zhou, and Ma]{ChenZM20kernelKmeans}
Li~Chen, Shuisheng Zhou, and Jiajun Ma.
\newblock Fast kernel k-means clustering using incomplete cholesky
  factorization.
\newblock \emph{arXiv preprint arXiv:2002.02846}, 2020.

\bibitem[Cochez and Mou(2015{\natexlab{a}})]{CochezM15avgLsh}
Michael Cochez and Hao Mou.
\newblock Twister tries: Approximate hierarchical agglomerative clustering for
  average distance in linear time.
\newblock In \emph{Proceedings of the 2015 ACM SIGMOD international conference
  on Management of data}, pages 505--517, 2015{\natexlab{a}}.

\bibitem[Cochez and Mou(2015{\natexlab{b}})]{CochezM2015}
Michael Cochez and Hao Mou.
\newblock Twister tries: Approximate hierarchical agglomerative clustering for
  average distance in linear time.
\newblock In \emph{Proceedings of the 2015 ACM SIGMOD International Conference
  on Management of Data}, SIGMOD ’15, page 505–517, New York, NY, USA,
  2015{\natexlab{b}}. Association for Computing Machinery.

\bibitem[Cohen-Addad et~al.(2017)Cohen-Addad, Kanade, and
  Mallmann-Trenn]{CohenKM17}
Vincent Cohen-Addad, Varun Kanade, and Frederik Mallmann-Trenn.
\newblock Hierarchical clustering beyond the worst-case.
\newblock In \emph{Advances in Neural Information Processing Systems}, pages
  6201--6209, 2017.

\bibitem[Cohen-addad et~al.(2019)Cohen-addad, Kanade, Mallmann-trenn, and
  Mathieu]{CohenKMM19}
Vincent Cohen-addad, Varun Kanade, Frederik Mallmann-trenn, and Claire Mathieu.
\newblock Hierarchical clustering: Objective functions and algorithms.
\newblock \emph{J. ACM}, 66\penalty0 (4), June 2019.

\bibitem[Dasgupta(2015)]{Dasgupta15}
Sanjoy Dasgupta.
\newblock A cost function for similarity-based hierarchical clustering.
\newblock \emph{CoRR}, abs/1510.05043, 2015.

\bibitem[Deb and Dey(2017)]{DbD17outlier}
Anwesha~Barai Deb and Lopamudra Dey.
\newblock Outlier detection and removal algorithm in k-means and hierarchical
  clustering.
\newblock \emph{World Journal of Computer Application and Technology},
  5\penalty0 (2):\penalty0 24--29, 2017.

\bibitem[Deng et~al.(2009)Deng, Dong, Socher, Li, Li, and
  Fei-Fei]{DengDSLLF09imagenet}
Jia Deng, Wei Dong, Richard Socher, Li-Jia Li, Kai Li, and Li~Fei-Fei.
\newblock Imagenet: A large-scale hierarchical image database.
\newblock In \emph{2009 IEEE conference on computer vision and pattern
  recognition}, pages 248--255. Ieee, 2009.

\bibitem[Ester et~al.(1996)Ester, Kriegel, Sander, and Xu]{EstherKSX96}
Martin Ester, Hans-Peter Kriegel, J\"{o}rg Sander, and Xiaowei Xu.
\newblock A density-based algorithm for discovering clusters in large spatial
  databases with noise.
\newblock In \emph{Proceedings of the Second International Conference on
  Knowledge Discovery and Data Mining}, KDD'96, page 226–231. AAAI Press,
  1996.

\bibitem[Fu et~al.(2005)Fu, Hu, and Tan]{ZhouyuWT05anomaly}
Zhouyu Fu, Weiming Hu, and Tieniu Tan.
\newblock Similarity based vehicle trajectory clustering and anomaly detection.
\newblock In \emph{IEEE International Conference on Image Processing 2005},
  volume~2, pages II--602. Ieee, 2005.

\bibitem[Geusebroek et~al.(2005)Geusebroek, Burghouts, and
  Smeulders]{GeusebroekBS05aloiDataset}
Jan-Mark Geusebroek, Gertjan~J Burghouts, and Arnold~WM Smeulders.
\newblock The amsterdam library of object images.
\newblock \emph{International Journal of Computer Vision}, 61\penalty0
  (1):\penalty0 103--112, 2005.

\bibitem[Ghoshdastidar et~al.(2019)Ghoshdastidar, Perrot, and von
  Luxburg]{GhoshdastidarP2019}
Debarghya Ghoshdastidar, Micha{\"e}l Perrot, and Ulrike von Luxburg.
\newblock Foundations of comparison-based hierarchical clustering.
\newblock In \emph{Advances in Neural Information Processing Systems
  (NeurIPS)}, 2019.

\bibitem[Gower and Ross(1969)]{GowerRSingleLinkage}
J.~C. Gower and G.~J.~S. Ross.
\newblock Minimum spanning trees and single linkage cluster analysis.
\newblock \emph{Journal of the Royal Statistical Society. Series C (Applied
  Statistics)}, 18\penalty0 (1):\penalty0 54--64, 1969.

\bibitem[Gu{\'e}rin et~al.(2017)Gu{\'e}rin, Gibaru, Thiery, and
  Nyiri]{GuerinGTN17cnnFeatures}
Joris Gu{\'e}rin, Olivier Gibaru, St{\'e}phane Thiery, and Eric Nyiri.
\newblock Cnn features are also great at unsupervised classification.
\newblock \emph{arXiv preprint arXiv:1707.01700}, 2017.

\bibitem[He et~al.(2015)He, Zhang, Ren, and Sun]{KaimingXSJ15Resnet}
Kaiming He, Xiangyu Zhang, Shaoqing Ren, and Jian Sun.
\newblock Deep residual learning for image recognition.
\newblock \emph{CoRR}, abs/1512.03385, 2015.
\newblock URL \url{http://arxiv.org/abs/1512.03385}.

\bibitem[Heller and Ghahramani(2005)]{HellerG05}
Katherine~A. Heller and Zoubin Ghahramani.
\newblock Bayesian hierarchical clustering.
\newblock In \emph{Proceedings of the 22Nd International Conference on Machine
  Learning}, ICML '05, pages 297--304, New York, NY, USA, 2005. ACM.

\bibitem[Huang et~al.(2019)Huang, Wang, Wu, Lai, and Kwoh]{HuangWWLK19spectral}
Dong Huang, Chang-Dong Wang, Jiansheng Wu, Jian-Huang Lai, and Chee~Keong Kwoh.
\newblock Ultra-scalable spectral clustering and ensemble clustering.
\newblock \emph{IEEE Transactions on Knowledge and Data Engineering}, 2019.

\bibitem[Indyk(2000)]{Indyk00}
Piotr Indyk.
\newblock \emph{High-dimensional computational geometry}.
\newblock PhD thesis, Citeseer, 2000.

\bibitem[Jain(2010)]{Jain10}
Anil~K. Jain.
\newblock Data clustering: 50 years beyond k-means.
\newblock \emph{Pattern Recognit. Lett.}, 31\penalty0 (8):\penalty0 651--666,
  2010.
\newblock \doi{10.1016/j.patrec.2009.09.011}.
\newblock URL \url{https://doi.org/10.1016/j.patrec.2009.09.011}.

\bibitem[Jain et~al.(1999)Jain, Murty, and Flynn]{JainMF1999overview}
Anil~K Jain, M~Narasimha Murty, and Patrick~J Flynn.
\newblock Data clustering: a review.
\newblock \emph{ACM computing surveys (CSUR)}, 31\penalty0 (3):\penalty0
  264--323, 1999.

\bibitem[Kobren et~al.(2017)Kobren, Monath, Krishnamurthy, and
  McCallum]{KobrenMKM17perch}
Ari Kobren, Nicholas Monath, Akshay Krishnamurthy, and Andrew McCallum.
\newblock A hierarchical algorithm for extreme clustering.
\newblock In \emph{Proceedings of the 23rd ACM SIGKDD International Conference
  on Knowledge Discovery and Data Mining}, KDD ’17, page 255–264, New York,
  NY, USA, 2017. Association for Computing Machinery.
\newblock ISBN 9781450348874.

\bibitem[Lin et~al.(2017)Lin, Chen, and Chellappa]{LinCC17face}
Wei-An Lin, Jun-Cheng Chen, and Rama Chellappa.
\newblock A proximity-aware hierarchical clustering of faces.
\newblock In \emph{2017 12th IEEE International Conference on Automatic Face \&
  Gesture Recognition (FG 2017)}, pages 294--301. IEEE, 2017.

\bibitem[Liu et~al.(2019)Liu, Ott, Goyal, Du, Joshi, Chen, Levy, Lewis,
  Zettlemoyer, and Stoyanov]{Liu19OGDJCLLZS}
Yinhan Liu, Myle Ott, Naman Goyal, Jingfei Du, Mandar Joshi, Danqi Chen, Omer
  Levy, Mike Lewis, Luke Zettlemoyer, and Veselin Stoyanov.
\newblock Roberta: A robustly optimized bert pretraining approach.
\newblock \emph{arXiv preprint arXiv:1907.11692}, 2019.

\bibitem[Manning et~al.(2008{\natexlab{a}})Manning, Raghavan, and
  Sch{\"{u}}tze]{ChristopherPH08overview}
Christopher~D. Manning, Prabhakar Raghavan, and Hinrich Sch{\"{u}}tze.
\newblock \emph{Introduction to information retrieval}.
\newblock Cambridge University Press, 2008{\natexlab{a}}.

\bibitem[Manning et~al.(2008{\natexlab{b}})Manning, Raghavan, and
  Sch{\"{u}}tze]{ManningRS08}
Christopher~D. Manning, Prabhakar Raghavan, and Hinrich Sch{\"{u}}tze.
\newblock \emph{Introduction to information retrieval}.
\newblock Cambridge University Press, 2008{\natexlab{b}}.
\newblock ISBN 978-0-521-86571-5.
\newblock \doi{10.1017/CBO9780511809071}.
\newblock URL \url{https://nlp.stanford.edu/IR-book/pdf/irbookprint.pdf}.

\bibitem[Mathieu et~al.(2019)Mathieu, Le~Lan, Maddison, Tomioka, and
  Teh]{MathieuLMTT19}
Emile Mathieu, Charline Le~Lan, Chris~J Maddison, Ryota Tomioka, and Yee~Whye
  Teh.
\newblock Continuous hierarchical representations with poincar{\'e} variational
  auto-encoders.
\newblock In \emph{Advances in neural information processing systems}, pages
  12544--12555, 2019.

\bibitem[McInnes and Healy(2017)]{McinnesH2017hdbscan}
Leland McInnes and John Healy.
\newblock Accelerated hierarchical density based clustering.
\newblock In \emph{2017 IEEE International Conference on Data Mining Workshops
  (ICDMW)}, pages 33--42. IEEE, 2017.

\bibitem[McInnes et~al.(2017)McInnes, Healy, and Astels]{McInnesHA17hdbscan}
Leland McInnes, John Healy, and Steve Astels.
\newblock hdbscan: Hierarchical density based clustering.
\newblock \emph{The Journal of Open Source Software}, 2\penalty0 (11):\penalty0
  205, 2017.

\bibitem[Mikolov et~al.(2013)Mikolov, Sutskever, Chen, Corrado, and
  Dean]{MikolovSCD13word2vec}
Tomas Mikolov, Ilya Sutskever, Kai Chen, Greg~S Corrado, and Jeff Dean.
\newblock Distributed representations of words and phrases and their
  compositionality.
\newblock In \emph{Advances in neural information processing systems}, pages
  3111--3119, 2013.

\bibitem[Min et~al.(2018)Min, Guo, Liu, Zhang, Cui, and
  Long]{MinGLZCL18clusteringFeatures}
Erxue Min, Xifeng Guo, Qiang Liu, Gen Zhang, Jianjing Cui, and Jun Long.
\newblock A survey of clustering with deep learning: From the perspective of
  network architecture.
\newblock \emph{IEEE Access}, 6:\penalty0 39501--39514, 2018.

\bibitem[Monath et~al.(2019{\natexlab{a}})Monath, Kobren, Krishnamurthy, Glass,
  and McCallum]{MonathKGM19grinch}
Nicholas Monath, Ari Kobren, Akshay Krishnamurthy, Michael~R. Glass, and Andrew
  McCallum.
\newblock Scalable hierarchical clustering with tree grafting.
\newblock In \emph{Proceedings of the 25th ACM SIGKDD International Conference
  on Knowledge Discovery \& Data Mining}, KDD ’19, page 1438–1448, New
  York, NY, USA, 2019{\natexlab{a}}. ACM.

\bibitem[Monath et~al.(2019{\natexlab{b}})Monath, Zaheer, Silva, McCallum, and
  Ahmed]{MonathZSMA19ghhc}
Nick Monath, Manzil Zaheer, Daniel Silva, Andrew McCallum, and Amr Ahmed.
\newblock Gradient-based hierarchical clustering using continuous
  representations of trees in hyperbolic space.
\newblock In \emph{The 25th ACM SIGKDD Conference on Knowledge Discovery and
  Data Mining}, 2019{\natexlab{b}}.

\bibitem[Moseley and Wang(2017)]{MoseleyW17}
Benjamin Moseley and Joshua Wang.
\newblock Approximation bounds for hierarchical clustering: Average linkage,
  bisecting k-means, and local search.
\newblock In \emph{Advances in Neural Information Processing Systems 30}, pages
  3094--3103. Curran Associates, Inc., 2017.

\bibitem[Murtagh and Contreras(2012)]{MurtaghC12overview}
Fionn Murtagh and Pedro Contreras.
\newblock Algorithms for hierarchical clustering: an overview.
\newblock \emph{Wiley Interdisciplinary Reviews: Data Mining and Knowledge
  Discovery}, 2\penalty0 (1):\penalty0 86--97, 2012.

\bibitem[Murtagh and Contreras(2017)]{MurtaghC17overview}
Fionn Murtagh and Pedro Contreras.
\newblock Algorithms for hierarchical clustering: an overview, ii.
\newblock \emph{Wiley Interdisciplinary Reviews: Data Mining and Knowledge
  Discovery}, 7\penalty0 (6):\penalty0 e1219, 2017.

\bibitem[Nickel and Kiela(2017)]{NickelK17poincare}
Maximillian Nickel and Douwe Kiela.
\newblock Poincar{\'e} embeddings for learning hierarchical representations.
\newblock In \emph{Advances in neural information processing systems}, pages
  6338--6347, 2017.

\bibitem[Parwez et~al.(2017)Parwez, Rawat, and Garuba]{ParwezRG17anmaly}
Md~Salik Parwez, Danda~B Rawat, and Moses Garuba.
\newblock Big data analytics for user-activity analysis and user-anomaly
  detection in mobile wireless network.
\newblock \emph{IEEE Transactions on Industrial Informatics}, 13\penalty0
  (4):\penalty0 2058--2065, 2017.

\bibitem[Pennington et~al.(2014)Pennington, Socher, and
  Manning]{PenningtonSM14glove}
Jeffrey Pennington, Richard Socher, and Christopher~D Manning.
\newblock Glove: Global vectors for word representation.
\newblock In \emph{Proceedings of the 2014 conference on empirical methods in
  natural language processing (EMNLP)}, pages 1532--1543, 2014.

\bibitem[Rahimi and Recht(2007)]{RahimiR07}
Ali Rahimi and Benjamin Recht.
\newblock Random features for large-scale kernel machines.
\newblock In \emph{Advances in Neural Information Processing Systems 20,
  Proceedings of the Twenty-First Annual Conference on Neural Information
  Processing Systems, Vancouver, British Columbia, Canada, December 3-6, 2007},
  pages 1177--1184, 2007.

\bibitem[Recht et~al.(2019)Recht, Roelofs, Schmidt, and
  Shankar]{RechtRSS19imagenetv2}
Benjamin Recht, Rebecca Roelofs, Ludwig Schmidt, and Vaishaal Shankar.
\newblock Do imagenet classifiers generalize to imagenet?
\newblock \emph{arXiv preprint arXiv:1902.10811}, 2019.

\bibitem[Reimers and Gurevych(2019)]{ReimersG19SBERT}
Nils Reimers and Iryna Gurevych.
\newblock Sentence-bert: Sentence embeddings using siamese bert-networks.
\newblock In \emph{Proceedings of the 2019 Conference on Empirical Methods in
  Natural Language Processing}. Association for Computational Linguistics, 11
  2019.
\newblock URL \url{http://arxiv.org/abs/1908.10084}.

\bibitem[Sander et~al.(2003)Sander, Qin, Lu, Niu, and Kovarsky]{SanderQLNK03}
J{\"o}rg Sander, Xuejie Qin, Zhiyong Lu, Nan Niu, and Alex Kovarsky.
\newblock Automatic extraction of clusters from hierarchical clustering
  representations.
\newblock In \emph{Pacific-Asia Conference on Knowledge Discovery and Data
  Mining}, pages 75--87. Springer, 2003.

\bibitem[Shin et~al.(2019)Shin, Song, and Moon]{SuJinKI19}
Su{-}Jin Shin, Kyungwoo Song, and Il{-}Chul Moon.
\newblock Hierarchically clustered representation learning.
\newblock \emph{CoRR}, abs/1901.09906, 2019.

\bibitem[Socher et~al.(2013)Socher, Perelygin, Wu, Chuang, Manning, Ng, and
  Potts]{SocherPWCMNP13sst2}
Richard Socher, Alex Perelygin, Jean Wu, Jason Chuang, Christopher~D Manning,
  Andrew~Y Ng, and Christopher Potts.
\newblock Recursive deep models for semantic compositionality over a sentiment
  treebank.
\newblock In \emph{Proceedings of the 2013 conference on empirical methods in
  natural language processing}, pages 1631--1642, 2013.

\bibitem[Sokal and Michener(1958)]{SokalMAvgLinkage}
R.~R. Sokal and C.~D. Michener.
\newblock A statistical method for evaluating systematic relationships.
\newblock \emph{University of Kansas Science Bulletin}, 38:\penalty0
  1409--1438, 1958.

\bibitem[Sorensen(1948)]{SorensenSSSSSB1948CompleteLinkage}
Th~A Sorensen.
\newblock A method of establishing groups of equal amplitude in plant sociology
  based on similarity of species content and its application to analyses of the
  vegetation on danish commons.
\newblock \emph{Biol. Skar.}, 5:\penalty0 1--34, 1948.

\bibitem[Van~Horn et~al.(2015)Van~Horn, Branson, Farrell, Haber, Barry,
  Ipeirotis, Perona, and Belongie]{VanBFHBIPB15nabirds}
Grant Van~Horn, Steve Branson, Ryan Farrell, Scott Haber, Jessie Barry, Panos
  Ipeirotis, Pietro Perona, and Serge Belongie.
\newblock Building a bird recognition app and large scale dataset with citizen
  scientists: The fine print in fine-grained dataset collection.
\newblock In \emph{Proceedings of the IEEE Conference on Computer Vision and
  Pattern Recognition}, pages 595--604, 2015.

\bibitem[Wang and Wang(2018)]{WangW2018}
Dingkang Wang and Yusu Wang.
\newblock An improved cost function for hierarchical cluster trees.
\newblock \emph{ArXiv}, abs/1812.02715, 2018.

\bibitem[Wang et~al.(2019)Wang, Gittens, and Mahoney]{WangGM19kernelKmaens}
Shusen Wang, Alex Gittens, and Michael~W Mahoney.
\newblock Scalable kernel k-means clustering with nystr{\"o}m approximation:
  relative-error bounds.
\newblock \emph{The Journal of Machine Learning Research}, 20\penalty0
  (1):\penalty0 431--479, 2019.

\bibitem[Wang and Moseley(2020)]{WangM20hkmeansObjective}
Yuyan Wang and Benjamin Moseley.
\newblock An objective for hierarchical clustering in euclidean space and its
  connection to bisecting k-means.
\newblock \emph{Proceedings of the AAAI Conference on Artificial Intelligence},
  34:\penalty0 6307--6314, 04 2020.
\newblock \doi{10.1609/aaai.v34i04.6099}.

\bibitem[Ward(1963)]{WardMethod}
Joe~H. Ward.
\newblock Hierarchical grouping to optimize an objective function.
\newblock \emph{Journal of the American Statistical Association}, 58\penalty0
  (301):\penalty0 236--244, 1963.

\bibitem[Wu et~al.(2019)Wu, Tygert, and LeCun]{WuTL19}
Cinna Wu, Mark Tygert, and Yann LeCun.
\newblock A hierarchical loss and its problems when classifying
  non-hierarchically.
\newblock \emph{PloS one}, 14\penalty0 (12), 2019.

\bibitem[Wu et~al.(2018)Wu, Chen, Yen, Xu, Xia, and
  Aggarwal]{WuCYXXA18kernelSpectral}
Lingfei Wu, Pin-Yu Chen, Ian En-Hsu Yen, Fangli Xu, Yinglong Xia, and Charu
  Aggarwal.
\newblock Scalable spectral clustering using random binning features.
\newblock In \emph{Proceedings of the 24th ACM SIGKDD International Conference
  on Knowledge Discovery \& Data Mining}, pages 2506--2515, 2018.

\bibitem[Yamada et~al.(2016)Yamada, Shindo, Takeda, and Takefuji]{YamadaSTT16}
Ikuya Yamada, Hiroyuki Shindo, Hideaki Takeda, and Yoshiyasu Takefuji.
\newblock Joint learning of the embedding of words and entities for named
  entity disambiguation.
\newblock In \emph{Proceedings of The 20th SIGNLL Conference on Computational
  Natural Language Learning}, pages 250--259. Association for Computational
  Linguistics, 2016.

\bibitem[Yamada et~al.(2020)Yamada, Asai, Sakuma, Shindo, Takeda, Takefuji, and
  Matsumoto]{YamadaASSTTM20wiki}
Ikuya Yamada, Akari Asai, Jin Sakuma, Hiroyuki Shindo, Hideaki Takeda,
  Yoshiyasu Takefuji, and Yuji Matsumoto.
\newblock Wikipedia2vec: An efficient toolkit for learning and visualizing the
  embeddings of words and entities from wikipedia.
\newblock \emph{arXiv preprint 1812.06280v3}, 2020.

\bibitem[Yang et~al.(2016)Yang, Parikh, and Batra]{YangPB16}
Jianwei Yang, Devi Parikh, and Dhruv Batra.
\newblock Joint unsupervised learning of deep representations and image
  clusters.
\newblock In \emph{Proceedings of the IEEE Conference on Computer Vision and
  Pattern Recognition}, pages 5147--5156, 2016.

\bibitem[Zhang et~al.(1996)Zhang, Ramakrishnan, and Livny]{ZhangRL96birch}
Tian Zhang, Raghu Ramakrishnan, and Miron Livny.
\newblock Birch: An efficient data clustering method for very large databases.
\newblock \emph{SIGMOD Rec.}, 25\penalty0 (2):\penalty0 103–114, June 1996.

\bibitem[Zhang et~al.(2014)Zhang, Ahmed, Josifovski, and
  Smola]{YuchenAVA14recommendation}
Yuchen Zhang, Amr Ahmed, Vanja Josifovski, and Alexander~J Smola.
\newblock Taxonomy discovery for personalized recommendation.
\newblock In \emph{ACM International Conference on Web Search And Data Mining
  (WSDM)}, 2014.

\bibitem[Zhong(2005)]{Zhong05cosineKmeans}
Shi Zhong.
\newblock Efficient online spherical k-means clustering.
\newblock In \emph{Proceedings. 2005 IEEE International Joint Conference on
  Neural Networks, 2005.}, volume~5, pages 3180--3185. IEEE, 2005.

\end{thebibliography}
\normalsize

\ifarxiv
    \appendix
    \etocdepthtag.toc{mtappendix}
\etocsettagdepth{mtchapter}{none}
\etocsettagdepth{mtappendix}{subsection}
\clearpage

\begin{figure*}[htb!]
    \centering
    \subcaptionbox{$\sol_1$ -- solution obtained by first randomly permuting $S$ and $\items \setminus S$, and then creating a path tree with $S$ on top of $\items \setminus S$\label{fig:sols_path}}[0.31\textwidth]{
    \centering
    \tikzset{every picture/.style={line width=0.75pt}} 

\begin{tikzpicture}[x=0.75pt,y=0.75pt,yscale=-0.7,xscale=0.7]

\draw   (276.59,73.1) .. controls (280.15,69.55) and (285.91,69.56) .. (289.46,73.12) .. controls (293.01,76.68) and (293.01,82.44) .. (289.45,85.99) .. controls (285.89,89.54) and (280.13,89.53) .. (276.58,85.97) .. controls (273.03,82.41) and (273.04,76.65) .. (276.59,73.1) -- cycle ;
\draw    (276.58,85.97) -- (263.27,99.25) ;
\draw [shift={(261.85,100.66)}, rotate = 315.07] [color={rgb, 255:red, 0; green, 0; blue, 0 }  ][line width=0.75]    (10.93,-3.29) .. controls (6.95,-1.4) and (3.31,-0.3) .. (0,0) .. controls (3.31,0.3) and (6.95,1.4) .. (10.93,3.29)   ;
\draw   (221.37,128.19) .. controls (224.93,124.64) and (230.69,124.65) .. (234.24,128.2) .. controls (237.79,131.76) and (237.78,137.52) .. (234.23,141.07) .. controls (230.67,144.62) and (224.91,144.62) .. (221.36,141.06) .. controls (217.81,137.5) and (217.81,131.74) .. (221.37,128.19) -- cycle ;
\draw    (248.97,113.51) -- (235.66,126.79) ;
\draw [shift={(234.24,128.2)}, rotate = 315.07] [color={rgb, 255:red, 0; green, 0; blue, 0 }  ][line width=0.75]    (10.93,-3.29) .. controls (6.95,-1.4) and (3.31,-0.3) .. (0,0) .. controls (3.31,0.3) and (6.95,1.4) .. (10.93,3.29)   ;
\draw   (193.76,155.73) .. controls (197.32,152.18) and (203.08,152.19) .. (206.63,155.75) .. controls (210.18,159.31) and (210.17,165.07) .. (206.62,168.62) .. controls (203.06,172.17) and (197.3,172.16) .. (193.75,168.6) .. controls (190.2,165.04) and (190.2,159.28) .. (193.76,155.73) -- cycle ;
\draw    (221.36,141.06) -- (208.05,154.34) ;
\draw [shift={(206.63,155.75)}, rotate = 315.07] [color={rgb, 255:red, 0; green, 0; blue, 0 }  ][line width=0.75]    (10.93,-3.29) .. controls (6.95,-1.4) and (3.31,-0.3) .. (0,0) .. controls (3.31,0.3) and (6.95,1.4) .. (10.93,3.29)   ;
\draw    (193.75,168.6) -- (180.44,181.88) ;
\draw [shift={(179.02,183.29)}, rotate = 315.07] [color={rgb, 255:red, 0; green, 0; blue, 0 }  ][line width=0.75]    (10.93,-3.29) .. controls (6.95,-1.4) and (3.31,-0.3) .. (0,0) .. controls (3.31,0.3) and (6.95,1.4) .. (10.93,3.29)   ;
\draw   (137.83,211.53) .. controls (141.39,207.98) and (147.15,207.98) .. (150.7,211.54) .. controls (154.25,215.1) and (154.24,220.86) .. (150.69,224.41) .. controls (147.13,227.96) and (141.37,227.95) .. (137.82,224.39) .. controls (134.27,220.84) and (134.27,215.07) .. (137.83,211.53) -- cycle ;
\draw    (165.43,196.85) -- (152.12,210.13) ;
\draw [shift={(150.7,211.54)}, rotate = 315.07] [color={rgb, 255:red, 0; green, 0; blue, 0 }  ][line width=0.75]    (10.93,-3.29) .. controls (6.95,-1.4) and (3.31,-0.3) .. (0,0) .. controls (3.31,0.3) and (6.95,1.4) .. (10.93,3.29)   ;
\draw  [dash pattern={on 0.84pt off 2.51pt}] (300.06,85.15) .. controls (302.56,82.65) and (306.61,82.66) .. (309.11,85.16) -- (322.67,98.76) .. controls (325.16,101.26) and (325.16,105.31) .. (322.66,107.81) -- (254.55,175.75) .. controls (252.05,178.24) and (247.99,178.24) .. (245.5,175.74) -- (231.94,162.14) .. controls (229.44,159.64) and (229.45,155.59) .. (231.95,153.09) -- cycle ;
\draw  [dash pattern={on 0.84pt off 2.51pt}] (218.57,166.44) .. controls (221.07,163.94) and (225.12,163.95) .. (227.62,166.45) -- (241.18,180.04) .. controls (243.68,182.55) and (243.67,186.6) .. (241.17,189.09) -- (170.37,259.72) .. controls (167.87,262.22) and (163.82,262.21) .. (161.32,259.71) -- (147.76,246.11) .. controls (145.26,243.61) and (145.27,239.56) .. (147.77,237.06) -- cycle ;
\draw   (298.49,96.46) .. controls (302.04,92.91) and (307.81,92.92) .. (311.36,96.48) .. controls (314.9,100.04) and (314.9,105.8) .. (311.34,109.35) .. controls (307.78,112.9) and (302.02,112.89) .. (298.47,109.33) .. controls (294.92,105.77) and (294.93,100.01) .. (298.49,96.46) -- cycle ;
\draw   (243.26,151.55) .. controls (246.82,148) and (252.58,148.01) .. (256.13,151.57) .. controls (259.68,155.12) and (259.68,160.89) .. (256.12,164.43) .. controls (252.56,167.98) and (246.8,167.98) .. (243.25,164.42) .. controls (239.7,160.86) and (239.71,155.1) .. (243.26,151.55) -- cycle ;
\draw   (215.65,179.09) .. controls (219.21,175.54) and (224.97,175.55) .. (228.52,179.11) .. controls (232.07,182.67) and (232.07,188.43) .. (228.51,191.98) .. controls (224.95,195.53) and (219.19,195.52) .. (215.64,191.96) .. controls (212.09,188.4) and (212.1,182.64) .. (215.65,179.09) -- cycle ;
\draw   (159.29,234.81) .. controls (162.84,231.26) and (168.61,231.26) .. (172.16,234.82) .. controls (175.7,238.38) and (175.7,244.14) .. (172.14,247.69) .. controls (168.58,251.24) and (162.82,251.23) .. (159.27,247.68) .. controls (155.72,244.12) and (155.73,238.36) .. (159.29,234.81) -- cycle ;
\draw    (206.62,168.62) -- (214.35,177.58) ;
\draw [shift={(215.65,179.09)}, rotate = 229.21] [color={rgb, 255:red, 0; green, 0; blue, 0 }  ][line width=0.75]    (10.93,-3.29) .. controls (6.95,-1.4) and (3.31,-0.3) .. (0,0) .. controls (3.31,0.3) and (6.95,1.4) .. (10.93,3.29)   ;
\draw    (179,196.16) -- (186.74,205.12) ;
\draw [shift={(188.04,206.64)}, rotate = 229.21] [color={rgb, 255:red, 0; green, 0; blue, 0 }  ][line width=0.75]    (10.93,-3.29) .. controls (6.95,-1.4) and (3.31,-0.3) .. (0,0) .. controls (3.31,0.3) and (6.95,1.4) .. (10.93,3.29)   ;
\draw    (150.69,224.41) -- (158.42,233.37) ;
\draw [shift={(159.72,234.89)}, rotate = 229.21] [color={rgb, 255:red, 0; green, 0; blue, 0 }  ][line width=0.75]    (10.93,-3.29) .. controls (6.95,-1.4) and (3.31,-0.3) .. (0,0) .. controls (3.31,0.3) and (6.95,1.4) .. (10.93,3.29)   ;
\draw    (234.23,141.07) -- (241.96,150.04) ;
\draw [shift={(243.26,151.55)}, rotate = 229.21] [color={rgb, 255:red, 0; green, 0; blue, 0 }  ][line width=0.75]    (10.93,-3.29) .. controls (6.95,-1.4) and (3.31,-0.3) .. (0,0) .. controls (3.31,0.3) and (6.95,1.4) .. (10.93,3.29)   ;
\draw    (261.84,113.53) -- (269.57,122.49) ;
\draw [shift={(270.88,124.01)}, rotate = 229.21] [color={rgb, 255:red, 0; green, 0; blue, 0 }  ][line width=0.75]    (10.93,-3.29) .. controls (6.95,-1.4) and (3.31,-0.3) .. (0,0) .. controls (3.31,0.3) and (6.95,1.4) .. (10.93,3.29)   ;
\draw    (289.45,85.99) -- (297.18,94.95) ;
\draw [shift={(298.49,96.46)}, rotate = 229.21] [color={rgb, 255:red, 0; green, 0; blue, 0 }  ][line width=0.75]    (10.93,-3.29) .. controls (6.95,-1.4) and (3.31,-0.3) .. (0,0) .. controls (3.31,0.3) and (6.95,1.4) .. (10.93,3.29)   ;

\draw (285,145) node [anchor=north west][inner sep=0.75pt]    {$\random(S)$};
\draw (206,223) node [anchor=north west][inner sep=0.75pt]    {$\random(V\setminus S)$};
\draw (262.82,100.53) node [anchor=north west][inner sep=0.75pt]  [rotate=-135]  {$...$};
\draw (179.4,183.16) node [anchor=north west][inner sep=0.75pt]  [rotate=-135]  {$...$};
\draw (287.83,124.6) node [anchor=north west][inner sep=0.75pt]  [rotate=-135]  {$...$};
\draw (202.71,209.93) node [anchor=north west][inner sep=0.75pt]  [rotate=-135]  {$...$};

\end{tikzpicture}}
    \quad
    \subcaptionbox{$\sol_2$ -- solution obtained by recursively applying bisection to both $S$ and $\items \setminus S$}[0.31\textwidth]{
    \centering
    \tikzset{every picture/.style={line width=0.75pt}} 

\begin{tikzpicture}[x=0.75pt,y=0.75pt,yscale=-0.6,xscale=0.6]
\tikzstyle{every node}=[font=\tiny]

\draw   (190.43,173.9) .. controls (190.43,168.87) and (194.51,164.8) .. (199.53,164.8) .. controls (204.56,164.8) and (208.63,168.87) .. (208.63,173.9) .. controls (208.63,178.93) and (204.56,183) .. (199.53,183) .. controls (194.51,183) and (190.43,178.93) .. (190.43,173.9) -- cycle ;
\draw   (210.43,149.9) .. controls (210.43,144.87) and (214.51,140.8) .. (219.53,140.8) .. controls (224.56,140.8) and (228.63,144.87) .. (228.63,149.9) .. controls (228.63,154.93) and (224.56,159) .. (219.53,159) .. controls (214.51,159) and (210.43,154.93) .. (210.43,149.9) -- cycle ;
\draw   (307.43,54.9) .. controls (307.43,49.87) and (311.51,45.8) .. (316.53,45.8) .. controls (321.56,45.8) and (325.63,49.87) .. (325.63,54.9) .. controls (325.63,59.93) and (321.56,64) .. (316.53,64) .. controls (311.51,64) and (307.43,59.93) .. (307.43,54.9) -- cycle ;
\draw   (234.43,173.9) .. controls (234.43,168.87) and (238.51,164.8) .. (243.53,164.8) .. controls (248.56,164.8) and (252.63,168.87) .. (252.63,173.9) .. controls (252.63,178.93) and (248.56,183) .. (243.53,183) .. controls (238.51,183) and (234.43,178.93) .. (234.43,173.9) -- cycle ;
\draw    (227.63,156.6) -- (237.22,166.19) ;
\draw [shift={(238.63,167.6)}, rotate = 225] [color={rgb, 255:red, 0; green, 0; blue, 0 }  ][line width=0.75]    (10.93,-3.29) .. controls (6.95,-1.4) and (3.31,-0.3) .. (0,0) .. controls (3.31,0.3) and (6.95,1.4) .. (10.93,3.29)   ;
\draw    (214.63,156.6) -- (205.91,167.06) ;
\draw [shift={(204.63,168.6)}, rotate = 309.81] [color={rgb, 255:red, 0; green, 0; blue, 0 }  ][line width=0.75]    (10.93,-3.29) .. controls (6.95,-1.4) and (3.31,-0.3) .. (0,0) .. controls (3.31,0.3) and (6.95,1.4) .. (10.93,3.29)   ;
\draw    (191.63,179.6) -- (181.82,192.99) ;
\draw [shift={(180.63,194.6)}, rotate = 306.25] [color={rgb, 255:red, 0; green, 0; blue, 0 }  ][line width=0.75]    (10.93,-3.29) .. controls (6.95,-1.4) and (3.31,-0.3) .. (0,0) .. controls (3.31,0.3) and (6.95,1.4) .. (10.93,3.29)   ;
\draw    (206,180) -- (213.56,191.91) ;
\draw [shift={(214.63,193.6)}, rotate = 237.59] [color={rgb, 255:red, 0; green, 0; blue, 0 }  ][line width=0.75]    (10.93,-3.29) .. controls (6.95,-1.4) and (3.31,-0.3) .. (0,0) .. controls (3.31,0.3) and (6.95,1.4) .. (10.93,3.29)   ;
\draw    (238.63,180.6) -- (228.82,193.99) ;
\draw [shift={(227.63,195.6)}, rotate = 306.25] [color={rgb, 255:red, 0; green, 0; blue, 0 }  ][line width=0.75]    (10.93,-3.29) .. controls (6.95,-1.4) and (3.31,-0.3) .. (0,0) .. controls (3.31,0.3) and (6.95,1.4) .. (10.93,3.29)   ;
\draw    (250,182) -- (257.56,193.91) ;
\draw [shift={(258.63,195.6)}, rotate = 237.59] [color={rgb, 255:red, 0; green, 0; blue, 0 }  ][line width=0.75]    (10.93,-3.29) .. controls (6.95,-1.4) and (3.31,-0.3) .. (0,0) .. controls (3.31,0.3) and (6.95,1.4) .. (10.93,3.29)   ;
\draw  [dash pattern={on 0.84pt off 2.51pt}] (221.13,116.15) -- (289.63,217.6) -- (152.63,217.6) -- cycle ;
\draw   (385.43,173.9) .. controls (385.43,168.87) and (389.51,164.8) .. (394.53,164.8) .. controls (399.56,164.8) and (403.63,168.87) .. (403.63,173.9) .. controls (403.63,178.93) and (399.56,183) .. (394.53,183) .. controls (389.51,183) and (385.43,178.93) .. (385.43,173.9) -- cycle ;
\draw   (405.43,149.9) .. controls (405.43,144.87) and (409.51,140.8) .. (414.53,140.8) .. controls (419.56,140.8) and (423.63,144.87) .. (423.63,149.9) .. controls (423.63,154.93) and (419.56,159) .. (414.53,159) .. controls (409.51,159) and (405.43,154.93) .. (405.43,149.9) -- cycle ;
\draw   (429.43,173.9) .. controls (429.43,168.87) and (433.51,164.8) .. (438.53,164.8) .. controls (443.56,164.8) and (447.63,168.87) .. (447.63,173.9) .. controls (447.63,178.93) and (443.56,183) .. (438.53,183) .. controls (433.51,183) and (429.43,178.93) .. (429.43,173.9) -- cycle ;
\draw    (422.63,156.6) -- (432.22,166.19) ;
\draw [shift={(433.63,167.6)}, rotate = 225] [color={rgb, 255:red, 0; green, 0; blue, 0 }  ][line width=0.75]    (10.93,-3.29) .. controls (6.95,-1.4) and (3.31,-0.3) .. (0,0) .. controls (3.31,0.3) and (6.95,1.4) .. (10.93,3.29)   ;
\draw    (409.63,156.6) -- (400.91,167.06) ;
\draw [shift={(399.63,168.6)}, rotate = 309.81] [color={rgb, 255:red, 0; green, 0; blue, 0 }  ][line width=0.75]    (10.93,-3.29) .. controls (6.95,-1.4) and (3.31,-0.3) .. (0,0) .. controls (3.31,0.3) and (6.95,1.4) .. (10.93,3.29)   ;
\draw    (386.63,179.6) -- (376.82,192.99) ;
\draw [shift={(375.63,194.6)}, rotate = 306.25] [color={rgb, 255:red, 0; green, 0; blue, 0 }  ][line width=0.75]    (10.93,-3.29) .. controls (6.95,-1.4) and (3.31,-0.3) .. (0,0) .. controls (3.31,0.3) and (6.95,1.4) .. (10.93,3.29)   ;
\draw    (401,180) -- (408.56,191.91) ;
\draw [shift={(409.63,193.6)}, rotate = 237.59] [color={rgb, 255:red, 0; green, 0; blue, 0 }  ][line width=0.75]    (10.93,-3.29) .. controls (6.95,-1.4) and (3.31,-0.3) .. (0,0) .. controls (3.31,0.3) and (6.95,1.4) .. (10.93,3.29)   ;
\draw    (433.63,180.6) -- (423.82,193.99) ;
\draw [shift={(422.63,195.6)}, rotate = 306.25] [color={rgb, 255:red, 0; green, 0; blue, 0 }  ][line width=0.75]    (10.93,-3.29) .. controls (6.95,-1.4) and (3.31,-0.3) .. (0,0) .. controls (3.31,0.3) and (6.95,1.4) .. (10.93,3.29)   ;
\draw    (445,182) -- (452.56,193.91) ;
\draw [shift={(453.63,195.6)}, rotate = 237.59] [color={rgb, 255:red, 0; green, 0; blue, 0 }  ][line width=0.75]    (10.93,-3.29) .. controls (6.95,-1.4) and (3.31,-0.3) .. (0,0) .. controls (3.31,0.3) and (6.95,1.4) .. (10.93,3.29)   ;
\draw  [dash pattern={on 0.84pt off 2.51pt}] (416.13,116.15) -- (484.63,217.6) -- (347.63,217.6) -- cycle ;
\draw    (325.63,60.6) -- (413.05,139.46) ;
\draw [shift={(414.53,140.8)}, rotate = 222.05] [color={rgb, 255:red, 0; green, 0; blue, 0 }  ][line width=0.75]    (10.93,-3.29) .. controls (6.95,-1.4) and (3.31,-0.3) .. (0,0) .. controls (3.31,0.3) and (6.95,1.4) .. (10.93,3.29)   ;
\draw    (309.63,60.6) -- (221.03,139.47) ;
\draw [shift={(219.53,140.8)}, rotate = 318.33000000000004] [color={rgb, 255:red, 0; green, 0; blue, 0 }  ][line width=0.75]    (10.93,-3.29) .. controls (6.95,-1.4) and (3.31,-0.3) .. (0,0) .. controls (3.31,0.3) and (6.95,1.4) .. (10.93,3.29)   ;

\draw (182.63,198) node [anchor=north west][inner sep=0.75pt]    {$\cdots $};
\draw (229.63,199) node [anchor=north west][inner sep=0.75pt]    {$\cdots $};
\draw (128,222.4) node [anchor=north west][inner sep=0.75pt]    {$\balancedBisection$};
\draw (377.63,198) node [anchor=north west][inner sep=0.75pt]    {$\cdots $};
\draw (424.63,199) node [anchor=north west][inner sep=0.75pt]    {$\cdots $};
\draw (323,222.4) node [anchor=north west][inner sep=0.75pt]    {$\balancedBisection$};
\draw (215,94.4) node [anchor=north west][inner sep=0.75pt]    {$\mathrm{S}$};
\draw (396,93.4) node [anchor=north west][inner sep=0.75pt]    {$V\setminus S$};

\end{tikzpicture}
    }
    \quad
    \subcaptionbox{$\sol_3$ -- solution obtained by recursively applying bisection to $\items$}[0.31\textwidth]{
    \centering
    \tikzset{every picture/.style={line width=0.75pt}} 

\begin{tikzpicture}[x=0.75pt,y=0.75pt,yscale=-1,xscale=1]

\draw   (297.43,126.9) .. controls (297.43,121.87) and (301.51,117.8) .. (306.53,117.8) .. controls (311.56,117.8) and (315.63,121.87) .. (315.63,126.9) .. controls (315.63,131.93) and (311.56,136) .. (306.53,136) .. controls (301.51,136) and (297.43,131.93) .. (297.43,126.9) -- cycle ;
\draw   (329.43,100.9) .. controls (329.43,95.87) and (333.51,91.8) .. (338.53,91.8) .. controls (343.56,91.8) and (347.63,95.87) .. (347.63,100.9) .. controls (347.63,105.93) and (343.56,110) .. (338.53,110) .. controls (333.51,110) and (329.43,105.93) .. (329.43,100.9) -- cycle ;
\draw   (365.43,124.9) .. controls (365.43,119.87) and (369.51,115.8) .. (374.53,115.8) .. controls (379.56,115.8) and (383.63,119.87) .. (383.63,124.9) .. controls (383.63,129.93) and (379.56,134) .. (374.53,134) .. controls (369.51,134) and (365.43,129.93) .. (365.43,124.9) -- cycle ;
\draw    (347.63,100.9) -- (366.11,116.64) ;
\draw [shift={(367.63,117.93)}, rotate = 220.42000000000002] [color={rgb, 255:red, 0; green, 0; blue, 0 }  ][line width=0.75]    (10.93,-3.29) .. controls (6.95,-1.4) and (3.31,-0.3) .. (0,0) .. controls (3.31,0.3) and (6.95,1.4) .. (10.93,3.29)   ;
\draw    (329.43,100.9) -- (312.09,117.23) ;
\draw [shift={(310.63,118.6)}, rotate = 316.73] [color={rgb, 255:red, 0; green, 0; blue, 0 }  ][line width=0.75]    (10.93,-3.29) .. controls (6.95,-1.4) and (3.31,-0.3) .. (0,0) .. controls (3.31,0.3) and (6.95,1.4) .. (10.93,3.29)   ;
\draw    (298.63,132.6) -- (288.82,145.99) ;
\draw [shift={(287.63,147.6)}, rotate = 306.25] [color={rgb, 255:red, 0; green, 0; blue, 0 }  ][line width=0.75]    (10.93,-3.29) .. controls (6.95,-1.4) and (3.31,-0.3) .. (0,0) .. controls (3.31,0.3) and (6.95,1.4) .. (10.93,3.29)   ;
\draw    (313,133) -- (320.56,144.91) ;
\draw [shift={(321.63,146.6)}, rotate = 237.59] [color={rgb, 255:red, 0; green, 0; blue, 0 }  ][line width=0.75]    (10.93,-3.29) .. controls (6.95,-1.4) and (3.31,-0.3) .. (0,0) .. controls (3.31,0.3) and (6.95,1.4) .. (10.93,3.29)   ;
\draw    (369.63,131.6) -- (359.82,144.99) ;
\draw [shift={(358.63,146.6)}, rotate = 306.25] [color={rgb, 255:red, 0; green, 0; blue, 0 }  ][line width=0.75]    (10.93,-3.29) .. controls (6.95,-1.4) and (3.31,-0.3) .. (0,0) .. controls (3.31,0.3) and (6.95,1.4) .. (10.93,3.29)   ;
\draw    (381,133) -- (388.56,144.91) ;
\draw [shift={(389.63,146.6)}, rotate = 237.59] [color={rgb, 255:red, 0; green, 0; blue, 0 }  ][line width=0.75]    (10.93,-3.29) .. controls (6.95,-1.4) and (3.31,-0.3) .. (0,0) .. controls (3.31,0.3) and (6.95,1.4) .. (10.93,3.29)   ;

\draw (296.63,151) node [anchor=north west][inner sep=0.75pt]    {$\cdots $};
\draw (367.63,150) node [anchor=north west][inner sep=0.75pt]    {$\cdots $};
\draw (260,161.4) node [anchor=north west][inner sep=0.75pt]    {$\balancedBisection$};
\draw (333,76.4) node [anchor=north west][inner sep=0.75pt]    {$V$};

\end{tikzpicture}
    }
    \caption{Solutions $\sol_1$, $\sol_2$ and $\sol_3$ found by Algorithm~\ref{alg:maxsathc}. We select the best of these solutions. $S$ is the set of vertices corresponding to positive variables in the assignment given by $\bmaxsatpar$.}
    \label{fig:solutions}
\end{figure*}
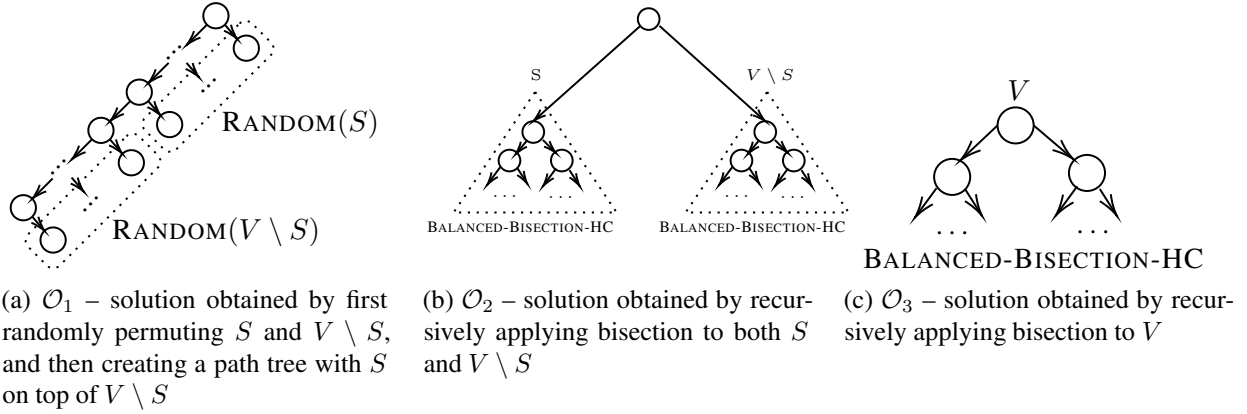

\section{Improved Approximation for the CKMM Objective}
\label{sec:proof}

In this section we prove that Algorithm~\ref{alg:maxsathc} achieves $0.74$ approximation. Recall that Algorithm~\ref{alg:maxsathc} finds solutions $\sol_1,\sol_2,\sol_3$ (see Figure~\ref{fig:solutions}) and selects the best of them.

\subsection{Notation and Proof Outline}

To simplify the presentation, we define the following version of the CKMM objective. Note that this preserves the multiplicative approximation.
\begin{definition}[Hierarchical clustering objective]\label{def:ckmm-obj}
We define a normalized version of the CKMM objective from~\citet*{CohenKMM19}: ${\f(\tree) = \frac{1}{|\items|}\ckmm(\tree)}$. 
\end{definition}

For a set $S \subseteq \items$ we use notation $d(S) = \sum_{(u < v) \in S \times S} d(u,v)$.
For a pair of sets $S,T \subseteq \items$ we use notation $d(S,T) = \sum_{u \in S, v \in T} d(u,v)$.
Similarly, $f(S)$ denotes the objective achieved by all pairs inside $S$ and $f(S,T)$ -- by all pairs in $S \times T$.
Let $D = d(\items)$ and note that by rescaling all distances w.l.o.g we can assume $D = 1$.

For \maxsat, we use propositional variables $x_v \in \{0,1\}$ for $v \in \items$. An input to \maxsat is a weighted collection of disjunctions on at most two variables each.
Our proof is based on the following ideas.

\paragraph{Split the set of items based on $\tree^*$}
Similarly to~\citet*{AhmadianCEMMLY20}, based on the optimal tree $\tree^*$, we can split $\items$ into $3$ sets $A$, $B$, $C$ as shown in Figure~\ref{fig:abc}, which we use for two purposes.
First, using the fact that $\tree^*$ is optimal, we can upper-bound the CKMM objective in terms of $A$, $B$ and $C$ (see Lemma~\ref{lem:ub}).
Second, considering balanced partitions of $V$ such that $A$ belongs to one part, $C$ belongs to another part and elements of $B$ are distributed randomly, we can lower-bound the optimal objective $\optsat$ for \bmaxsatpar (see Lemma~\ref{lem:sat-lb}).

\paragraph{Lower bound based on $\optsat$}
We can lower-bound the objective of the best of $\sol_1$ and $\sol_2$ in terms of $\optsat$ (see Lemma~\ref{lem:alglb}) using the fact that we can find a $0.94$-approximation of $\optsat$ in polynomial time~\citep*{AustrinBG17}.
Intuitively, when placing $S$ on top of $V \setminus S$ in $\sol_1$, $d(S)$ and $d(S, V \setminus S)$ get much larger coefficients than $d(V \times S)$, which explains why selecting $S$ based on $\bmaxsatpar$ results in good solution.

However, the relation between quality of solution of $\bmaxsatpar$ and $f(\sol_1)$ is not straightforward, since $d(S)$ and $d(S, V \setminus S)$ participate in $f(\sol_1)$ with different coefficients (see Proposition~\ref{prop:path}), while in \bmaxsatpar objective they participate with the same coefficient.
On the other hand, in $\sol_2$ they also participate with different coefficients (see Proposition~\ref{prop:cut}).
By balancing these solutions, we can express $d(S)$ in terms of $d(S, V \setminus S)$, and express a lower bound on $\max(f(\sol_1), f(\sol_2))$ in terms of $\optsat$.

\paragraph{When $\sol_1$ and $\sol_2$ don't suffice, we can upper-bound $\opt$}
Finally, if the best of $\sol_1$ and $\sol_2$ doesn't achieve a required approximation, we can show that $\opt$ is non-trivially bounded.
In this case, we use $\sol_3$ which is computed using \balancedBisection(Algorithm~\ref{algo:bbisectionclustering}~\citep*{AhmadianCEMMLY20}), a top-down approach which recursively splits the set using maximum balanced bisection.
\balancedBisection finds a solution with CKMM objective at least $\frac 23$ (see Proposition~\ref{lem:avglink}), which achieves non-trivial approximation when $\opt < 1$ (see the proof of Theorem~\ref{thm:main_thm}).


\begin{algorithm}[h]
\caption{\balancedBisection}
\label{algo:bbisectionclustering}
	\SetKwInOut{Input}{input}
	\SetKwInOut{Output}{output}
    \Input{Set $\items$, distance function $d \colon \items \times \items \to \mathbb R_{\ge 0}$.}
	\Output{Hierarchical clustering of $S$}
	\If{$|\items| = 1$} {
	  \Return $V$
	}
    Set $C \gets \emptyset$\;
    \For{$(u,v) \in \items \times \items$}  {
        Set $C \gets \cup d(u,v) \cdot (x_u, x_v)$
    }
    $S, T \gets \textsc{Max Balanced Bisection}(C)$\\
    \Return $\set{S, T} \cup \balancedBisection(S, d) \cup \balancedBisection(T, d)$
\end{algorithm}

\begin{figure*}[htb!]
    \centering
    \subcaptionbox{Sets $A$, $B$ and $C$ obtained from the optimal tree: $A$ and $B$ are nodes of size $\le \nicefrac n2$ such that their parent has size $> \nicefrac n2$; $C$ is composed of the rest of the elements.\label{fig:abc}}[0.44\textwidth]{
    \centering
    \input{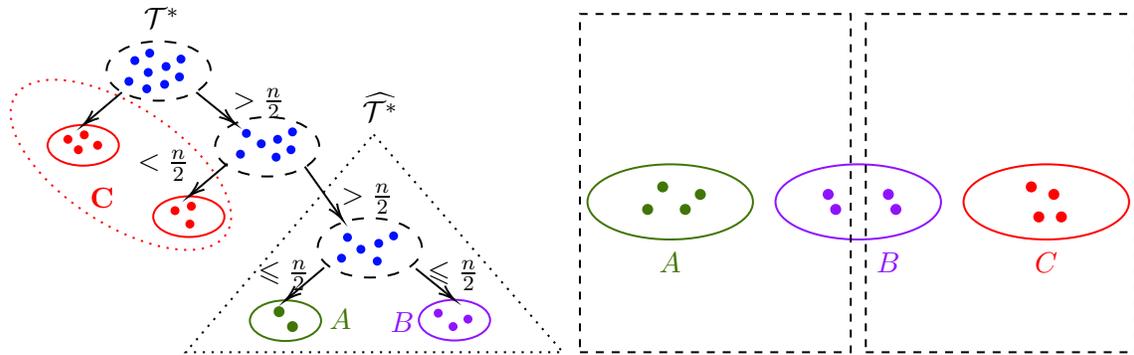}}
    \quad
    \subcaptionbox{Partition based on $A$, $B$ and $C$. $A$ belongs to one part, $C$ belongs to another one, and elements from $B$ are randomly distributed between parts. \label{fig:part_abc}}[0.44\textwidth]{
    \centering
    \tikzset{every picture/.style={line width=0.75pt}} 

\begin{tikzpicture}
\draw [color={rgb, 255:red, 65; green, 117; blue, 5 }] (0,0) ellipse (11mm and 5mm);
\draw [color={rgb, 255:red, 65; green, 117; blue, 5 }] (0,-0.8) node {$A$};
\draw [color={rgb, 255:red, 255; green, 0; blue, 0 }] (5,0) ellipse (11mm and 5mm);
\draw [color={rgb, 255:red, 255; green, 0; blue, 0 }] (5,-0.8) node {$C$};
\draw [color={rgb, 255:red, 144; green, 19; blue, 254 }] (2.5,0) ellipse (11mm and 5mm);
\draw [color={rgb, 255:red, 144; green, 19; blue, 254 }] (2.9,-0.8) node {$B$};

\draw [draw=black, dashed] (2.4,2.5) rectangle (-1.2,-2);
\draw [draw=black, dashed] (6.2,2.5) rectangle (2.6,-2);

\draw [fill={rgb, 255:red, 65; green, 117; blue, 5 }, color={rgb, 255:red, 65; green, 117; blue, 5 }] (-0.1,0.2) ellipse (0.6mm and 0.6mm);
\draw [fill={rgb, 255:red, 65; green, 117; blue, 5 }, color={rgb, 255:red, 65; green, 117; blue, 5 }] (0.2,-0.1) ellipse (0.6mm and 0.6mm);
\draw [fill={rgb, 255:red, 65; green, 117; blue, 5 }, color={rgb, 255:red, 65; green, 117; blue, 5 }] (-0.3,-0.1) ellipse (0.6mm and 0.6mm);
\draw [fill={rgb, 255:red, 65; green, 117; blue, 5 }, color={rgb, 255:red, 65; green, 117; blue, 5 }] (0.4,0.1) ellipse (0.6mm and 0.6mm);

\draw [fill={rgb, 255:red, 255; green, 0; blue, 0 }, color={rgb, 255:red, 255; green, 0; blue, 0 }] (5.1,0.1) ellipse (0.6mm and 0.6mm);
\draw [fill={rgb, 255:red, 255; green, 0; blue, 0 }, color={rgb, 255:red, 255; green, 0; blue, 0 }] (5.2,-0.2) ellipse (0.6mm and 0.6mm);
\draw [fill={rgb, 255:red, 255; green, 0; blue, 0 }, color={rgb, 255:red, 255; green, 0; blue, 0 }] (4.8,0.2) ellipse (0.6mm and 0.6mm);
\draw [fill={rgb, 255:red, 255; green, 0; blue, 0 }, color={rgb, 255:red, 255; green, 0; blue, 0 }] (4.9,-0.2) ellipse (0.6mm and 0.6mm);

\draw [fill={rgb, 255:red, 144; green, 19; blue, 254 }, color={rgb, 255:red, 144; green, 19; blue, 254 }] (2.9,0.1) ellipse (0.6mm and 0.6mm);
\draw [fill={rgb, 255:red, 144; green, 19; blue, 254 }, color={rgb, 255:red, 144; green, 19; blue, 254 }] (3.0,-0.1) ellipse (0.6mm and 0.6mm);

\draw [fill={rgb, 255:red, 144; green, 19; blue, 254 }, color={rgb, 255:red, 144; green, 19; blue, 254 }] (2.2,-0.1) ellipse (0.6mm and 0.6mm);
\draw [fill={rgb, 255:red, 144; green, 19; blue, 254 }, color={rgb, 255:red, 144; green, 19; blue, 254 }] (2.1,0.1) ellipse (0.6mm and 0.6mm);

\end{tikzpicture}}
    \caption{Sets $A$, $B$ and $C$ and the partition based on them}
\end{figure*}


\subsection{Auxiliary lemmas}
\newtext{Note that $\textsc{Max Balanced Bisection}(S)$ is at least as good} as an algorithm which randomly partitions $S$ into two disjoint sets $X$ and $Y$ with cardinalities $|X| = \lfloor\nicefrac{|S|}{2}\rfloor, |Y| = \lceil\nicefrac{|S|}{2}\rceil$.
Each edge is cut with probability
$2 \frac{\lceil\nicefrac{|S|}{2}\rceil \cdot \lfloor\nicefrac{|S|}{2}\rfloor}{|S| (|S| - 1)}$.
When $|S|$ is even, the probability is $\frac {|S|} {2 (|S| - 1)} \ge \frac 12$.
When $|S|$ is odd, the probability is $\frac {\lceil\nicefrac{|S|}{2}\rceil}{|S|}$.

\begin{proposition}\label{lem:avglink}
\newtext{Let $\items$ be a set of items and $d \colon \items \times \items \to \mathbb R_{\ge 0}$ be a distance function.
Let \textsc{Max Balanced Bisection}(S) be any bisection algorithm which partitions the set $S \subseteq \items$ into two equal-sized parts $(X,Y)$ such that the size of the cut between these parts $d(X,Y)$ is at least as good as the expected value of a random partitioning.
Then for any $S \subseteq \items$, for $\balancedBisection(S, d)$ (Algorithm~\ref{algo:bbisectionclustering}) which uses \textsc{Max Balanced Bisection} as a subroutine at every step we have:}
\begin{align}
    f(\balancedBisection(S,d)) \ge \frac23 \cdot d(S) \cdot \frac {|S|}{|\items|}
\end{align}
\end{proposition}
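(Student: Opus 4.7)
The plan is to prove the claim by strong induction on $|S|$. The base case $|S|=1$ is trivial since $d(S)=0$. For the inductive step, let $n=|S|\ge 2$ and let $(X,Y)$ be the partition produced by $\textsc{Max Balanced Bisection}(S)$, so that $\{|X|,|Y|\}=\{\lfloor n/2\rfloor,\lceil n/2\rceil\}$. Pairs in $X\times Y$ have LCA equal to $S$ in the output tree and thus contribute $n\cdot d(X,Y)$ to $\ckmm$, while pairs fully inside $X$ or $Y$ are handled by the recursive calls. Since the claim is equivalent to $\ckmm(\balancedBisection(S,d))\ge \tfrac{2}{3}|S|\,d(S)$, applying the inductive hypothesis to $X$ and $Y$ gives
\begin{align*}
\ckmm(\balancedBisection(S,d)) \ge n\cdot d(X,Y) + \tfrac{2}{3}|X|\,d(X) + \tfrac{2}{3}|Y|\,d(Y).
\end{align*}

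After substituting $d(S)=d(X)+d(Y)+d(X,Y)$, the target inequality $\ckmm(\balancedBisection(S,d))\ge \tfrac{2n}{3}d(S)$ reduces (using $n-|X|=|Y|$) to
\begin{align*}
n\cdot d(X,Y) \ge 2|Y|\,d(X) + 2|X|\,d(Y).
\end{align*}
To establish this, I invoke the bisection-quality hypothesis: $d(X,Y)$ is at least the expectation of the cut produced by a uniformly random balanced bisection. A direct calculation shows that such a random bisection separates a fixed pair with probability $\tfrac{2|X||Y|}{n(n-1)}$, so
\begin{align*}
d(X,Y) \ge \tfrac{2|X||Y|}{n(n-1)}\bigl(d(X)+d(Y)+d(X,Y)\bigr),
\end{align*}
which rearranges into a lower bound of $d(X,Y)$ purely in terms of $d(X)+d(Y)$.

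The main (and essentially only) obstacle is then a short arithmetic check, split into the even and odd cases of $n$. In the even case $|X|=|Y|=n/2$ the rearranged bound is $d(X,Y)\ge \tfrac{n}{n-2}\bigl(d(X)+d(Y)\bigr)$, so $n\cdot d(X,Y)\ge \tfrac{n^2}{n-2}\bigl(d(X)+d(Y)\bigr)\ge n\bigl(d(X)+d(Y)\bigr)=2|Y|d(X)+2|X|d(Y)$. In the odd case with $|X|=(n-1)/2$, $|Y|=(n+1)/2$ the bound becomes $d(X,Y)\ge \tfrac{n+1}{n-1}\bigl(d(X)+d(Y)\bigr)$, and multiplying by $n$ gives $n\cdot d(X,Y)\ge \tfrac{n(n+1)}{n-1}\bigl(d(X)+d(Y)\bigr)\ge (n+1)\bigl(d(X)+d(Y)\bigr)\ge (n+1)d(X)+(n-1)d(Y)$, which is exactly the required inequality. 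The induction closes, establishing the proposition. The conceptual takeaway is that a balanced bisection always cuts a slightly-more-than-half fraction of the total distance, and this margin is just enough to absorb the mild imbalance between $|X|$ and $|Y|$ while yielding the clean $2/3$ factor.
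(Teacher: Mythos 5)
Your proof is correct and follows essentially the same route as the paper's: induction on the subtree size, splitting the objective at the top-level bisection, and lower-bounding $d(X,Y)$ by the expected cut of a uniformly random balanced partition (cut probability $\tfrac{2\lfloor n/2\rfloor\lceil n/2\rceil}{n(n-1)}$), with the same even/odd case analysis. The only difference is bookkeeping: you isolate the sufficient inequality $n\,d(X,Y)\ge 2|Y|\,d(X)+2|X|\,d(Y)$ and rearrange the cut bound in terms of $d(X)+d(Y)$, whereas the paper bounds $d(X,Y)$ against $d(S)$ directly — a harmless (arguably cleaner) repackaging of the same argument.
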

\begin{proof}
By induction by size of subtree. For each vertex consider the topmost split on $X$ and $Y$ assuming $|X| \leq |Y|$.
\begin{align*}
\arxiv{\f(X \cup Y)}
\notarxiv{&\f(X \cup Y) \\}
&= d(X, Y) \frac{|X| + |Y|}{|\items|} + \f(X) + \f(Y)  \\
&\geq d(X, Y) \frac{|X| + |Y|}{|\items|} + \frac 23 \left(\frac{|X|}{|\items|}d(X) + \frac{|Y|}{|\items|}d(Y)\right)  \\
&\geq d(X, Y) \frac{|X| + |Y|}{|\items|} + \frac 23 \cdot \frac{|X|}{|\items|} \left(d(X) + d(Y)\right)  \\
&\geq \frac{2}{3|\items|}(|X|d(S) + |Y| d(X, Y)) + \frac{|X| + |Y|}{3|\items|}d(X, Y) \\
&\geq \frac{d(S)}{3} \cdot \frac {|S|} {|\items|} \left(2\frac {|X|}{|S|} + \left(1 + 2 \frac {|Y|}{|S|}\right) \frac {d(X,Y)} {d(S)}\right)
\end{align*}
If $|S|$ is even then $\frac {|X|}{|S|} = \frac {|Y|}{|S|} = \frac 12$. Since $d(X, Y) \geq \frac{1}{2} d(X \cup Y)$,

\begin{align*}
\f(S)
\geq \frac{d(S)}{3} \cdot \frac {|S|} {|\items|} \left(1 + \left(1 + 2 \frac 12\right) \frac 12\right)
= \frac23 \cdot d(S) \cdot \frac {|S|}{|\items|}
\end{align*}

If $|S|$ is odd and $|X| = a$, then $|Y| = a + 1$ and $|S| = 2a + 1$.  Since $d(X, Y) \geq \frac{a + 1}{2a + 1}d(X \cup Y)$,

\begin{align*}
\f(S)
&\geq \frac{d(S)}{3} \cdot \frac {|S|} {(2a + 1)|\items|} \left(2a + \left(4a + 3\right) \frac {a+1} {2a + 1} \right) \\
&\geq \frac{d(S)}{3} \cdot \frac {|S|} {|\items|} \frac {2a (2a + 1) + (4a + 3)(a+1)} {(2a+1)^2} \\
&\geq \frac{d(S)}{3} \cdot \frac {|S|} {|\items|} \frac {8 a^2 + 9a + 2} {(2a+1)^2} \\
&\ge \frac23 \cdot d(S) \cdot \frac {|S|}{|\items|}
\end{align*}

\end{proof}

\begin{lemma}\label{lem:path}
\newtext{Let $\items$ be a set of items and $d \colon \items \times \items \to \mathbb R_{\ge 0}$ be a distance function.
For $S \subseteq \items$, let $\tpath(\random(S))$ be a random path tree obtained from $S$ (see Figure~\ref{fig:sols_path}).
Then we have:}
\begin{align}
    \mathbb E[f(\tpath(\random(S)))] \ge \frac23 d(S) \cdot \frac {|S|}{|\items|}    
\end{align}
\end{lemma}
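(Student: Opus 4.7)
The strategy is direct: for any fixed pair $u,v \in S$, I will compute $\mathbb E[|\lca_\tree(u,v)|]$ under a uniformly random permutation used to build the path tree, and then use linearity of expectation over all pairs.

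\textbf{Step 1: structure of the path tree.} Let $m = |S|$ and denote by $s_1,\ldots,s_m$ the leaves of $\tpath(\random(S))$ ordered from top to bottom. Each internal node of a path tree has a single leaf as one child and the rest of the path as the other child, so the clusters correspond to suffixes $\{s_i,s_{i+1},\ldots,s_m\}$. Consequently, for any pair $s_i, s_j$ with $i<j$, we have $|\lca_\tree(s_i,s_j)| = m - i + 1$; equivalently, if $u$ and $v$ land in positions $P_u$ and $P_v$ of the random permutation, then $|\lca_\tree(u,v)| = m - \min(P_u,P_v) + 1$.

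\textbf{Step 2: expected LCA size.} For a fixed pair $(u,v)$, the positions $(P_u,P_v)$ are uniform over ordered pairs of distinct indices in $\{1,\ldots,m\}$. Hence $\Pr[\min(P_u,P_v) = k] = \frac{2(m-k)}{m(m-1)}$ for $k = 1,\ldots,m-1$, and a short calculation gives
\begin{align}
\mathbb E[\min(P_u,P_v)] = \frac{2}{m(m-1)} \sum_{k=1}^{m-1} k(m-k) = \frac{m+1}{3}.
\end{align}
Therefore $\mathbb E[|\lca_\tree(u,v)|] = m + 1 - \frac{m+1}{3} = \frac{2(m+1)}{3}$.

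\textbf{Step 3: conclusion.} Using $\ckmm(\tree) = \sum_{u<v \in S} d_{uv}\,|\lca_\tree(u,v)|$ and $f(\tree) = \frac{1}{|V|}\ckmm(\tree)$ (Definition~\ref{def:ckmm-obj}), linearity of expectation yields
\begin{align}
\mathbb E[f(\tpath(\random(S)))] = \frac{1}{|V|} \sum_{u<v \in S} d_{uv} \cdot \frac{2(m+1)}{3} = \frac{2(|S|+1)}{3|V|}\, d(S) \ge \frac{2}{3}\, d(S) \cdot \frac{|S|}{|V|}.
\end{align}

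\textbf{Where the work lies.} There is no real obstacle: the only non-trivial step is the identity $\mathbb E[\min(P_u,P_v)] = (m+1)/3$, which is a standard computation for two uniformly chosen distinct positions. The conceptual point worth highlighting is simply that, in a path tree, an edge $(u,v)$ contributes an LCA of size determined solely by whichever of $u,v$ is higher, so the randomness reduces to the distribution of $\min(P_u,P_v)$.
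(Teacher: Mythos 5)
Your proof is correct and follows essentially the same route as the paper: fix a pair $u,v\in S$, compute $\mathbb E[|\lca(u,v)|]$ under the random permutation, and conclude by linearity of expectation. The only cosmetic difference is in how that expectation is evaluated — you sum over the distribution of $\min(P_u,P_v)$, whereas the paper notes by symmetry that each third vertex $w$ lies under $\lca(u,v)$ with probability $\nicefrac23$, giving the same value $2+\nicefrac23\,(|S|-2)=\frac{2(|S|+1)}{3}$.
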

\begin{proof}
For each $u,v \in S$ consider constructing $\random(S)$ as follows: first pick positions for $u,v$ and then sample positions for all other vertices. Notice that for every other vertex $w$ we have $\Pr[w \text{ is under }\lca(u,v)] = \nicefrac 23$. Thus we have $\mathbb E[|\lca(u,v)|] = 2 + \nicefrac23 \cdot (|S| - 2)$.
Hence
\begin{align*}
    \mathbb E[f(\tpath(\random(S)))]
    & = \frac{d(S) \cdot (2 + \nicefrac23 \cdot (|S| - 2))}{|\items|}
    \notarxiv{\\&} \ge \frac23 \cdot d(S) \cdot \frac {|S|}{|\items|}
\end{align*}
\end{proof}

Let $\optsat$ be the weight of satisfied clauses in the optimum solution to \bmaxsatpar(d).

\begin{lemma}\label{lem:alglb}
The best of the two solutions $\sol_1$ and $\sol_2$ computed in Algorithm~\ref{alg:maxsathc} has expected value at least
\begin{align}
    \algsat \ge \frac13 + \frac49 \cdot 0.94 \cdot \optsat
\end{align}
\end{lemma}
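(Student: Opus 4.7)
The plan is to bound the better of $\mathbb{E}[f(\sol_1)]$ and $f(\sol_2)$ by a convex combination whose coefficients on $d(S)$ and $d(S, \items \setminus S)$ coincide, so that the bound takes the form $\tfrac{1}{3} + c \cdot (d(S) + d(S, \items \setminus S))$ for some constant $c$. Since $d(S) + d(S, \items \setminus S)$ is exactly the weight of satisfied clauses under the assignment $x_u = \mathbf{1}[u \in S]$ in the instance built by Algorithm~\ref{algo:max_2_sat_partitioning}, and \bmaxsatpar is $0.94$-approximable via~\citep*{AustrinBG17}, this will yield $d(S) + d(S, \items \setminus S) \ge 0.94 \cdot \optsat$ and the claim.

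First I would compute $\mathbb{E}[f(\sol_1)]$ directly from the structure of the random caterpillar. Labeling leaves $1, \ldots, n$ from top to bottom, any pair at positions $i < j$ satisfies $|\lca(i,j)| = n - i + 1$. Because the first $|S|$ positions are a uniformly random permutation of $S$ and the remaining positions a uniformly random permutation of $\items \setminus S$, standard order-statistic identities give $\mathbb{E}[\min(i,j)] = (|S|+1)/3$ for a pair in $S$, $(|S|+1)/2$ for a crossing pair, and $|S| + (n - |S| + 1)/3$ for a pair in $\items \setminus S$. Writing $\alpha = d(S)$, $\beta = d(S, \items \setminus S)$, $\gamma = d(\items \setminus S)$ with $\alpha + \beta + \gamma = 1$, and using the balanced partition $|S| = n/2$, I obtain (up to lower-order $O(1/n)$ terms) $\mathbb{E}[f(\sol_1)] = \tfrac{5}{6}\alpha + \tfrac{3}{4}\beta + \tfrac{1}{3}\gamma = \tfrac{1}{3} + \tfrac{1}{2}\alpha + \tfrac{5}{12}\beta$.

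For $\sol_2$, every crossing pair has its LCA at the root (size $n$) and therefore contributes $\beta$ to $f$. Applying Proposition~\ref{lem:avglink} to each half gives $f(\balancedBisection(S, d)) \ge \tfrac{2}{3}\alpha \cdot |S|/n = \tfrac{1}{3}\alpha$ and symmetrically $\tfrac{1}{3}\gamma$ for $\items \setminus S$, hence $f(\sol_2) \ge \beta + \tfrac{1}{3}(\alpha + \gamma) = \tfrac{1}{3} + \tfrac{2}{3}\beta$. For any $\lambda \in [0,1]$ we have $\max(f(\sol_1), f(\sol_2)) \ge \lambda \cdot f(\sol_1) + (1-\lambda) \cdot f(\sol_2)$ pointwise; solving $\tfrac{\lambda}{2} = \tfrac{5\lambda}{12} + \tfrac{2(1-\lambda)}{3}$ gives $\lambda = 8/9$, which makes the coefficients of both $\alpha$ and $\beta$ equal to $\tfrac{4}{9}$. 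Taking expectations, $\mathbb{E}[\max(f(\sol_1), f(\sol_2))] \ge \tfrac{1}{3} + \tfrac{4}{9}(\alpha + \beta) \ge \tfrac{1}{3} + \tfrac{4}{9} \cdot 0.94 \cdot \optsat$.

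The main obstacle is discovering the mixing weight $\lambda = 8/9$ that simultaneously equalizes the $d(S)$ and $d(S, \items \setminus S)$ coefficients in the two bounds; once this is pinned down the rest is a routine combination of the path-tree order-statistic computation and Proposition~\ref{lem:avglink}. A secondary technical point is that \bmaxsatpar only guarantees $||S| - n/2| \le 1$, but the resulting $O(1/n)$ perturbation of the LCA coefficients does not affect the stated approximation factor.
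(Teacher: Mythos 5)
Your proposal is correct and follows essentially the same route as the paper's proof: the same decomposition into $d(S)$, $d(S,\items\setminus S)$, $d(\items\setminus S)$, the same bounds $\mathbb E[\f(\sol_1)]\ge\frac13+\frac12 d(S)+\frac{5}{12}d(S,\items\setminus S)$ and $\f(\sol_2)\ge\frac13+\frac23 d(S,\items\setminus S)$, and the same final invocation of the $0.94$-approximation for \bmaxsat. Your convex-combination balancing with $\lambda=\nicefrac 89$ is just an equivalent reformulation of the paper's observation that the worst case of $\max(2D_1,D_2)$ occurs at $2D_1=D_2$, so no substantive difference remains.
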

\begin{proof}
Let $S \subseteq \items$ be the set of vertices corresponding to positive variables in the assignment given by \bmaxsatpar$(d)$ in Algorithm~\ref{alg:maxsathc}.
Let $T = \items \setminus S$, $D_1 = d(S)$ and $D_2 = d(S, T)$.
\begin{proposition}\label{prop:path}
For $\sol_1$, $D_1$ and $D_2$ as defined above,
$
    \mathbb E[\f(\sol_1)] \ge \frac13 + \frac12 D_1 + \frac{5}{12} D_2.
$
\end{proposition}
\begin{proof}
Note that by construction of $\sol_1$, for edges $(u,v) \in S \times T$ we have
$$\mathbb E[|\lca(u,v)|] = 1 + \frac {|\items|} 2 +  \frac12 \left(\frac {|\items|} 2 - 1 \right) \ge \frac34 |\items|.$$
Thus we have:
\begin{align*}
\mathbb E[\f(\sol_1)]
&= \mathbb E[f(T)] + \left(\frac {d(S)} 2 + \mathbb E[f(S)]\right) + \frac 34 d(S, T)\\
&\ge \frac  {d(T)} 3 + \left(\frac {d(S)} 2 + \frac {d(S)} 3 \right) + \frac 34 d(S, T) \\
&= \frac13  + \frac {d(S)} 2 + \frac{5}{12} d(S, T) \\
&= \frac13 + \frac12 D_1 + \frac{5}{12} D_2.
\end{align*}
\end{proof}

\begin{proposition}\label{prop:cut}
For $\sol_1$, $D_1$ and $D_2$ as defined above,
$\f(\sol_2) \ge \frac13 + \frac23 \cdot D_2.$
\end{proposition}
\begin{proof}
By Proposition~\ref{lem:avglink} we have:
\begin{align*}
\f(\sol_2) &= d(S, T) + f(\balancedBisection(S))\\
& \ \ \ \ + f(\balancedBisection(T)) \\
& \ge D_2 + \frac23 \cdot \frac 12 (d(S) + d(T)) \\
&= \frac13 + \frac23 D_2.
\end{align*}
\end{proof}

\paragraph{Continuation of the proof of Lemma~\ref{lem:alglb}.} Taking the best of the two solutions, by Proposition~\ref{prop:path} and Proposition~\ref{prop:cut} we have:
\begin{align*}
    \max(\f(\sol_1),\f(\sol_2)) &\ge \frac13 + \frac{5}{12} \cdot D_2 + \frac14 \cdot \max(2 D_1, D_2)
\end{align*}
The minimum is achieved when $2D_1 = D_2$, and therefore $D_2 = \frac {2 (D_1 + D_2)} 3$
\begin{align*}
    \max(\f(\sol_1),\f(\sol_2)) &\ge \frac13 + \frac{2}{3} \cdot \frac {2 (D_1 + D_2)} 3
    \notarxiv{\\&}= \frac13 + \frac49 (D_1 + D_2)
    \notarxiv{\\&}\ge \frac13 + \frac49 \cdot 0.94 \cdot \optsat.
\end{align*}
\end{proof}

\subsection{Proof of the main theorem}

\begin{theorem}
Algorithm~\ref{alg:maxsathc} constructs a tree which gives a $\gamma$-approximation of the optimum value of the objective $\f$ over all binary trees, where $\gamma  = \frac{\nicefrac 43}{1 + \frac{3}{4 \cdot 0.94}} \ge 0.74$.
\label{thm:main_thm}
\end{theorem}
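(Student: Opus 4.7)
The plan is a case analysis on $\opt$, combining three bounds. After rescaling so $D = d(V) = 1$ (which implies $\opt \le 1$), Proposition~\ref{lem:avglink} applied with $S = V$ gives $f(\sol_3) \ge \frac{2}{3}$, while Lemma~\ref{lem:alglb} gives $\max(f(\sol_1), f(\sol_2)) \ge \frac{1}{3} + \frac{4}{9} \cdot 0.94 \cdot \optsat$. The third ingredient needed is a link between $\opt$ and $\optsat$, since without it neither bound is tight across the full range $\opt \in [0, 1]$.

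To obtain this link, I would decompose $V$ based on $\topt$ into sets $A$, $B$, $C$ as in Figure~\ref{fig:abc}: $A, B$ are the topmost subtrees of $\topt$ with size at most $n/2$, and $C$ collects the remaining vertices. First, using $|\lca_{\topt}(u,v)| \le |A \cup B|$ for pairs lying under the common ancestor $\topts$ of $A$ and $B$, and $|\lca_{\topt}(u,v)| \le n$ for all other pairs, I would upper bound $\opt$ by an expression in $d(A), d(B), d(C), d(A,B), d(A,C), d(B,C)$ weighted by appropriate size ratios. Second, I would lower bound $\optsat$ by exhibiting the balanced assignment that puts $A \subseteq S$, $C \subseteq T$, and distributes $B$ to enforce $|S| \approx n/2$; every clause with at least one endpoint in $A$ is satisfied, which yields an $\optsat$ bound in the same variables plus the expected contribution from the random split of $B$.

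Plugging this structural relation into Lemma~\ref{lem:alglb} gives $\max(f(\sol_1), f(\sol_2)) \ge \gamma \opt$ in the regime $\opt > \frac{2}{3\gamma}$, while the bound $f(\sol_3) \ge \frac{2}{3}$ handles the complementary regime. The two cases meet at $\opt = \frac{2}{3\gamma}$: at this threshold, both the $\sol_3$ bound and the Lemma~\ref{lem:alglb} bound (with $\optsat$ at its critical value $\frac{3}{4 \cdot 0.94}$, obtained by setting $\frac{1}{3} + \frac{4}{9} \cdot 0.94 \cdot \optsat = \frac{2}{3}$) must simultaneously equal $\gamma \opt$. Solving this pins $\gamma = \frac{\nicefrac 43}{1 + \frac{3}{4 \cdot 0.94}} \ge 0.74$.

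The main obstacle will be the structural bounds tying $\opt$ and $\optsat$ to the $A, B, C$ quantities. In particular, when $|A| + |B|$ substantially exceeds $n/2$, the balance constraint for the SAT assignment forces $B$ to be split unevenly, which weakens the lower bound on $\optsat$ and requires a case analysis over the relative sizes of $|A|, |B|, |C|$. A secondary subtlety is that the path-tree solution $\sol_1$ (Lemma~\ref{lem:path}) credits cross pairs between $S$ and $T$ with coefficient roughly $\tfrac{3n}{4}$ rather than $n$, creating an asymmetry that must be reconciled when translating the $A, B, C$-based lower bound on $\optsat$ back into a bound on $\max(f(\sol_1), f(\sol_2))$.
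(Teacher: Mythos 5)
Your skeleton is the same as the paper's: the same decomposition of $\items$ into $A,B,C$ from the optimal tree, the same three candidate solutions, Lemma~\ref{lem:alglb} plus the $\nicefrac23$ bound for $\sol_3$, and the same balancing of $\nicefrac 23$ against $\frac13+\frac49\cdot 0.94\cdot\optsat$ that produces $\gamma$. However, your key structural step is oriented the wrong way, and this is a genuine gap. In the upper bound on $\opt$ it is exactly the pairs touching $C$ that may receive the full coefficient ($|\lca|=n$), so the exhibited balanced assignment must guarantee that those clauses are satisfied, i.e.\ it must set $C$ to \emph{true} (the paper's Lemma~\ref{lem:sat-lb} sets $x_u=1$ on $C$, $x_u=0$ on $A$, and splits $B$ randomly to balance). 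You propose the reverse ($A\subseteq S$ true, $C\subseteq T$ false), so $d(C)$ contributes nothing to your certified lower bound on $\optsat$. This breaks the needed link: take an optimal tree whose root separates two pieces of $C$ (e.g.\ $\topts$ sits strictly below the root, and $C$ consists of the other root-subtree plus a sibling piece), and put all distance mass on pairs straddling that root split. Then $\opt\approx 1$, you are in your claimed regime $\opt>\frac{2}{3\gamma}$, yet your assignment certifies $\optsat\approx 0$, so Lemma~\ref{lem:alglb} only gives $\approx\nicefrac13$ and $\sol_3$ only $\nicefrac23<\gamma\cdot\opt$. Flipping the assignment to the paper's orientation is the fix.

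Two further points. First, your stated upper bound on $\opt$ (coefficient $|A\cup B|$ for all pairs under $\topts$) is too coarse: the paper's Lemma~\ref{lem:ub} needs the sharper per-part coefficients $a\cdot d(A)+b\cdot d(B)$, since otherwise an instance with all mass inside $A$ (where genuinely $\opt\le a\le\nicefrac12$) would carry an upper bound near $1$ while only $\nicefrac23$ is certified. Second, the "plugging in" step is where most of the paper's work actually lives: one has to minimize the ratio of $\max(\nicefrac23,\,\frac13+\frac49\cdot0.94\cdot\optsat\text{-bound})$ to the $\opt$-bound over the free parameters $a$, $b$, $d(A)$, $d(B)$, $d(A\cup B)$; the paper does this via a monotonicity reduction to $d(B)=0$ (Lemma~\ref{lem:derivative}) and a three-case analysis over the minimizing value of $d(A)$, rather than a single threshold argument in $\opt$. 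Your identification of the critical balance point (which indeed pins $\gamma=\frac{\nicefrac43}{1+\frac{3}{4\cdot0.94}}$) is correct, but by itself it does not substitute for that optimization.
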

\begin{proof}
We denote the value of the best tree constructed by our algorithm as $\alg$.
Consider the optimum tree $\topt$ and let $\opt = \f(\topt)$. In $\topt$ there exists a subtree $\topts$ such that the number of leaves in it is at least $n/2$ while the number of leaves in each of its subtrees is less than $n/2$. Let the sets of leaves in two children of $\topts$ be denoted as $A$ and $B$ respectively and let $C = \items \setminus (A \cup B)$ (see Figure~\ref{fig:abc}).
We further denote $a = \nicefrac {|A|}{|\items|}, b = \nicefrac {|B|}{|\items|}$ and $c = \nicefrac {|C|}{|\items|}$.

The rest of the proof is the following.
Lemma~\ref{lem:ub} specifies an upper bound on $OPT$ in terms of $A$, $B$ and $C$.
Lemma~\ref{lem:sat-lb} finds a lower bound on the optimum solution of $\bmaxsatpar$; using Lemma~\ref{lem:alglb}, it also gives a lower bound on solution found by the algorithm.
Based on these results, Lemma~\ref{lem:derivative} shows that for lower bound it suffices to consider the case $d(A)=d(B)=0$, simplifying the bounds.
Finally, we combine this with the fact that the objective for $\sol_3$ is $\nicefrac 23 d(\items)$ and compute the approximation.

\begin{lemma}\label{lem:ub}
The optimum value of the hierarchical clustering objective $\f(\topt)$ can be bounded as:
\begin{align*}
OPT \le\
&a \cdot d(A) + b \cdot d(B) + (1 - c) \cdot d(A,B)
\notarxiv{\\&}+ d(C) + d(A,C) + d(B,C)    
\end{align*}
\end{lemma}
\begin{proof}
Follows by Definition~\ref{def:ckmm-obj} of $\f$.
Indeed, for pairs of leaves in $A$ and $B$ the fraction of leaves under their $\lca$ is at most $a$ and $b$ respectively. For pairs of leaves $(u,v) \in A \times B$ this fraction is at most $a + b = 1 - c$. 
\end{proof}

\begin{lemma}\label{lem:sat-lb}
    The weight $\optsat$ of satisfied clauses in the optimum solution to \bmaxsatpar(d) can be bounded as:
    \begin{align}
        \optsat \ge
            &\ d(C) + d(A,C) + d(B,C)
            \notarxiv{\\&}+ \left(1 - \left(\frac{\nicefrac 12 - a}{b}\right)^2\right) d(B) + \frac{\nicefrac 12 - c}{b} d(A,B)
    \end{align}
\end{lemma}
\begin{proof}

Consider a solution which sets (see Figure~\ref{fig:part_abc}):
\begin{enumerate}
    \item $x_u = 1$ for all $u \in C$,
    \item $x_u = 0$ for all $u \in A$,
    \item $x_u = 1$ for a random $\nicefrac {(\nicefrac 12 - c)} b$ fraction of $B$ and $x_u = 0$ for the rest of $B$ (a $\nicefrac{(\nicefrac12 - a)}{b}$ fraction).
\end{enumerate}
Note that this solution is balanced as it sets exactly $\nicefrac n2$ variables to $1$. 
Since all variables in $C$ are set to $1$, the solution has value at least $d(C) + d(A,C) + d(B,C)$.
For $u,v \in B$, we have $\Pr[x_u = x_v = 0 | u,v \in B] = \left(\frac{\nicefrac 12 - a}{b}\right)^2$ so the expected value for edges with both endpoints in $B$ is at least $\left(1 - \left(\frac{\nicefrac 12 - a}{b}\right)^2\right) \cdot d(B)$.
Similarly, the expected value for edges with $u \in A$ and $v \in B$ is $\frac{\nicefrac 12 - c}{b} \cdot d(A,B)$ since $\Pr[x_v = 1] = \frac{\nicefrac 12 - c}{b}$. Hence there exists a solution which has value of at least the expectation of our random solution, completing the proof.
\end{proof}

Recall that $d(A \cup B) = d(A) + d(B) + d(A, B)$.
We let $\algsat = \nicefrac13 + \nicefrac49 \cdot 0.94 \cdot \optsat$ be our lower bound on the value given by the algorithm by Lemma~\ref{lem:alglb}.

\begin{lemma}\label{lem:derivative}
Let $a$, $b$, and $d(A \cup B)$ be fixed an satisfy the following inequalities:
\begin{compactitem}
    \item $0 \le a,b \le \frac 12$
    \item $a+b \ge \frac 12$
    \item $d(A \cup B) \le 1$
\end{compactitem}
Let the lower bound on $\algsat$ be
\begin{align*}
    &f(d(A), d(B)) \\
    &=\frac 13 + \frac 49 \cdot 0.94 \left((1 - d(A \cup B)) + \left(1 - \left(\frac{\nicefrac 12 - a}{b}\right)^2\right) d(B) + \frac{a + b - \nicefrac 12}{b} (d(A \cup B) - d(A) - d(B))\right)
,\end{align*}
and the upper bound on optimum be
\[
    g(d(A), d(B)) = a \cdot d(A) + b \cdot d(B) + (a + b) (d(A \cup B) - d(A) - d(B)) + (1 - d(A \cup B))
.\]
Then the following holds:
\begin{compactenum}
    \item For fixed $d(A)$, the lower bound on the approximation factor of Algorithm~\ref{alg:hiclustering},
    $\frac{\max(\nicefrac 23, f(d(A), d(B)))}{g(d(A), d(B))}$, is minimized when $d(B) = 0$.
    \item $\frac{\nicefrac 23}{g(d(A), d(B))}$ monotonically increases w.r.t. $d(A)$ and $d(B)$.
    \item For $d(B)=0$, $\frac{f(d(A), 0)}{g(d(A), 0)}$ depends on $d(A)$ monotonically.
\end{compactenum}
\end{lemma}
\begin{proof}
The third statement follows from the fact that $\frac{f(d(A),0)}{g(d(A),0)}$ is a linear fractional function w.r.t. $d(A)$, and hence monotone on any continuity interval.
The second statement follows from
\begin{align*}
    g(d(A), d(B)) = 1 - (1 - a - b) d(A \cup B) - b \cdot d(A) - a \cdot d(B) \label{eq:rewrite_g}
,\end{align*}
and the fact that the coefficients in front of $d(A)$ and $d(B)$ are non-positive.
It remains to prove the first statement.

From the second statement, $\frac{\nicefrac 23}{g(d(A), d(B))}$ is minimized when $d(B) = 0$.
The other term, $\frac{f(d(A), d(B))}{g(d(A), d(B))}$, is a linear fractional function w.r.t. $d(B)$, i.e.
\[
    \frac{f(d(A), d(B))}{g(d(A),d(B))} = \frac{\alpha_1 d(B) + c_1}{\alpha_2 d(B) + c_2}
,\]
where
\begin{align*}
    \alpha_1 &= \frac 49 \cdot 0.94 \cdot \left(1 - \left(\frac{\nicefrac 12 - a}{b}\right)^2 - \frac{a + b - \nicefrac 12}{b}\right) \\
    c_1 &= \frac 13 + \frac 49 \cdot 0.94 \cdot \left((1 - d(A \cup B)) + \left(1 - \left(\frac{\nicefrac 12 - a}{b}\right)^2\right) d(A) + \frac{a + b - \nicefrac 12}{b} (d(A \cup B) - d(A))\right) \\
    \alpha_2 &=  -a \\
    c_2 &= 1 - (1 - a - b) d(A \cup B) - b \cdot d(A)
\end{align*}

To show that $\frac{f(d(A), d(B))}{g(d(A), d(B))}$ is minimized when $d(B)=0$, it suffices to show that the derivative is non-negative, meaning $\alpha_1 c_2 - \alpha_2 c_1 \ge 0$.
We clearly have that $c_1 \ge 0$, $\alpha_2 \le 0$, and $c_2 \ge 0$.
For $\alpha_1$, we have:
\[
    1 - \left(\frac{\nicefrac 12 - a}{b}\right)^2 - \frac{a + b - \nicefrac 12}{b}
    = \frac{a + b - \nicefrac 12}{b} \cdot \frac{b + \nicefrac 12 - a}{b} - \frac{a + b - \nicefrac 12}{b}
    =  \frac{a + b - \nicefrac 12}{b} \cdot \frac{\nicefrac 12 - a}{b} \ge 0
.\]
Hence, $\alpha_1 c_2 - \alpha_2 c_1 \ge 0$, finishing the proof.
\end{proof}

\paragraph{Continuation of the proof of Theorem~\ref{thm:main_thm}.}
By Lemma~\ref{lem:derivative} we can assume that $d(B) = 0$.
To improve the presentation, we use $d_A$ to denote $d(A)$ and $d_{A \cup B}$ to denote $d(A \cup B)$ and let $\xi = \frac 49 \cdot 0.94$.
Similarly to the Lemma, for fixed $a$, $b$, and $d_{A \cup B}$ we define
\begin{align*}
    f(d_A) =\frac 13 + \xi \left(1 - \frac{\nicefrac 12 - a}{b} d_{A \cup B} - \frac{a + b - \nicefrac 12}{b} d_A\right)
,\end{align*}
and
\[
    g(d_A) = 1 - (1 - a - b) d_{A \cup B} - b \cdot d_A
.\]
We show that for the lower bound on the approximation factor $\gamma = \frac{\max(\nicefrac 23, f(d(A)))}{g(d(A))}$ we have $\gamma \ge 0.741$.

From Lemma~\ref{lem:derivative}, we know that $\frac{\nicefrac 23}{g(d(A))}$ increases w.r.t. $d_A$.
For fixed $a,b,d, d(A\cup b)$, let $d_A^*$ be the value of $d_A$ which minimizes the approximation.
We consider the following cases:
\begin{itemize}
    \item If $\frac{f(d_A)}{g(d_A)}$ increases, then $d_A^* = 0$.
    \item If $\frac{f(d_A)}{g(d_A)}$ decreases and $f(d_A) > \nicefrac 23$ for all $d_A \in (0, d_{A \cup B})$,
        then $d_A^* = d_{A \cup B}$.
    \item If $\frac{f(d_A)}{g(d_A)}$ decreases and $f(d_A) < \nicefrac 23$ for all $d_A \in (0, d_{A \cup B})$,
        Then $d_A^* = 0$.
    \item If $\frac{f(d_A)}{g(d_A)}$ decreases and there exists $\tilde d_A \in (0, d_{A \cup B})$ such that $f(\tilde d_A) = \nicefrac 23$,
        then $d_A^* = \tilde d_A$.
\end{itemize}
Summing it up, we need to analyze three cases: $d_A^*=0$, $d_A^* = d_{A \cup B}$, and, under certain conditions, $d_A^* \in (0, d_{A \cup B})$.
We also can assume that $g(d_A^*) \ge \frac{\nicefrac 23}{\gamma}$, since otherwise $\frac{\nicefrac 23}{g(d_A^*)}$ achieves $\gamma$-approximation.

\paragraph{Case 1: $d_A^* = 0$.}
In this case, $f(0) = \frac 13 + \xi \left(1 - \frac{\nicefrac 12 - a}{b} d_{A \cup B}\right)$ and $g(0) = 1 - (1 - a - b) d_{A \cup B}$.
Using that $a,b \le \nicefrac 12$ and $(1 - a - b) d_{A \cup B} = 1 - g(0)$, we have:
\begin{align*}
    1 - \frac{\nicefrac 12 - a}{b} d_{A \cup B}
    &\ge 1 - 2(\nicefrac 12 - a) d_{A \cup B} \\
    &\ge 1 - 2(1 - a - b) d_{A \cup B} \\
    &= 1 - 2 (1 - g(0)) \\
    &= 2 g(0) - 1
\end{align*}
Hence, using $g(0) \ge \frac{\nicefrac 23}{\gamma}$, we have:
\[
    \frac{f(0)}{g(0)}
    \ge \frac{\frac 13 + \xi (2 g(0) - 1)}{g(0)}
    = 2 \xi + \frac{\frac 13 - \xi}{g(0)}
    \ge 0.8355 - \gamma \cdot 0.1267
,\]
which exceeds $\gamma$ when $\gamma \le \frac{0.8355}{1.1267} \ge 0.741$.

\paragraph{Case 2: $d_A^* = d_{A \cup B}$}
In this case, $f(d_{A \cup B}) = \frac 13 + \xi \left(1 - d_{A \cup B}\right)$ and $g(d_{A \cup B}) = 1 - (1 - a) d_{A \cup B}$.
\[
    \frac{f(d_{A \cup B})}{g(d_{A \cup B})}
    = \frac{\frac 13 + \xi \left(1 - d_{A \cup B}\right)}{1 - (1 - a) d_{A \cup B}}
.\]
Hence, $\frac{f(d_{A \cup B})}{g(d_{A \cup B})}$ is a linear fractional function w.r.t. $d_{A \cup B}$, and hence it's monotone w.r.t. $d_{A \cup B}$.
When $\frac{f(d_{A \cup B})}{g(d_{A \cup B})}$ increases, the minimum is achieved for $d_{A \cup B} = 0$, implying $d_A^* = 0$, which is analyzed in Case 1 above.
Otherwise, by taking the derivative, $\frac{f(d_{A \cup B})}{g(d_{A \cup B})}$ decreases iff $\xi \le (1 - a) (\frac 13 + \xi)$, which is equivalent to $\frac{1}{1 - a} \le \frac{\frac 13 + \xi}{\xi}$.
From $g(d_{A \cup B}) \ge \frac{\nicefrac 23}{\gamma}$, the maximum possible $d_{A \cup B}$ is achieved when we have equality:
\[
    g(d_{A \cup B}) = \frac{\nicefrac 23}{\gamma}
    \iff d_{A \cup B} = \frac{1 - \frac{\nicefrac 23}{\gamma}}{1 - a}
    \le \frac{1}{\xi}(\frac 13 + \xi)(1 - \frac{\nicefrac 23}{\gamma})
.\]
Putting everything together:
\[
    \frac{f(d_{A \cup B})}{g(d_{A \cup B})}
    = \frac{\gamma}{\nicefrac 23}\left(\frac 13 + \xi \left(1 - d_{A \cup B}\right)\right)
    \ge \frac{\gamma}{\nicefrac 23}\left(\frac 13 + \xi - (\frac 13 + \xi)(1 - \frac{\nicefrac 23}{\gamma})\right)
    \ge \gamma
.\]

\paragraph{Case 3: $d_A^* \in (0, d_{A \cup B})$ with $f(d_A^*) = \nicefrac 23$ and $\frac{f(d_A)}{g(d_A)}$ decreases}
Since $f(d_A^*) = \nicefrac 23$, we have:
\[
    \frac 13 + \xi \left(1 - \frac{\nicefrac 12 - a}{b} d_{A \cup B} - \frac{a + b - \nicefrac 12}{b} d_A^*\right) = \frac 23
    \iff d_{A \cup B} - d_A^* = \frac{b}{a + b - \nicefrac 12} (d_{A \cup B} - \delta)
,\]
where $\delta = 1 - \frac{1}{3\xi}$.
Since $f(0) \ge \nicefrac 23$ and $f(d_{A \cup B}) \le \nicefrac 23$, we also have:
\begin{align}
    \delta \le d_{A \cup B} \le \frac{b}{\nicefrac 12 - a} \delta \label{eq:min_max_dab}
.\end{align}
Since $f(d_A^*) = \nicefrac 23$, it suffices to show that $g(d_A^*) \le \frac{\nicefrac 23}{\gamma}$ where
\[  g(d_A^*)
    = 1 - (1 - a - b) d_{A \cup B} - b \cdot d_A^*
    = 1 - (1 - a) d_{A \cup B} + \frac{b^2}{a + b - \nicefrac 12} (d_{A \cup B} - \delta)
.\]
Since $g(d_A^*)$ is monotone w.r.t. $d_{A \cup B}$, it's maximized only when $d_{A \cup B}$ is either minimized of maximized, i.e., based on Equation~\ref{eq:min_max_dab}, either $d_{A \cup B} = \delta$ or $d_{A \cup B} = \frac{b}{\nicefrac 12 - a} \delta$.
But for $d_{A \cup B} = \delta$, we have $d_A^* = d_{A \cup B}$, which is analyzed in Case 2, and for $d_{A \cup B}$ we have $d_A^* = 0$, which is analyzed in Case 1.
Hence, for any $d(A \cup B)$ the algorithm provides approximation at least $\gamma$.
\end{proof}

\clearpage
    \begin{table*}[htb!]
	\centering
	
    \setlength{\tabcolsep}{0.2em}
	\begin{tabular}{ccccc}
		\hline
		Name & Function & Dimension & $\kerleft(\vx)$, $\kerright(\vy)$
		\\
		\hline
		\hline
		
		$\cossim$ & $\frac{\langle \vx, \vy \rangle}{2\|\vx\|_2\|\vy\|_2} + \frac12$ & $d + 1$ &
		$\frac 1 {\sqrt{2} \|x\|}(x_1, x_2, \dotsc, x_d, \|x\|)$
		\\ \hline
		
		\multiline{
		$\rbf{\gamma}$\\
		~\citep*{RahimiR07}
		} & $e^{- \gamma \|x - y\|_2^2}$ & 
		$O(d \eps^{-2} \log{\eps^{-2}})$
		&
		\multiline{
			$\sqrt{\frac 2 {D}}(\cos(w_1^\top x + b_1), \ldots, \cos(w_D^\top x + b_D))$ \\ 
			$b_i \sim \mathcal{U}[0, 2\pi], w_i \sim \mathcal{N}(0, 2\gamma \mathcal{I}_{D})$
		}
		\\ \hline
		
		\multiline{
		$\laplacian{\gamma}$\\
		~\citep*{RahimiR07}
		} & $e^{-\gamma \|x - y\|_1}$ & 
		$O(d \eps^{-2} \log{\eps^{-2}})$
		&
		Random Binning Features
		\\ \hline
			$\Ltwosqr$-distance & $\|x - y\|_2^2$ & $d+2$ &
			\multiline{
			    $(\|x\|_2^2, 1, x_1, \ldots, x_d)$,\\
			    $(1, \|y\|_2^2, -2 y_1, \ldots, -2 y_d)$
			}\\
		\hline
	\end{tabular}
	\caption{Kernel similarities and $\Ltwosqr$-distance with their kernel-defining functions.}
	\label{tab:kernels_sim}
\end{table*}
\section{Extended Preliminaries}\label{sec:extended-preliminaries}

In this section we describe additional technical details regarding an appropriate generalization of MW/CKMM objectives to non-binary trees (Appendix~\ref{app:non-binary-mw-ckmm}), the inverse kernel trick used to speed up our algorithm (Appendix~\ref{sec:kernels}) and the full details regarding our experimental setup (Appendix~\ref{app:experimental-setup}).

\subsection{Extensions of MW/CKMM objectives to non-binary trees}\label{app:non-binary-mw-ckmm}
Since there is no standard way to compare binary trees with non-binary ones in terms of MW/CKMM objectives, and most algorithms (except \ghhc, \birch) produce binary trees, we give the expected score of random binarization for non-binary internal nodes 
\footnote{
This might put \ghhc and \birch at a disadvantage for the CKMM objective and it boosts the performance of \ghhc and \birch for MW because binarization improves the score. 
}.
More formally, in order to non-trivially extend the MW/CKMM objectives to non-binary trees we propose slight modifications of the objectives above which assign the value of a random partitioning to all triplets which share the same LCA. Denoting the set of such triplets as $S$, we define 
\[{\ourmw(\tree) := \sum_{(i < j < k) \notin S}\simty_{\lcaind{i}\lcaind{j}} + \frac13 \sum_{(i < j < k) \in S} (\simty_{ij} + \simty_{ik} + \simty_{jk})}\]
and 
\begin{align*}
\ourckmm(\tree) :=
&\sum_{(i < j < k) \notin S}(\dist_{\lcaind{i}\lcaind{k}} + \dist_{\lcaind{j}\lcaind{k}}) 
\notarxiv{\\&}+ \frac23\sum_{(i < j < k) \in S} (\dist_{ij} + \dist_{ik} + \dist_{jk}) + 2\sum_{i < j} \dist_{ij}
\end{align*}
We emphasize that these modifications coincide with MW/CKMM when restricted to binary trees. Furthermore they have the same optimum values since every non-binary tree can be binarized without changing these objectives.

\subsection{Inverse Kernel Trick}\label{sec:kernels}
Table~\ref{tab:kernels_sim} shows examples of kernalizable similarity and distance measures, which allows one to use them in \ouralgo.

\subsection{Experimental Setup}\label{app:experimental-setup}
\paragraph{Datasets}
Table~\ref{tab:datasets} shows information about datasets used in our experiments.

\begin{table*}[p]
\centering

\begin{tabular}{lccccc}
\hline
Dataset & $n$ & $d$ & \#classes & Setting & Comments \\
\hline
\hline

\glass & $2.1 \cdot 10^2$ & $10$ & $6$ & \generalSetup & \multiline{Chemical features of\\glass types} \\
\hline

\spambase & $4.6 \cdot 10^3$ & $57$ & $2$ & \generalSetup & \multiline{Features of\\spam/non-spam emails} \\
\hline

\imagenetVTwo & $1.0 \cdot 10^4$ & $512$ & $1000$ & \generalizationSetup & \multiline{Emeddings of images\\from ImageNetV2\\using ResNet34} \\
\hline

\nabirds & $4.8\cdot10^4$ & $512$ & $555$ & \shiftSetup & \multiline{Emeddings of images\\from \nabirds\\using ResNet34} \\
\hline

\sstTwo & $6.7 \cdot 10^4$ & $768$ & $2$ & \sentenceSetup & \multiline{Emeddings of phrases\\from \sstTwo\\using SBERT} \\
\hline

\aloi & $1.1 \cdot 10^5$ & $128$ & $1000$ & \generalSetup & \multiline{Pixels of images} \\
\hline

\covType & $5.8 \cdot 10^5$ & $54$ & $7$ & \generalSetup & \multiline{Features of\\forest covers} \\
\hline

\twitter & $1.2 \cdot 10^6$ & $200$ & -- & \wordSetup & \multiline{Glove embeddings of\\ words from \twitter} \\
\hline

\imagenet & $1.3 \cdot 10^6$ & $512$ & $1000$ & \classicSetup & \multiline{Emeddings of images \\from \imagenet ILSVRC\\using ResNet34} \\
\hline

\imagenetins & $1.3 \cdot 10^6$ & $2048$ & $1000$ & \classicSetup & \multiline{Emeddings of images \\from \imagenet ILSVRC\\using Inception\\~\citep{MonathZSMA19ghhc}} \\
\hline

\wikipedia & $4.5 \cdot 10^6$ & $100$ & -- & \wordSetup & \multiline{Word2vec embeddings of\\ words from english \wikipedia} \\
\hline

\end{tabular}
\caption{Information about datasets. Datasets are ordered by $n$ (number of points). Description of the various settings is in Subsection~\ref{subsec:datasets}.}
\label{tab:datasets}
\end{table*}

\paragraph{Similarity measures} For datasets consisting of embedding vectors arising from deep learning applications, we use a simple $\cossim$ as a measure of similarity.
This is motivated by popularity of $\cossim$ in such applications.
We remark that we didn't use any preprocessing of embedding vectors (e.g. centering, normalization, etc.) in order to focus on the gains from the hierarchical clustering algorithms as opposed to careful feature engineering. This leaves room for potential gains and further studies. 

For non-deep learning datasets it can be even more challenging to pick a suitable similarity measure. For completeness of our comparison with the previous work we use $\cossim$ for such datasets as well. However, under a more suitable choice of a similarity measure for each dataset one might expect to obtain higher quality hierarchies.


\paragraph{Hyperparameters} For all complex algorithms with a wide range of hyperparameters, (\ghhc, \grinch, \perch, \birch, \hdbscan) we use recommended approaches for hyperparameter tuning from the corresponding papers and/or repositories.

\clearpage
    \section{Full Results for MW/CKMM Objectives}\label{sec:additonal_results}

In this section we provide full details of our study of hierarchical clustering algorithms on a large number of datasets.
We provide results in terms of CKMM~(Table~\ref{tab:additional_ckmm}) and MW~(Table~\ref{tab:additional_mw}) objectives, while results for dendrogram purity are given in Section~\ref{sec:dp}.
For these objectives, we additionally provide results on various standard benchmarks (mostly non-deep learning), including smaller classic machine learning datasets: Table~\ref{tab:additional_ckmm_other_datasets} for CKMM and Table~\ref{tab:additional_mw_other_datasets} for MW.
We present complete tables of results (with standard deviations) for all algorithms under study.
Experimental setup is the same as described in Section~\ref{sec:experiments}.

We rank the algorithms in the following way: for each dataset, for each algorithm we compute the ratio of its (normalized) objective to the best objective on this dataset; after that, we average these values and sort algorithms based on this average value.
The goal of this additional normalization is to prevent ranking from being determined only by datasets where the algorithms achieve large objective.







\subsection{Hierarchical Agglomerative Clustering (HAC)}
\paragraph{Approach}
Hierarchical Agglomerative Clustering (HAC) algorithms are bottom-up: initially each data point forms its own cluster, then iteratively a pair of \emph{closest clusters} is selected (according to a certain rule specific to the algorithm) and merged.
HAC methods include classical approaches such as \averageLinkage, \wards, \singleLinkage, \completeLinkage, which vary in their definition of the closest cluster pair selected in every step.
\averageLinkage merges clusters with the smallest average distance between their data points, \wards uses a variance-based measure of proximity, \singleLinkage uses the closest pair of points between clusters and \completeLinkage uses the furthest pair points.
    
HAC methods have been studied extensively (see~\citep*{ManningRS08}) and are still one of the main workhorses in applications.
However, finding an exact solution in a every step in HAC methods requires quadratic running time for sufficiently high-dimensional data~\citep*{Indyk00, AlmanW15}, which can make them infeasible for large datasets (see e.g.~\citep*{AbboudCH19avgLsh} for approximation algorithms for HAC methods). Furthermore, HAC algorithms typically don't correspond to any global HC objectives and hence can be considered heuristics (except for \averageLinkage which gives $\nicefrac13$/$\nicefrac23$-approximations for MW/CKMM objectives~\citep*{MoseleyW17,CohenKMM19}). In beyond worst-case setting a variant of single-linkage clustering is known to have approximation guarantees for the hierarchical stochastic block model~\citep*{CohenKM17}.

\paragraph{Results}  Due to their superlinear (i.e. at least quadratic) scalability, HAC methods (\averageLinkage, \completeLinkage) are not applicable to large datasets.
However, for small datasets (\glass, \spambase, \imagenetVTwo) (Table~\ref{tab:additional_ckmm}, Table~\ref{tab:additional_mw}, Table~\ref{tab:additional_ckmm_other_datasets},  Table~\ref{tab:additional_mw_other_datasets}), they show advantage over many scalable algorithms (\grinch, \birch, \perch, \hdbscan, \randomCut) for both CKMM and MW objectives.
\wards also shows competitive performance for the MW objective.
In general, HAC methods provide a strong deterministic baseline which does not require hyperparameter tuning.

\subsection{Robust Single-Linkage from HDBSCAN (\hdbscan)}
\paragraph{Approach} 
Hierarchical Density-Based Spatial Clustering of Applications with Noise (HDBSCAN)~\citep*{McInnesHA17hdbscan} is a hierarchical extension of the extremely popular DBSCAN algorithm~\citep*{EstherKSX96}. 
HDBSCAN builds on top of DBSCAN by converting it into a HC algorithm and then uses the resulting hierarchy to extract a flat clustering based on the stability properties of the clusters.
We refer to the process of building the hierarchy before the extraction of the flat clustering as Robust Single-Linkage (\hdbscan).

\hdbscan is a spanning tree based method which redefines distance between points to make the spanning tree robust to outliers. 
The key idea is to define a \emph{mutual reachability distance} as $d^*(a, b) = max(core_k(a), core_k(b), 1 / \alpha \cdot d(a, b))$ where core distance $core_k(a)$ is a distance to the $k$-th nearest neighbor of $a$. In dense regions core distance is small and hence $d^*$ is equal to $d$. For low-density regions this transformation increases the distance and thus outliers are less likely to affect the structure of the spanning tree. The parameter $\alpha \in [1,2]$ can be used to adjust the mutual reachability distance to make it less sensitive to high core distance. This approach is guaranteed to converge to a density-based cluster tree~\citep*{ChaudhuriD10robustSingleL}, but doesn't have any known approximation for the MW/CKMM objectives.

\paragraph{Results}
In our experiments \hdbscan performs fairly well for MW/CKMM objectives, but is typically below the best approaches. Its running time degrades substantially in higher dimensions ($d \ge 200$), which made it infeasible to run \hdbscan on high-dimensional data.

\subsection{Bisecting K-means (\hkmeans)}
\paragraph{Approach} Bisecting $k$-means (\hkmeans) is an extremely popular approach to hierarchical clustering in practice since it can be implemented using vanilla $k$-means as a black box. At the top level it applies $k$-means for $k = 2$ and then continues in the same fashion recursively on each part of the resulting partition. 
For more details on \hkmeans see a classic monograph by Jain~\citep*{Jain10}.
Our implementation of \hkmeans is based on a popular K-Means++~\citep*{ArthurV06kmeansPP} initialization and Lloyd's algorithm.

Despite its widespread use in practice, \hkmeans is not known to correspond to any natural global optimization objective (with a notable exception of the recent work~\citep*{WangM20hkmeansObjective} who propose an objective specifically tailored to \hkmeans and show a constant-factor approximation). In particular, for the MW objective considered in this work negative results about approximation achieved by \hkmeans are known~\citep*{MoseleyW17}.

\paragraph{Results}
While \hkmeans doesn't have a formal approximation guarantee, on many datasets it gives surprisingly high-quality results with low variance for the MW/CKMM objectives (some exceptions to this rule are \twitter, \wikipedia and \imagenetVTwo for CKMM and \imagenetins and \spambase for MW). Analyzing this performance in a suitable beyond worst-case model might be an interesting subject for future work.

\subsection{Incremental Hierarchical Clustering (\perch, \grinch, \birch)}
\paragraph{Approach} Incremental hierarchical clustering approaches (\perch, \grinch, \birch) start from an empty HC tree and, for each new input data point, add it to the tree.
\perch~\citep*{KobrenMKM17perch} and \grinch~\citep*{MonathKGM19grinch} are nearest-neighbor based approaches which attach a new vertex to its nearest neighbor in the tree and then perform some tree rearrangements.

The main features of \perch~\citep*{KobrenMKM17perch} include balance-based rotations which preserve tree balance, bounding-box approximations which facilitate finding the nearest neighbor efficiently and masking-based rotations which locally improve the HC tree.
Under a certain separability assumption (nodes in the same cluster are closer than nodes in different clusters), \perch finds an HC tree with perfect dendrogram purity.

Similarly to \perch, \grinch~\citep*{MonathKGM19grinch} performs rotations which locally improve the HC tree. Additionally, \grinch uses a ``graft'' procedure. For a node $v$ it finds a node $v'$ such that $v'$ is more similar to $v$ than to its sibling. If such a node is found, $v'$ disconnects from its sibling and becomes a sibling of $v$.

Finally, \birch~\citep*{ZhangRL96birch} is another incremental approach optimized for database applications. This method incrementally passes new points from the root to a leaf cluster in a top-down fashion and builds a balanced tree with a fixed branching factor. 

\paragraph{Results}
For most datasets (with the exception of \spambase, \imagenetins, see Table~\ref{tab:additional_ckmm_other_datasets} and Table~\ref{tab:additional_mw_other_datasets}) incremental clustering methods \perch, \grinch and \birch show fairly high average quality according to MW/CKMM objectives, even though they haven't been designed with these objectives in mind.

\subsection{Gradient-Based Hyperbolic Hierarchical Clustering (\ghhc)}
\paragraph{Approach} Gradient-Based Hyperbolic Hierarchical Clustering \ghhc~\citet*{MonathZSMA19ghhc} is a gradient-based approach which optimizes a continuous tree representation in hyperbolic space, following a popular hyperbolic embedding approach pioneered in~\citep*{NickelK17poincare}. The key idea of the method is to define a continuous loss function and optimize it using gradient descent. The loss function of \ghhc is based on triplets and is inspired by CKMM/MW objectives. The algorithm is applicable to large datasets which do not fit in memory because it uses a mini-batch stochastic gradient descent.

\paragraph{Results}
In our experiments \ghhc gives fairly modest results for the CKMM/MW objectives, which might be due to the fact that it produces non-binary trees, resulting in relatively low scores. However, it works well for the dendrogram purity measure of performance, especially for \imagenet and \imagenetins (see Section~\ref{sec:dp} for more details).

\subsection{Scalable Single-Linkage (\affinityClustering)}
\paragraph{Approach} \affinityClustering~\citep*{BateniBDHKLM17} is a scalable graph-based algorithm for distributed systems based on Boruvka's algorithm for building a minimum spanning tree.
At every round, each cluster finds a cheapest edge connecting it to another cluster, and all clusters connected by such edges are merged together.
Boruvka's algorithm is closely related to Single-Linkage HAC, but instead of merging the pair of closest clusters, it merges all such pairs of clusters for which one cluster is the closest for some other one.
This results makes the algorithm more efficient for distributed settings:
    since the size of the smallest component at least doubles after each round, \affinityClustering is guaranteed to find a hierarchy after $O(\log |\items|)$ rounds.
In~\citet*{BateniBDHKLM17} an MPC version of Boruvkas's algorithm is given. 

\citep*{BateniBDHKLM17} also consider variants of \affinityClustering based on average and complete linkage for cluster similarity measure, but only provide experimental results. Due to lack of publicly available code and exact algorithm descriptions we didn't attempt to reproduce these results, which could be an interesting subject for a future study.

\paragraph{Results}
Similarly to HAC, in our setup \affinityClustering was not scalable to large datasets since on the complete graph of similarities/distances it requires at least quadratic running time. For MW/CKMM objectives, \affinityClustering doesn't show high performance on \imagenetVTwo and shows average performance on \glass and \spambase. While, similarly to Single Linkage HAC, \affinityClustering doesn't have formal guarantees for the MW/CKMM objectives, our results show that it substantially outperforms \Random solution.

We believe that variations of \affinityClustering with average and complete linkage mentioned above have a high potential for being competitive with some of the best approaches studied here. However, in our experience a suitable combination of top-down and bottom-up methods, as in \ouralgo, tends to produce the best results.

\subsection{Random Projection (\randomCut)}
\paragraph{Approach} \randomCut~\citep*{CharikarCNY2019} is our simplest and most efficient baseline, which can be implemented in almost linear time and parallelizes trivially. The algorithm computes projections of the input vectors $v_1, \dots, v_n \in \mathbb R^d$ on a single random Gaussian $g \in \mathbb R^d$ and then only works with these 1D projections $x_1 = \langle v_1, g \rangle, \dots, x_n = \langle v_n, g \rangle$. For these projected points it applies a uniformly random cut recursively. It picks a uniformly random point $u$ in the range containing $(x_1, \dots, x_n)$, splits the data into two subtrees $L = \{x_i | x_i < u\}$ and $R = \{x_i | x_i \ge u\}$ and continues recursively on $L$ and $R$. 
For smooth similarity measure \randomCut is known to perform strictly better than the naive \nicefrac13 \Random/\averageLinkage baseline for MW. See~\citep*{CharikarCNY2019} for more details.

\paragraph{Results} Despite its simplicity \randomCut serves well as the first cut in our evaluation of various methods. Our results show that, as predicted in~\citet*{CharikarCNY2019}, it substantially outperforms \Random and in certain cases even beats much more sophisticated approaches.

\begin{table*}[p]
\centering

\setlength{\tabcolsep}{0.05em}
\begin{tabular}{cPPPPPP}
\hline
\MtabSetup & \MimagenetSmall & \MimagenetVTwo & \Mnabirds &  \Mtwitter & \Mwikipedia & \MsstTwo \\
\hline
\hline
\textbf{\ouralgo} & \boldmath{\cellcolor[rgb]{0.73, 0.89, 0.70}\multiline{$.30 \pm .00$\\$.95 \pm .00$}} & \boldmath{\cellcolor[rgb]{0.17, 0.58, 0.30}\multiline{$.83 \pm .00$\\$.99 \pm .00$}} & \boldmath{\cellcolor[rgb]{0.23, 0.64, 0.34}\multiline{$.71 \pm .00$\\$.97 \pm .00$}} & \boldmath{\cellcolor[rgb]{0.29, 0.69, 0.38}\multiline{$.60 \pm .00$\\$.97 \pm .00$}} & \boldmath{\cellcolor[rgb]{0.33, 0.71, 0.40}\multiline{$.58 \pm .00$\\$.94 \pm .00$}} & \boldmath{\cellcolor[rgb]{0.46, 0.77, 0.47}\multiline{$.49 \pm .00$\\$.97 \pm .00$}}\\
\hline
\averageLinkage & -- & \cellcolor[rgb]{0.31, 0.70, 0.39}\multiline{$.59 \pm .00$\\$.97 \pm .00$} & -- & -- & -- & --\\
\hline
\textbf{\oursatalgo} & \cellcolor[rgb]{0.80, 0.92, 0.77}\multiline{$.23 \pm .00$\\$.95 \pm .00$} & \cellcolor[rgb]{0.77, 0.91, 0.74}\multiline{$.26 \pm .00$\\$.95 \pm .00$} & \cellcolor[rgb]{0.34, 0.71, 0.41}\multiline{$.57 \pm .00$\\$.96 \pm .00$} & \cellcolor[rgb]{0.90, 0.96, 0.89}\multiline{$.12 \pm .00$\\$.93 \pm .00$} & \cellcolor[rgb]{0.72, 0.89, 0.69}\multiline{$.30 \pm .00$\\$.90 \pm .00$} & \cellcolor[rgb]{0.52, 0.80, 0.52}\multiline{$.46 \pm .01$\\$.96 \pm .00$}\\
\hline
\hdbscan & -- & \cellcolor[rgb]{0.23, 0.64, 0.35}\multiline{$.70 \pm .00$\\$.98 \pm .00$} & \cellcolor[rgb]{0.53, 0.80, 0.52}\multiline{$.45 \pm .00$\\$.95 \pm .00$} & -- & -- & \cellcolor[rgb]{0.88, 0.95, 0.85}\multiline{$.15 \pm .00$\\$.94 \pm .00$}\\
\hline
\textbf{\bbisection} & \cellcolor[rgb]{0.80, 0.92, 0.77}\multiline{$.23 \pm .00$\\$.95 \pm .00$} & \cellcolor[rgb]{0.77, 0.91, 0.74}\multiline{$.26 \pm .00$\\$.95 \pm .00$} & \cellcolor[rgb]{0.34, 0.71, 0.41}\multiline{$.57 \pm .00$\\$.96 \pm .00$} & \cellcolor[rgb]{0.91, 0.97, 0.90}\multiline{$.10 \pm .01$\\$.93 \pm .00$} & \cellcolor[rgb]{0.86, 0.94, 0.83}\multiline{$.17 \pm .01$\\$.88 \pm .00$} & \cellcolor[rgb]{0.52, 0.80, 0.52}\multiline{$.46 \pm .01$\\$.96 \pm .00$}\\
\hline
\completeLinkage & -- & \cellcolor[rgb]{0.47, 0.78, 0.48}\multiline{$.48 \pm .00$\\$.97 \pm .00$} & -- & -- & -- & --\\
\hline
\hkmeans & \cellcolor[rgb]{0.80, 0.92, 0.77}\multiline{$.23 \pm .01$\\$.95 \pm .00$} & \cellcolor[rgb]{0.77, 0.91, 0.75}\multiline{$.26 \pm .00$\\$.95 \pm .00$} & \cellcolor[rgb]{0.29, 0.69, 0.38}\multiline{$.61 \pm .01$\\$.96 \pm .00$} & \cellcolor[rgb]{0.91, 0.97, 0.89}\multiline{$.10 \pm .01$\\$.93 \pm .00$} & \cellcolor[rgb]{0.91, 0.97, 0.90}\multiline{$.10 \pm .00$\\$.87 \pm .00$} & \cellcolor[rgb]{0.52, 0.80, 0.52}\multiline{$.45 \pm .01$\\$.96 \pm .00$}\\
\hline
\multiline{\birch\\non-binary} & \cellcolor[rgb]{0.84, 0.94, 0.82}\multiline{$.18 \pm .03$\\$.95 \pm .00$} & \cellcolor[rgb]{0.89, 0.96, 0.87}\multiline{$.14 \pm .05$\\$.95 \pm .00$} & \cellcolor[rgb]{0.69, 0.88, 0.67}\multiline{$.33 \pm .20$\\$.94 \pm .02$} & \cellcolor[rgb]{0.81, 0.93, 0.79}\multiline{$.22 \pm .12$\\$.94 \pm .01$} & \cellcolor[rgb]{0.85, 0.94, 0.82}\multiline{$.18 \pm .05$\\$.88 \pm .01$} & \cellcolor[rgb]{0.78, 0.91, 0.75}\multiline{$.25 \pm .01$\\$.95 \pm .00$}\\
\hline
\multiline{\ghhc\\non-binary} & \cellcolor[rgb]{0.84, 0.94, 0.82}\multiline{$.19 \pm .03$\\$.95 \pm .00$} & \cellcolor[rgb]{0.85, 0.94, 0.83}\multiline{$.18 \pm .05$\\$.95 \pm .00$} & \cellcolor[rgb]{0.45, 0.76, 0.46}\multiline{$.50 \pm .24$\\$.96 \pm .02$} & \cellcolor[rgb]{0.94, 0.98, 0.92}\multiline{$.06 \pm .01$\\$.92 \pm .00$} & \cellcolor[rgb]{0.93, 0.97, 0.91}\multiline{$.07 \pm .02$\\$.86 \pm .00$} & \cellcolor[rgb]{0.79, 0.92, 0.76}\multiline{$.25 \pm .05$\\$.95 \pm .00$}\\
\hline
\grinch & \cellcolor[rgb]{0.92, 0.97, 0.91}\multiline{$.08 \pm .03$\\$.94 \pm .00$} & \cellcolor[rgb]{0.93, 0.97, 0.92}\multiline{$.06 \pm .02$\\$.94 \pm .00$} & \cellcolor[rgb]{0.50, 0.79, 0.50}\multiline{$.47 \pm .07$\\$.95 \pm .01$} & \cellcolor[rgb]{0.96, 0.99, 0.96}\multiline{$.01 \pm .01$\\$.92 \pm .00$} & \cellcolor[rgb]{0.93, 0.97, 0.91}\multiline{$.08 \pm .03$\\$.86 \pm .01$} & \cellcolor[rgb]{0.87, 0.95, 0.85}\multiline{$.16 \pm .02$\\$.94 \pm .00$}\\
\hline
\perch & \cellcolor[rgb]{0.94, 0.98, 0.93}\multiline{$.05 \pm .02$\\$.94 \pm .00$} & \cellcolor[rgb]{0.92, 0.97, 0.91}\multiline{$.08 \pm .03$\\$.94 \pm .00$} & \cellcolor[rgb]{0.65, 0.86, 0.63}\multiline{$.36 \pm .13$\\$.94 \pm .01$} & \cellcolor[rgb]{0.97, 0.99, 0.96}\multiline{$.00 \pm .02$\\$.92 \pm .00$} & -- & \cellcolor[rgb]{0.88, 0.95, 0.86}\multiline{$.15 \pm .02$\\$.94 \pm .00$}\\
\hline
\randomCut & \cellcolor[rgb]{0.94, 0.98, 0.92}\multiline{$.06 \pm .02$\\$.94 \pm .00$} & \cellcolor[rgb]{0.92, 0.97, 0.91}\multiline{$.09 \pm .03$\\$.94 \pm .00$} & \cellcolor[rgb]{0.90, 0.96, 0.88}\multiline{$.12 \pm .06$\\$.92 \pm .01$} & \cellcolor[rgb]{0.91, 0.96, 0.89}\multiline{$.11 \pm .05$\\$.93 \pm .00$} & \cellcolor[rgb]{0.80, 0.92, 0.78}\multiline{$.23 \pm .06$\\$.89 \pm .01$} & \cellcolor[rgb]{0.93, 0.97, 0.92}\multiline{$.07 \pm .02$\\$.94 \pm .00$}\\
\hline
\wards & -- & \cellcolor[rgb]{0.84, 0.94, 0.81}\multiline{$.19 \pm .00$\\$.95 \pm .00$} & -- & -- & -- & --\\
\hline
\affinityClustering & -- & \cellcolor[rgb]{0.88, 0.95, 0.86}\multiline{$.15 \pm .00$\\$.95 \pm .00$} & -- & -- & -- & --\\
\hline
\singleLinkage & -- & \cellcolor[rgb]{0.92, 0.97, 0.90}\multiline{$.09 \pm .00$\\$.94 \pm .00$} & -- & -- & -- & --\\
\hline
\Random & \cellcolor[rgb]{0.97, 0.99, 0.96}\multiline{$.00 \pm .00$\\$.94 \pm .00$} & \cellcolor[rgb]{0.97, 0.99, 0.96}\multiline{$.00 \pm .00$\\$.94 \pm .00$} & \cellcolor[rgb]{0.97, 0.99, 0.96}\multiline{$.00 \pm .00$\\$.91 \pm .00$} & \cellcolor[rgb]{0.97, 0.99, 0.96}\multiline{$.00 \pm .00$\\$.92 \pm .00$} & \cellcolor[rgb]{0.97, 0.99, 0.96}\multiline{$.00 \pm .00$\\$.85 \pm .00$} & \cellcolor[rgb]{0.97, 0.99, 0.96}\multiline{$.00 \pm .00$\\$.93 \pm .00$}\\

\hline
$n\approx$ & $1.2\cdot10^6$ & $10^4$ & $5 \cdot 10^4$ & $1.3\cdot10^6$ & $4.5\cdot10^6$ & $7 \cdot 10^4$\\
$d$ & $512$ & $512$ & $512$ & $200$ & $100$ & $768$\\
\#classes & $10^3$ & $10^3$ & $555$ & -- & -- & $2$\\
\hline
\end{tabular}

\caption{Extended version of Table~\ref{tab:distance_based}. Normalized/unnormalized ($\aproxourckmm[\normchar]$/$\aproxourckmm$) distance-based CKMM objectives (generalized to allow non-binary trees) under squared Euclidean distance with standard deviations. \ouralgo and \oursatalgo show stable high-quality results with low variance. \hkmeans also has low variance, \birch and \ghhc have higher variance. HAC approaches (\averageLinkage, \completeLinkage, \wards, \affinityClustering, \singleLinkage) are deterministic.}
\label{tab:additional_ckmm}
\end{table*}
\begin{table*}[p]
\centering

\setlength{\tabcolsep}{0.05em}
\begin{tabular}{cPPPPPP}
\hline
\MtabSetup & \MimagenetSmall & \MimagenetVTwo & \Mnabirds &  \Mtwitter & \Mwikipedia & \MsstTwo \\
\hline
\hline

\textbf{\ouralgo} & \boldmath{\cellcolor[rgb]{0.59, 0.83, 0.57}\multiline{$.40 \pm .00$\\$.98 \pm .00$}} & \boldmath{\cellcolor[rgb]{0.44, 0.76, 0.46}\multiline{$.51 \pm .00$\\$.99 \pm .00$}} & \boldmath{\cellcolor[rgb]{0.22, 0.63, 0.34}\multiline{$.73 \pm .00$\\$.99 \pm .00$}} & \boldmath{\cellcolor[rgb]{0.54, 0.81, 0.53}\multiline{$.44 \pm .00$\\$.97 \pm .00$}} & \boldmath{\cellcolor[rgb]{0.60, 0.84, 0.58}\multiline{$.40 \pm .00$\\$.96 \pm .00$}} & \boldmath{\cellcolor[rgb]{0.45, 0.76, 0.46}\multiline{$.51 \pm .00$\\$.96 \pm .00$}}\\
\hline
\completeLinkage & -- & \boldmath{\cellcolor[rgb]{0.43, 0.76, 0.45}\multiline{$.51 \pm .00$\\$.99 \pm .00$}} & -- & -- & -- & --\\
\hline
\hkmeans & \cellcolor[rgb]{0.63, 0.85, 0.61}\multiline{$.37 \pm .00$\\$.98 \pm .00$} & \cellcolor[rgb]{0.61, 0.84, 0.59}\multiline{$.39 \pm .00$\\$.98 \pm .00$} & \cellcolor[rgb]{0.23, 0.64, 0.34}\multiline{$.71 \pm .01$\\$.99 \pm .00$} & \cellcolor[rgb]{0.76, 0.90, 0.73}\multiline{$.27 \pm .03$\\$.96 \pm .00$} & \cellcolor[rgb]{0.62, 0.84, 0.60}\multiline{$.38 \pm .00$\\$.96 \pm .00$} & \cellcolor[rgb]{0.53, 0.80, 0.52}\multiline{$.45 \pm .01$\\$.95 \pm .00$}\\
\hline
\textbf{\bbisection} & \cellcolor[rgb]{0.63, 0.85, 0.61}\multiline{$.37 \pm .00$\\$.98 \pm .00$} & \cellcolor[rgb]{0.61, 0.84, 0.59}\multiline{$.39 \pm .01$\\$.98 \pm .00$} & \cellcolor[rgb]{0.25, 0.66, 0.36}\multiline{$.67 \pm .00$\\$.99 \pm .00$} & \cellcolor[rgb]{0.80, 0.92, 0.77}\multiline{$.23 \pm .00$\\$.95 \pm .00$} & \boldmath{\cellcolor[rgb]{0.60, 0.84, 0.58}\multiline{$.40 \pm .00$\\$.96 \pm .00$}} & \cellcolor[rgb]{0.52, 0.80, 0.52}\multiline{$.46 \pm .01$\\$.95 \pm .00$}\\
\hline
\textbf{\oursatalgo} & \cellcolor[rgb]{0.63, 0.85, 0.61}\multiline{$.37 \pm .00$\\$.98 \pm .00$} & \cellcolor[rgb]{0.61, 0.84, 0.59}\multiline{$.39 \pm .01$\\$.98 \pm .00$} & \cellcolor[rgb]{0.25, 0.66, 0.36}\multiline{$.67 \pm .00$\\$.99 \pm .00$} & \cellcolor[rgb]{0.80, 0.92, 0.77}\multiline{$.23 \pm .00$\\$.95 \pm .00$} & \boldmath{\cellcolor[rgb]{0.60, 0.84, 0.58}\multiline{$.40 \pm .00$\\$.96 \pm .00$}} & \cellcolor[rgb]{0.52, 0.80, 0.52}\multiline{$.46 \pm .01$\\$.95 \pm .00$}\\
\hline
\averageLinkage & -- & \cellcolor[rgb]{0.63, 0.85, 0.61}\multiline{$.38 \pm .00$\\$.98 \pm .00$} & -- & -- & -- & --\\
\hline
\wards & -- & \cellcolor[rgb]{0.66, 0.86, 0.64}\multiline{$.35 \pm .00$\\$.98 \pm .00$} & -- & -- & -- & --\\
\hline
\multiline{\ghhc\\non-binary} & \cellcolor[rgb]{0.75, 0.90, 0.72}\multiline{$.28 \pm .02$\\$.98 \pm .00$} & \cellcolor[rgb]{0.71, 0.88, 0.68}\multiline{$.31 \pm .03$\\$.98 \pm .00$} & \cellcolor[rgb]{0.33, 0.71, 0.40}\multiline{$.58 \pm .26$\\$.98 \pm .01$} & \cellcolor[rgb]{0.89, 0.96, 0.87}\multiline{$.14 \pm .01$\\$.95 \pm .00$} & \cellcolor[rgb]{0.89, 0.96, 0.87}\multiline{$.13 \pm .01$\\$.95 \pm .00$} & \cellcolor[rgb]{0.77, 0.91, 0.74}\multiline{$.26 \pm .05$\\$.93 \pm .00$}\\
\hline
\multiline{\birch\\non-binary} & \cellcolor[rgb]{0.80, 0.92, 0.77}\multiline{$.23 \pm .03$\\$.98 \pm .00$} & \cellcolor[rgb]{0.81, 0.92, 0.78}\multiline{$.23 \pm .04$\\$.98 \pm .00$} & \cellcolor[rgb]{0.62, 0.85, 0.60}\multiline{$.38 \pm .17$\\$.98 \pm .01$} & \cellcolor[rgb]{0.91, 0.97, 0.89}\multiline{$.10 \pm .05$\\$.95 \pm .00$} & \cellcolor[rgb]{0.74, 0.90, 0.71}\multiline{$.29 \pm .04$\\$.96 \pm .00$} & \cellcolor[rgb]{0.77, 0.91, 0.74}\multiline{$.26 \pm .06$\\$.93 \pm .01$}\\
\hline
\grinch & \cellcolor[rgb]{0.91, 0.97, 0.90}\multiline{$.10 \pm .02$\\$.98 \pm .00$} & \cellcolor[rgb]{0.90, 0.96, 0.88}\multiline{$.12 \pm .03$\\$.98 \pm .00$} & \cellcolor[rgb]{0.38, 0.73, 0.43}\multiline{$.54 \pm .08$\\$.98 \pm .00$} & \cellcolor[rgb]{0.92, 0.97, 0.90}\multiline{$.09 \pm .03$\\$.95 \pm .00$} & \cellcolor[rgb]{0.92, 0.97, 0.90}\multiline{$.09 \pm .02$\\$.94 \pm .00$} & \cellcolor[rgb]{0.86, 0.95, 0.84}\multiline{$.17 \pm .03$\\$.93 \pm .00$}\\
\hline
\hdbscan & -- & \cellcolor[rgb]{0.85, 0.94, 0.83}\multiline{$.17 \pm .00$\\$.98 \pm .00$} & \cellcolor[rgb]{0.86, 0.95, 0.84}\multiline{$.16 \pm .00$\\$.97 \pm .00$} & -- & -- & \cellcolor[rgb]{0.83, 0.93, 0.80}\multiline{$.20 \pm .00$\\$.93 \pm .00$}\\
\hline
\perch & \cellcolor[rgb]{0.93, 0.97, 0.91}\multiline{$.07 \pm .02$\\$.97 \pm .00$} & \cellcolor[rgb]{0.90, 0.96, 0.88}\multiline{$.13 \pm .03$\\$.98 \pm .00$} & \cellcolor[rgb]{0.56, 0.82, 0.55}\multiline{$.43 \pm .14$\\$.98 \pm .01$} & \cellcolor[rgb]{0.97, 0.99, 0.96}\multiline{$.01 \pm .01$\\$.94 \pm .00$} & -- & \cellcolor[rgb]{0.88, 0.95, 0.86}\multiline{$.15 \pm .02$\\$.92 \pm .00$}\\
\hline
\randomCut & \cellcolor[rgb]{0.94, 0.98, 0.92}\multiline{$.06 \pm .01$\\$.97 \pm .00$} & \cellcolor[rgb]{0.94, 0.98, 0.92}\multiline{$.06 \pm .01$\\$.97 \pm .00$} & \cellcolor[rgb]{0.90, 0.96, 0.88}\multiline{$.13 \pm .06$\\$.97 \pm .00$} & \cellcolor[rgb]{0.94, 0.98, 0.92}\multiline{$.06 \pm .01$\\$.94 \pm .00$} & \cellcolor[rgb]{0.89, 0.96, 0.87}\multiline{$.13 \pm .02$\\$.95 \pm .00$} & \cellcolor[rgb]{0.93, 0.97, 0.91}\multiline{$.07 \pm .02$\\$.92 \pm .00$}\\
\hline
\singleLinkage & -- & \cellcolor[rgb]{0.91, 0.97, 0.90}\multiline{$.10 \pm .00$\\$.98 \pm .00$} & -- & -- & -- & --\\
\hline
\affinityClustering & -- & \cellcolor[rgb]{0.92, 0.97, 0.90}\multiline{$.09 \pm .00$\\$.97 \pm .00$} & -- & -- & -- & --\\
\hline
\Random & \cellcolor[rgb]{0.97, 0.99, 0.96}\multiline{$.00 \pm .00$\\$.97 \pm .00$} & \cellcolor[rgb]{0.97, 0.99, 0.96}\multiline{$.00 \pm .00$\\$.97 \pm .00$} & \cellcolor[rgb]{0.97, 0.99, 0.96}\multiline{$.00 \pm .00$\\$.96 \pm .00$} & \cellcolor[rgb]{0.97, 0.99, 0.96}\multiline{$.00 \pm .00$\\$.94 \pm .00$} & \cellcolor[rgb]{0.97, 0.99, 0.96}\multiline{$.00 \pm .00$\\$.94 \pm .00$} & \cellcolor[rgb]{0.97, 0.99, 0.96}\multiline{$.00 \pm .00$\\$.91 \pm .00$}\\

\hline
$n\approx$ & $1.2\cdot10^6$ & $10^4$ & $5 \cdot 10^4$ & $1.3\cdot10^6$ & $4.5\cdot10^6$ & $7 \cdot 10^4$\\
$d$ & $512$ & $512$ & $512$ & $200$ & $100$ & $768$\\
\#classes & $10^3$ & $10^3$ & $555$ & -- & -- & $2$\\
\hline
\end{tabular}

\caption{Extended version of Table~\ref{tab:mw_sim}. Normalized/unnormalized ($\aproxourmw[\normchar]$/$\aproxourmw$) similarity-based MW objectives (generalized to allow non-binary trees) under cosine similarity with standard deviation. \ouralgo and \oursatalgo show stable high quality results with low variance. \hkmeans also has low variance, \birch, \grinch and \perch have higher variance. HAC approaches (\completeLinkage, \averageLinkage, \wards, \singleLinkage, \affinityClustering) are deterministic.}
\label{tab:additional_mw}
\end{table*}
\begin{table*}[p]
\centering
\setlength{\tabcolsep}{0.2em}
\begin{tabular}{cPPPPP}
\hline 
\MtabSetup & \Mglass & \Mspambase & \Maloi & \McovType & \MimageNetAdditional \\
\hline 
\hline
 
\textbf{\ouralgo} & \boldmath{\cellcolor[rgb]{0.09, 0.51, 0.24}\multiline{$.98 \pm .00$\\$1.0 \pm .00$}} & \cellcolor[rgb]{0.09, 0.51, 0.24}\multiline{$.98 \pm .00$\\$.99 \pm .00$} & \boldmath{\cellcolor[rgb]{0.26, 0.67, 0.37}\multiline{$.64 \pm .00$\\$.94 \pm .00$}} & \boldmath{\cellcolor[rgb]{0.16, 0.57, 0.29}\multiline{$.84 \pm .00$\\$.96 \pm .00$}} & \boldmath{\cellcolor[rgb]{0.39, 0.74, 0.43}\multiline{$.54 \pm .00$\\$.96 \pm .00$}}\\
\hline
\averageLinkage & \boldmath{\cellcolor[rgb]{0.09, 0.51, 0.24}\multiline{$.98 \pm .00$\\$1.0 \pm .00$}} & \boldmath{\cellcolor[rgb]{0.09, 0.50, 0.23}\multiline{$.99 \pm .00$\\$1.0 \pm .00$}} & -- & -- & --\\
\hline
\completeLinkage & \cellcolor[rgb]{0.15, 0.55, 0.28}\multiline{$.88 \pm .00$\\$.97 \pm .00$} & \cellcolor[rgb]{0.11, 0.52, 0.25}\multiline{$.95 \pm .00$\\$.98 \pm .00$} & -- & -- & --\\
\hline
\hkmeans & \cellcolor[rgb]{0.16, 0.57, 0.29}\multiline{$.86 \pm .02$\\$.96 \pm .01$} & \cellcolor[rgb]{0.10, 0.52, 0.25}\multiline{$.96 \pm .02$\\$.99 \pm .01$} & \cellcolor[rgb]{0.38, 0.73, 0.42}\multiline{$.55 \pm .07$\\$.93 \pm .01$} & \cellcolor[rgb]{0.17, 0.58, 0.30}\multiline{$.83 \pm .00$\\$.96 \pm .00$} & \cellcolor[rgb]{0.57, 0.82, 0.56}\multiline{$.41 \pm .00$\\$.95 \pm .00$}\\
\hline
\wards & \cellcolor[rgb]{0.18, 0.59, 0.30}\multiline{$.82 \pm .00$\\$.95 \pm .00$} & \cellcolor[rgb]{0.16, 0.57, 0.29}\multiline{$.85 \pm .00$\\$.95 \pm .00$} & -- & -- & --\\
\hline
\textbf{\oursatalgo} & \cellcolor[rgb]{0.19, 0.60, 0.31}\multiline{$.79 \pm .00$\\$.95 \pm .00$} & \cellcolor[rgb]{0.14, 0.55, 0.27}\multiline{$.89 \pm .00$\\$.96 \pm .00$} & \cellcolor[rgb]{0.43, 0.75, 0.45}\multiline{$.52 \pm .00$\\$.93 \pm .00$} & \cellcolor[rgb]{0.18, 0.59, 0.31}\multiline{$.80 \pm .00$\\$.95 \pm .00$} & \cellcolor[rgb]{0.69, 0.87, 0.66}\multiline{$.33 \pm .00$\\$.94 \pm .00$}\\
\hline
\affinityClustering & \cellcolor[rgb]{0.21, 0.62, 0.33}\multiline{$.75 \pm .00$\\$.94 \pm .00$} & \cellcolor[rgb]{0.15, 0.55, 0.28}\multiline{$.88 \pm .00$\\$.96 \pm .00$} & -- & -- & --\\
\hline
\textbf{\bbisection} & \cellcolor[rgb]{0.20, 0.61, 0.32}\multiline{$.77 \pm .01$\\$.94 \pm .00$} & \cellcolor[rgb]{0.16, 0.57, 0.29}\multiline{$.84 \pm .00$\\$.95 \pm .00$} & \cellcolor[rgb]{0.43, 0.75, 0.45}\multiline{$.52 \pm .00$\\$.93 \pm .00$} & \cellcolor[rgb]{0.18, 0.59, 0.31}\multiline{$.80 \pm .00$\\$.95 \pm .00$} & \cellcolor[rgb]{0.70, 0.88, 0.67}\multiline{$.32 \pm .00$\\$.94 \pm .00$}\\
\hline
\singleLinkage & \cellcolor[rgb]{0.21, 0.62, 0.33}\multiline{$.74 \pm .00$\\$.93 \pm .00$} & \cellcolor[rgb]{0.18, 0.59, 0.30}\multiline{$.82 \pm .00$\\$.94 \pm .00$} & -- & -- & --\\
\hline
\multiline{\birch\\non-binary} & \cellcolor[rgb]{0.19, 0.60, 0.31}\multiline{$.79 \pm .18$\\$.95 \pm .05$} & \cellcolor[rgb]{0.94, 0.98, 0.92}\multiline{$.06 \pm .15$\\$.70 \pm .05$} & \cellcolor[rgb]{0.55, 0.81, 0.54}\multiline{$.43 \pm .05$\\$.91 \pm .01$} & \cellcolor[rgb]{0.35, 0.72, 0.41}\multiline{$.57 \pm .14$\\$.89 \pm .04$} & \cellcolor[rgb]{0.45, 0.77, 0.46}\multiline{$.50 \pm .06$\\$.96 \pm .01$}\\
\hline
\hdbscan & \cellcolor[rgb]{0.12, 0.53, 0.26}\multiline{$.93 \pm .00$\\$.98 \pm .00$} & \cellcolor[rgb]{0.74, 0.90, 0.72}\multiline{$.28 \pm .29$\\$.77 \pm .09$} & \cellcolor[rgb]{0.67, 0.87, 0.64}\multiline{$.35 \pm .05$\\$.90 \pm .01$} & \cellcolor[rgb]{0.30, 0.69, 0.39}\multiline{$.59 \pm .05$\\$.90 \pm .01$} & --\\
\hline
\multiline{\ghhc\\non-binary} & \cellcolor[rgb]{0.51, 0.80, 0.51}\multiline{$.46 \pm .17$\\$.86 \pm .04$} & \cellcolor[rgb]{0.29, 0.69, 0.38}\multiline{$.60 \pm .05$\\$.87 \pm .01$} & \cellcolor[rgb]{0.61, 0.84, 0.59}\multiline{$.39 \pm .03$\\$.91 \pm .00$} & \cellcolor[rgb]{0.23, 0.65, 0.35}\multiline{$.69 \pm .04$\\$.92 \pm .01$} & \cellcolor[rgb]{0.84, 0.94, 0.81}\multiline{$.19 \pm .07$\\$.93 \pm .01$}\\
\hline
\randomCut & \cellcolor[rgb]{0.55, 0.81, 0.54}\multiline{$.43 \pm .14$\\$.85 \pm .04$} & \cellcolor[rgb]{0.13, 0.54, 0.27}\multiline{$.91 \pm .10$\\$.97 \pm .03$} & \cellcolor[rgb]{0.71, 0.88, 0.68}\multiline{$.31 \pm .06$\\$.89 \pm .01$} & \cellcolor[rgb]{0.41, 0.75, 0.44}\multiline{$.53 \pm .16$\\$.88 \pm .04$} & \cellcolor[rgb]{0.90, 0.96, 0.88}\multiline{$.12 \pm .07$\\$.93 \pm .01$}\\
\hline
\grinch & \cellcolor[rgb]{0.13, 0.54, 0.27}\multiline{$.91 \pm .06$\\$.98 \pm .02$} & \cellcolor[rgb]{0.80, 0.92, 0.78}\multiline{$.23 \pm .36$\\$.75 \pm .12$} & \cellcolor[rgb]{0.69, 0.87, 0.66}\multiline{$.33 \pm .09$\\$.90 \pm .01$} & \cellcolor[rgb]{0.23, 0.65, 0.35}\multiline{$.70 \pm .03$\\$.92 \pm .01$} & \cellcolor[rgb]{0.97, 0.99, 0.96}\multiline{$-.01 \pm .03$\\$.91 \pm .00$}\\
\hline
\perch & \cellcolor[rgb]{0.82, 0.93, 0.80}\multiline{$.20 \pm .07$\\$.79 \pm .02$} & \cellcolor[rgb]{0.87, 0.95, 0.85}\multiline{$.15 \pm .06$\\$.73 \pm .02$} & \cellcolor[rgb]{0.75, 0.90, 0.72}\multiline{$.28 \pm .04$\\$.89 \pm .01$} & \cellcolor[rgb]{0.25, 0.67, 0.36}\multiline{$.65 \pm .08$\\$.91 \pm .02$} & \cellcolor[rgb]{0.96, 0.99, 0.95}\multiline{$.01 \pm .01$\\$.92 \pm .00$}\\
\hline
\Random & \cellcolor[rgb]{0.97, 0.99, 0.96}\multiline{$.00 \pm .01$\\$.74 \pm .00$} & \cellcolor[rgb]{0.97, 0.99, 0.96}\multiline{$.00 \pm .00$\\$.68 \pm .00$} & \cellcolor[rgb]{0.97, 0.99, 0.96}\multiline{$.00 \pm .00$\\$.84 \pm .00$} & \cellcolor[rgb]{0.97, 0.99, 0.96}\multiline{$.00 \pm .00$\\$.74 \pm .00$} & \cellcolor[rgb]{0.97, 0.99, 0.96}\multiline{$.00 \pm .00$\\$.91 \pm .00$}\\

\hline
$n\approx$ & $214$ & $4601$ & $108 \cdot 10^3$ & $581 \cdot 10^3$ & $1.3 \cdot 10^6$\\
$d$ & $10$ & $57$ & $128$ & $54$ & $2048$ \\
\#classes & $6$ & $2$ & $1000$ & $7$ & $1000$\\
\hline
\end{tabular}

\caption{\small Normalized/unnormalized ($\aproxourckmm[\normchar]$/$\aproxourckmm$) distance-based CKMM objectives (generalized to allow non-binary trees) under squared Euclidean distance with standard deviation for additional datasets. On $\aproxourckmm[\normchar]$ \ouralgo outperforms other approaches on medium and large datasets by 1-9\% and shows low variance. Among other scalable algorithms, \hkmeans has high-quality results with high variance. Among non-scalable algorithms, basic deterministic HAC methods (\averageLinkage, \completeLinkage) show competitive performance. Our worst-case theoretical algorithm \oursatalgo also shows substantial gains with low variance. Even the simplest 1D random projection technique (\randomCut) gives non-trivial results on all non-deep learning datasets (\glass, \spambase, \aloi, \covType).}
\label{tab:additional_ckmm_other_datasets}
\end{table*}
\begin{table*}[p]
\centering
\setlength{\tabcolsep}{0.2em}
\begin{tabular}{cPPPPP}
\hline 
\MtabSetup & \Mglass & \Mspambase & \Maloi & \McovType & \MimageNetAdditional \\
\hline 
\hline 
 
\textbf{\ouralgo} & \boldmath{\cellcolor[rgb]{0.10, 0.52, 0.25}\multiline{$.96 \pm .00$\\$1.0 \pm .00$}} & \boldmath{\cellcolor[rgb]{0.10, 0.51, 0.24}\multiline{$.97 \pm .00$\\$1.0 \pm .00$}} & \boldmath{\cellcolor[rgb]{0.28, 0.68, 0.38}\multiline{$.62 \pm .00$\\$.97 \pm .00$}} & \boldmath{\cellcolor[rgb]{0.20, 0.61, 0.32}\multiline{$.76 \pm .00$\\$.99 \pm .00$}} & \boldmath{\cellcolor[rgb]{0.26, 0.67, 0.37}\multiline{$.64 \pm .00$\\$.99 \pm .00$}}\\
\hline
\averageLinkage & \boldmath{\cellcolor[rgb]{0.10, 0.52, 0.25}\multiline{$.96 \pm .00$\\$1.0 \pm .00$}} & \boldmath{\cellcolor[rgb]{0.10, 0.51, 0.24}\multiline{$.97 \pm .00$\\$1.0 \pm .00$}} & -- & -- & --\\
\hline
\completeLinkage & \cellcolor[rgb]{0.16, 0.57, 0.29}\multiline{$.86 \pm .00$\\$1.0 \pm .00$} & \cellcolor[rgb]{0.11, 0.52, 0.25}\multiline{$.95 \pm .00$\\$1.0 \pm .00$} & -- & -- & --\\
\hline
\textbf{\bbisection} & \boldmath{\cellcolor[rgb]{0.10, 0.52, 0.25}\multiline{$.96 \pm .00$\\$1.0 \pm .00$}} & \boldmath{\cellcolor[rgb]{0.10, 0.51, 0.24}\multiline{$.97 \pm .00$\\$1.0 \pm .00$}} & \boldmath{\cellcolor[rgb]{0.28, 0.68, 0.38}\multiline{$.62 \pm .00$\\$.97 \pm .00$}} & \boldmath{\cellcolor[rgb]{0.20, 0.61, 0.32}\multiline{$.76 \pm .00$\\$.99 \pm .00$}} & \cellcolor[rgb]{0.79, 0.92, 0.77}\multiline{$.24 \pm .00$\\$.98 \pm .00$}\\
\hline
\textbf{\oursatalgo} & \boldmath{\cellcolor[rgb]{0.10, 0.52, 0.25}\multiline{$.96 \pm .00$\\$1.0 \pm .00$}} & \boldmath{\cellcolor[rgb]{0.10, 0.51, 0.24}\multiline{$.97 \pm .00$\\$1.0 \pm .00$}} & \boldmath{\cellcolor[rgb]{0.28, 0.68, 0.38}\multiline{$.62 \pm .00$\\$.97 \pm .00$}} & \boldmath{\cellcolor[rgb]{0.20, 0.61, 0.32}\multiline{$.76 \pm .00$\\$.99 \pm .00$}} & \cellcolor[rgb]{0.79, 0.92, 0.77}\multiline{$.24 \pm .00$\\$.98 \pm .00$}\\
\hline
\wards & \cellcolor[rgb]{0.17, 0.58, 0.29}\multiline{$.83 \pm .00$\\$1.0 \pm .00$} & \cellcolor[rgb]{0.20, 0.61, 0.32}\multiline{$.77 \pm .00$\\$.99 \pm .00$} & -- & -- & --\\
\hline
\hdbscan & \cellcolor[rgb]{0.14, 0.55, 0.27}\multiline{$.89 \pm .00$\\$1.0 \pm .00$} & \cellcolor[rgb]{0.17, 0.58, 0.29}\multiline{$.84 \pm .00$\\$.99 \pm .00$} & \cellcolor[rgb]{0.63, 0.85, 0.61}\multiline{$.37 \pm .05$\\$.95 \pm .00$} & \cellcolor[rgb]{0.62, 0.85, 0.60}\multiline{$.38 \pm .02$\\$.98 \pm .00$} & --\\
\hline
\affinityClustering & \cellcolor[rgb]{0.23, 0.64, 0.35}\multiline{$.70 \pm .00$\\$1.0 \pm .00$} & \cellcolor[rgb]{0.25, 0.66, 0.36}\multiline{$.67 \pm .00$\\$.98 \pm .00$} & -- & -- & --\\
\hline
\multiline{\birch\\non-binary} & \cellcolor[rgb]{0.20, 0.62, 0.32}\multiline{$.75 \pm .17$\\$1.0 \pm .00$} & \cellcolor[rgb]{0.73, 0.89, 0.70}\multiline{$.29 \pm .30$\\$.97 \pm .01$} & \cellcolor[rgb]{0.48, 0.78, 0.48}\multiline{$.48 \pm .06$\\$.96 \pm .00$} & \cellcolor[rgb]{0.45, 0.76, 0.46}\multiline{$.50 \pm .07$\\$.98 \pm .00$} & \cellcolor[rgb]{0.87, 0.95, 0.85}\multiline{$.15 \pm .13$\\$.97 \pm .00$}\\
\hline
\grinch & \cellcolor[rgb]{0.14, 0.55, 0.27}\multiline{$.89 \pm .06$\\$1.0 \pm .00$} & \cellcolor[rgb]{0.57, 0.82, 0.56}\multiline{$.42 \pm .26$\\$.97 \pm .01$} & \cellcolor[rgb]{0.65, 0.86, 0.63}\multiline{$.36 \pm .01$\\$.95 \pm .00$} & \cellcolor[rgb]{0.43, 0.75, 0.45}\multiline{$.52 \pm .14$\\$.98 \pm .01$} & \cellcolor[rgb]{0.95, 0.98, 0.94}\multiline{$.03 \pm .01$\\$.97 \pm .00$}\\
\hline
\hkmeans & \cellcolor[rgb]{0.17, 0.58, 0.29}\multiline{$.83 \pm .02$\\$1.0 \pm .00$} & \cellcolor[rgb]{0.97, 0.99, 0.96}\multiline{$-.15 \pm .00$\\$.95 \pm .00$} & \cellcolor[rgb]{0.28, 0.68, 0.38}\multiline{$.61 \pm .00$\\$.97 \pm .00$} & \cellcolor[rgb]{0.24, 0.66, 0.35}\multiline{$.67 \pm .00$\\$.99 \pm .00$} & \cellcolor[rgb]{0.97, 0.99, 0.96}\multiline{$-.03 \pm .00$\\$.97 \pm .00$}\\
\hline
\multiline{\ghhc\\non-binary} & \cellcolor[rgb]{0.54, 0.81, 0.53}\multiline{$.44 \pm .17$\\$1.0 \pm .00$} & \cellcolor[rgb]{0.97, 0.99, 0.96}\multiline{$-.12 \pm .04$\\$.95 \pm .00$} & \cellcolor[rgb]{0.45, 0.77, 0.46}\multiline{$.50 \pm .04$\\$.96 \pm .00$} & \cellcolor[rgb]{0.24, 0.65, 0.35}\multiline{$.68 \pm .03$\\$.99 \pm .00$} & \cellcolor[rgb]{0.97, 0.99, 0.96}\multiline{$.00 \pm .04$\\$.97 \pm .00$}\\
\hline
\singleLinkage & \cellcolor[rgb]{0.27, 0.68, 0.37}\multiline{$.63 \pm .00$\\$1.0 \pm .00$} & \cellcolor[rgb]{0.92, 0.97, 0.90}\multiline{$.09 \pm .00$\\$.96 \pm .00$} & -- & -- & --\\
\hline
\perch & \cellcolor[rgb]{0.83, 0.93, 0.81}\multiline{$.20 \pm .06$\\$1.0 \pm .00$} & \cellcolor[rgb]{0.93, 0.97, 0.92}\multiline{$.07 \pm .08$\\$.96 \pm .00$} & \cellcolor[rgb]{0.68, 0.87, 0.66}\multiline{$.33 \pm .06$\\$.95 \pm .00$} & \cellcolor[rgb]{0.28, 0.68, 0.38}\multiline{$.63 \pm .04$\\$.99 \pm .00$} & \cellcolor[rgb]{0.96, 0.98, 0.95}\multiline{$.02 \pm .01$\\$.97 \pm .00$}\\
\hline
\randomCut & \cellcolor[rgb]{0.57, 0.82, 0.56}\multiline{$.42 \pm .14$\\$1.0 \pm .00$} & \cellcolor[rgb]{0.94, 0.98, 0.92}\multiline{$.06 \pm .25$\\$.96 \pm .01$} & \cellcolor[rgb]{0.81, 0.92, 0.78}\multiline{$.22 \pm .08$\\$.94 \pm .01$} & \cellcolor[rgb]{0.48, 0.78, 0.48}\multiline{$.48 \pm .10$\\$.98 \pm .00$} & \cellcolor[rgb]{0.96, 0.98, 0.95}\multiline{$.02 \pm .04$\\$.97 \pm .00$}\\
\hline
\Random & \cellcolor[rgb]{0.97, 0.99, 0.96}\multiline{$.00 \pm .01$\\$1.0 \pm .00$} & \cellcolor[rgb]{0.97, 0.99, 0.96}\multiline{$.00 \pm .00$\\$.95 \pm .00$} & \cellcolor[rgb]{0.97, 0.99, 0.96}\multiline{$.00 \pm .00$\\$.92 \pm .00$} & \cellcolor[rgb]{0.97, 0.99, 0.96}\multiline{$.00 \pm .00$\\$.96 \pm .00$} & \cellcolor[rgb]{0.97, 0.99, 0.96}\multiline{$.00 \pm .00$\\$.97 \pm .00$}\\

\hline 
$n\approx$ & $214$ & $4601$ & $108 \cdot 10^3$ & $581 \cdot 10^3$ & $1.3 \cdot 10^6$\\
$d$ & $10$ & $57$ & $128$ & $54$ & $2048$ \\
\#classes & $6$ & $2$ & $1000$ & $7$ & $1000$\\
\hline
\end{tabular}

\caption{\small Normalized/unnormalized ($\aproxourmw[\normchar]$/$\aproxourmw$) similarity-based MW objectives (generalized to allow non-binary trees) under cosine similarity with standard deviation for additional datasets. On $\aproxourmw[\normchar]$ \ouralgo outperforms other approaches on medium and large datasets by 1-49\% and shows low variance. Among other scalable algorithms, \hkmeans shows high-quality results with high variance. Among non-scalable algorithms basic deterministic HAC methods (\averageLinkage, \completeLinkage, \wards) and robust single linkage (\hdbscan) show competitive performance. Our worst-case theoretical algorithm \oursatalgo works as good as \bbisection. Even the simplest 1D random projection technique (\randomCut) gives non-trivial results on multiple general datasets (\glass, \aloi, \covType). Somewhat surprisingly, some methods (\hkmeans, \ghhc) perform worse than \random for some datasets (\spambase, \imageNetAdditional). This might be due to the fact that for these datasets optimization under cosine similarities is substantially different from optimization using distances.}
\label{tab:additional_mw_other_datasets}
\end{table*}

\clearpage

    \section{Results for dendrogram purity}\label{sec:dp}

\emph{Dendrogram purity} ($\purity$)~\citep*{HellerG05} is a hierarchical clustering objective defined as maximizing:
$${\purity(\tree) = \frac 1 {\sum_{i=1}^\clustercount |C_i|^2} \sum_{i=1}^\clustercount \sum_{e_1,e_2 \in C_i} \frac {|C_i \cap \lca_\tree(e_1, e_2)|} {|\lca_\tree(e_1, e_2)|}},$$
where $C_1, \dots C_K$ are subsets corresponding to a ground truth clustering of the dataset.
Intuitively, for each cluster $C$, for each pair of elements $e_1, e_2 \in C$, we compute the fraction of elements in $\lca_\tree(e_1, e_2)$ which also belong to $C$.

Dendrogram purity is a widely used measure of the quality of a hierarchical clustering.
However, despite its widespread use, it has a number of shortcomings as a quality metric for HC and is arguably more suitable in the case when the hierarchy is ultimately used to extract a single flat clustering.
In particular, dendrogram purity:
\begin{itemize}
    \item Requires information about ground truth clusters to be available. This is typically not the case in realistic machine learning applications, where the notion of classes can be either ambiguous, hard to obtain or otherwise unavailable. We contrast this the MW/CKMM objectives considered in this work which only requires a similarity/distance measure to be defined for the input data points.
    \item Can't be stated as a clear optimization problem -- if ground truth clusters are given as a part of the input then the problem then optimization becomes trivial. Otherwise, the objective can't be passed to the algorithm.
    \item Is designed how well a given tree represents a certain ground truth flat clustering. It doesn't say anything about the hierarchical relationship of these clusters within the tree -- a perfect DP score can be achieved by imposing an arbitrary hierarchy on top of the ground truth clusters.
    \item Due to its normalization is not well-suited for datasets where the ground truth clusters are highly imbalanced.
\end{itemize}

Due to the reasons described above we believe that MW and CKMM objectives are better suitable for evaluating the quality of hierarchical clustering since they attempt to capture the quality of the entire tree rather than of its flattening. Hence, we primarily focus on these objectives in our experiments. Nevertheless, in order to facilitate comparisoin with the previous work which uses dendrogram purity we also present full experimental results under this measure.

Results on datasets consisting of high-dimensional deep embedding vectors are given in Table~\ref{tab:additional_dp} and Table~\ref{tab:additional_dp_other_datasets} shows results on some other popular machine learning datasets.
Despite not being tailored to the dendrogram purity performance measure our algorithms show highly competitive results, outperforming other algorithms on most datasets.

\begin{table*}[p]
\centering

\setlength{\tabcolsep}{0.2em}
\begin{tabular}{cPPPPP}
\hline 
\MtabSetup & \MimagenetSmall & \MimagenetBig  & \MimagenetVTwo & \Mnabirds & \MsstTwo \\
\hline

\textbf{\ouralgo} & \cellcolor[rgb]{0.85, 0.94, 0.83}$.18 \pm .00$ & \cellcolor[rgb]{0.90, 0.96, 0.88}$.12 \pm .00$ & \cellcolor[rgb]{0.51, 0.79, 0.51}$.46 \pm .00$ & \boldmath{\cellcolor[rgb]{0.91, 0.96, 0.89}$.11 \pm .00$} & \boldmath{\cellcolor[rgb]{0.22, 0.63, 0.34}$.72 \pm .00$}\\
\textbf{\oursatalgo} & \cellcolor[rgb]{0.85, 0.94, 0.83}$.18 \pm .00$ & \cellcolor[rgb]{0.90, 0.96, 0.88}$.13 \pm .00$ & \cellcolor[rgb]{0.51, 0.79, 0.51}$.46 \pm .00$ & \boldmath{\cellcolor[rgb]{0.91, 0.96, 0.89}$.11 \pm .00$} & \cellcolor[rgb]{0.23, 0.64, 0.35}$.70 \pm .04$\\
\textbf{\bbisection} & \cellcolor[rgb]{0.85, 0.94, 0.83}$.18 \pm .00$ & \cellcolor[rgb]{0.90, 0.96, 0.88}$.12 \pm .00$ & \cellcolor[rgb]{0.51, 0.79, 0.51}$.46 \pm .00$ & \boldmath{\cellcolor[rgb]{0.91, 0.96, 0.89}$.11 \pm .00$} & \cellcolor[rgb]{0.23, 0.64, 0.35}$.70 \pm .04$\\
\wards & -- & -- & \boldmath{\cellcolor[rgb]{0.49, 0.78, 0.49}$.48 \pm .00$} & -- & --\\
\averageLinkage & -- & -- & \cellcolor[rgb]{0.51, 0.79, 0.51}$.46 \pm .00$ & -- & --\\
\grinch & \boldmath{\cellcolor[rgb]{0.83, 0.93, 0.81}$.20 \pm .00$} & \boldmath{\cellcolor[rgb]{0.57, 0.82, 0.56}$.42 \pm .00$} & \cellcolor[rgb]{0.60, 0.83, 0.58}$.40 \pm .00$ & \cellcolor[rgb]{0.92, 0.97, 0.91}$.08 \pm .00$ & \cellcolor[rgb]{0.33, 0.71, 0.40}$.57 \pm .02$\\
\hkmeans & \cellcolor[rgb]{0.84, 0.94, 0.81}$.19 \pm .00$ & \cellcolor[rgb]{0.90, 0.96, 0.89}$.11 \pm .00$ & \cellcolor[rgb]{0.70, 0.88, 0.68}$.32 \pm .00$ & \cellcolor[rgb]{0.92, 0.97, 0.91}$.08 \pm .00$ & \cellcolor[rgb]{0.23, 0.65, 0.35}$.69 \pm .05$\\
\ghhc & \boldmath{\cellcolor[rgb]{0.83, 0.93, 0.80}$.20 \pm .00$} & \cellcolor[rgb]{0.62, 0.85, 0.60}$.38 \pm .00$ & \cellcolor[rgb]{0.78, 0.91, 0.76}$.25 \pm .00$ & \cellcolor[rgb]{0.93, 0.97, 0.91}$.08 \pm .03$ & \cellcolor[rgb]{0.33, 0.71, 0.40}$.57 \pm .01$\\
\singleLinkage & -- & -- & \cellcolor[rgb]{0.60, 0.83, 0.58}$.40 \pm .00$ & -- & --\\
\affinityClustering & -- & -- & \cellcolor[rgb]{0.62, 0.84, 0.60}$.38 \pm .00$ & -- & --\\
\perch & \cellcolor[rgb]{0.92, 0.97, 0.90}$.09 \pm .01$ & \cellcolor[rgb]{0.83, 0.93, 0.80}$.20 \pm .01$ & \cellcolor[rgb]{0.62, 0.85, 0.60}$.38 \pm .00$ & \cellcolor[rgb]{0.92, 0.97, 0.91}$.08 \pm .00$ & \cellcolor[rgb]{0.34, 0.71, 0.41}$.57 \pm .02$\\
\birch & \cellcolor[rgb]{0.85, 0.94, 0.82}$.18 \pm .00$ & \cellcolor[rgb]{0.93, 0.97, 0.92}$.06 \pm .01$ & \cellcolor[rgb]{0.85, 0.94, 0.83}$.18 \pm .00$ & \cellcolor[rgb]{0.95, 0.98, 0.93}$.04 \pm .00$ & \cellcolor[rgb]{0.33, 0.71, 0.40}$.58 \pm .02$\\
\completeLinkage & -- & -- & \cellcolor[rgb]{0.82, 0.93, 0.79}$.21 \pm .00$ & -- & --\\
\hdbscan & -- & -- & \cellcolor[rgb]{0.66, 0.86, 0.63}$.36 \pm .00$ & \cellcolor[rgb]{0.96, 0.99, 0.95}$.01 \pm .00$ & \cellcolor[rgb]{0.41, 0.75, 0.44}$.52 \pm .00$\\
\randomCut & \cellcolor[rgb]{0.97, 0.99, 0.96}$.00 \pm .00$ & \cellcolor[rgb]{0.97, 0.99, 0.96}$.00 \pm .00$ & \cellcolor[rgb]{0.91, 0.97, 0.89}$.10 \pm .00$ & \cellcolor[rgb]{0.96, 0.99, 0.95}$.01 \pm .00$ & \cellcolor[rgb]{0.43, 0.75, 0.45}$.52 \pm .01$\\
\Random & \cellcolor[rgb]{0.97, 0.99, 0.96}$.00 \pm .00$ & \cellcolor[rgb]{0.97, 0.99, 0.96}$.00 \pm .00$ & \cellcolor[rgb]{0.94, 0.98, 0.93}$.05 \pm .05$ & \cellcolor[rgb]{0.96, 0.99, 0.95}$.01 \pm .00$ & \cellcolor[rgb]{0.43, 0.76, 0.45}$.51 \pm .00$\\

\hline
$n\approx$ & $1.2\cdot10^6$ & $1.2\cdot10^6$ & $10^4$ & $5 \cdot 10^4$ &  $7 \cdot 10^4$ \\
$d$ & $512$ & $2048$ & $512$ & $512$ & $768$\\
\#classes & $10^3$ & $10^3$ & $10^3$ & $555$  & $2$\\
\hline
\end{tabular}
\caption{
Dendrogram purity ($\purity$) with standard deviation for datasets with ground truth classes. All algorithms show stable results for all datasets except \sstTwo on which \oursatalgo and \hkmeans have higher variance.}
\label{tab:additional_dp}
\end{table*}
\begin{table*}[p]
\centering
\setlength{\tabcolsep}{0.2em}
\begin{tabular}{cPPPP}
\hline 
\MtabSetup & \Mglass & \Mspambase & \Maloi & \McovType \\
\hline 
 
\textbf{\ouralgo} & \cellcolor[rgb]{0.41, 0.75, 0.44}$.53 \pm .00$ & \boldmath{\cellcolor[rgb]{0.20, 0.62, 0.32}$.75 \pm .00$} & \cellcolor[rgb]{0.58, 0.83, 0.57}$.41 \pm .00$ & \boldmath{\cellcolor[rgb]{0.48, 0.78, 0.48}$.48 \pm .00$}\\
\textbf{\oursatalgo} & \cellcolor[rgb]{0.41, 0.75, 0.44}$.53 \pm .00$ & \boldmath{\cellcolor[rgb]{0.20, 0.62, 0.32}$.75 \pm .00$} & \cellcolor[rgb]{0.58, 0.83, 0.57}$.41 \pm .00$ & \boldmath{\cellcolor[rgb]{0.48, 0.78, 0.48}$.48 \pm .00$}\\
\textbf{\bbisection} & \cellcolor[rgb]{0.41, 0.75, 0.44}$.53 \pm .00$ & \boldmath{\cellcolor[rgb]{0.20, 0.62, 0.32}$.75 \pm .00$} & \cellcolor[rgb]{0.58, 0.83, 0.57}$.41 \pm .00$ & \boldmath{\cellcolor[rgb]{0.48, 0.78, 0.48}$.48 \pm .00$}\\
\averageLinkage & \cellcolor[rgb]{0.44, 0.76, 0.46}$.51 \pm .00$ & \boldmath{\cellcolor[rgb]{0.20, 0.62, 0.32}$.75 \pm .00$} & -- & --\\
\wards & \cellcolor[rgb]{0.43, 0.76, 0.45}$.51 \pm .00$ & \cellcolor[rgb]{0.22, 0.64, 0.34}$.71 \pm .00$ & -- & --\\
\singleLinkage & \boldmath{\cellcolor[rgb]{0.38, 0.73, 0.43}$.54 \pm .00$} & \cellcolor[rgb]{0.26, 0.67, 0.37}$.65 \pm .00$ & -- & --\\
\hdbscan & \cellcolor[rgb]{0.46, 0.77, 0.47}$.49 \pm .01$ & \cellcolor[rgb]{0.28, 0.68, 0.38}$.61 \pm .02$ & \boldmath{\cellcolor[rgb]{0.29, 0.69, 0.38}$.61 \pm .00$} & \cellcolor[rgb]{0.51, 0.80, 0.51}$.46 \pm .00$\\
\completeLinkage & \cellcolor[rgb]{0.48, 0.78, 0.48}$.48 \pm .00$ & \cellcolor[rgb]{0.25, 0.67, 0.36}$.66 \pm .00$ & -- & --\\
\grinch & \cellcolor[rgb]{0.41, 0.75, 0.44}$.53 \pm .01$ & \cellcolor[rgb]{0.29, 0.69, 0.38}$.60 \pm .02$ & \cellcolor[rgb]{0.44, 0.76, 0.46}$.51 \pm .00$ & \cellcolor[rgb]{0.52, 0.80, 0.52}$.45 \pm .01$\\
\ghhc & \cellcolor[rgb]{0.49, 0.79, 0.49}$.47 \pm .02$ & \cellcolor[rgb]{0.28, 0.68, 0.38}$.61 \pm .01$ & \cellcolor[rgb]{0.50, 0.79, 0.50}$.47 \pm .00$ & \cellcolor[rgb]{0.52, 0.80, 0.52}$.45 \pm .00$\\
\perch & \cellcolor[rgb]{0.49, 0.79, 0.49}$.47 \pm .02$ & \cellcolor[rgb]{0.29, 0.69, 0.38}$.60 \pm .02$ & \cellcolor[rgb]{0.53, 0.80, 0.52}$.45 \pm .00$ & \cellcolor[rgb]{0.52, 0.80, 0.52}$.45 \pm .00$\\
\affinityClustering & \cellcolor[rgb]{0.41, 0.75, 0.44}$.52 \pm .00$ & \cellcolor[rgb]{0.34, 0.71, 0.41}$.57 \pm .00$ & -- & --\\
\hkmeans & \cellcolor[rgb]{0.43, 0.75, 0.45}$.52 \pm .01$ & \cellcolor[rgb]{0.27, 0.68, 0.37}$.63 \pm .00$ & \cellcolor[rgb]{0.71, 0.89, 0.69}$.31 \pm .00$ & \cellcolor[rgb]{0.54, 0.81, 0.53}$.44 \pm .00$\\
\randomCut & \cellcolor[rgb]{0.60, 0.84, 0.58}$.40 \pm .02$ & \cellcolor[rgb]{0.29, 0.69, 0.38}$.61 \pm .02$ & \cellcolor[rgb]{0.96, 0.99, 0.95}$.01 \pm .00$ & \cellcolor[rgb]{0.54, 0.81, 0.53}$.45 \pm .01$\\
\birch & \cellcolor[rgb]{0.58, 0.83, 0.57}$.41 \pm .02$ & \cellcolor[rgb]{0.38, 0.73, 0.43}$.54 \pm .00$ & \cellcolor[rgb]{0.89, 0.96, 0.87}$.13 \pm .01$ & \cellcolor[rgb]{0.54, 0.81, 0.53}$.44 \pm .00$\\
\Random & \cellcolor[rgb]{0.67, 0.87, 0.65}$.34 \pm .01$ & \cellcolor[rgb]{0.38, 0.73, 0.43}$.54 \pm .00$ & \cellcolor[rgb]{0.97, 0.99, 0.96}$.01 \pm .00$ & \cellcolor[rgb]{0.55, 0.81, 0.54}$.44 \pm .00$\\

\hline
$n\approx$ & $214$ & $4601$ & $108 \cdot 10^3$ & $581 \cdot 10^3$ \\
$d$ & $10$ & $57$ & $128$ & $54$ \\
\#classes & $6$ & $2$ & $1000$ & $7$\\
\hline
\end{tabular}
\caption{Dendrogram purity ($\purity$) with standard deviation for datasets with ground truth classes for additional datasets. All algorithms have low variance.  Our \ouralgo and \oursatalgo show high-quality results. Nearest neighbor based algorithms (\hdbscan, \grinch) show good performance for flat quality measure $\purity$. Other HAC methods (\averageLinkage, \wards, \singleLinkage) show competitive performance.}
\label{tab:additional_dp_other_datasets}
\end{table*}

\clearpage

    \begin{figure*}[htb!]
\centering

\begin{tikzpicture}
\begin{axis}
[xlabel={\#samples},ylabel={seconds}, 
xmode=log, ymode=log,
legend pos=outer north east,
width=11.1cm, height=7cm,
cycle list={
{blue},
{red},
{black},
{pink},
{orange},
{magenta},
{brown},
{purple},
},
xmin=1000,xmax=4530000,ymin=0.4,ymax=18000]

\addplot+[smooth,mark=*,mark options={scale=0.6},ultra thick] plot coordinates
{(1000, 0.41) (5000, 0.58) (10000, 0.71) (15000, 0.79) (20000, 0.85) (30000, 1.19) (40000, 1.57) (100000, 2.86) (200000, 5.38) (500000, 13.36) (1000000, 40.12) (2000000, 72.17) (4530000, 99.08)};
\addlegendentry{Random cut}

\addplot+[smooth,mark=*,mark options={scale=0.6},ultra thick] plot coordinates
{(1000, 0.84) (5000, 1.12) (10000, 1.51) (15000, 1.75) (20000, 1.99) (30000, 2.71) (40000, 3.25) (100000, 6.49) (200000, 12.48) (500000, 31.95) (1000000, 69.68) (2000000, 174.08) (4530000, 402.45)};
\addlegendentry{\birch}

\addplot+[smooth,mark=*,mark options={scale=0.6},ultra thick,line width=3pt] plot coordinates
{(1000, 0.63) (5000, 1.27) (10000, 2.05) (15000, 3.06) (20000, 3.74) (30000, 5.68) (40000, 7.18) (100000, 17.87) (200000, 36.67) (500000, 106.68) (1000000, 212.29) (2000000, 752.48) (4530000, 1721.40)};
\addlegendentry{\textbf{\ouralgo}}

\addplot+[smooth,mark=*,mark options={scale=0.6},ultra thick,dashed] plot coordinates
{(1000, 0.46) (5000, 0.97) (10000, 1.62) (15000, 2.34) (20000, 3.33) (30000, 4.72) (40000, 6.18) (100000, 16.79) (200000, 36.89) (500000, 97.48) (1000000, 224.37) (2000000, 660.03) (4530000, 1725.40)};
\addlegendentry{\hkmeans}

\addplot+[smooth,mark=*,mark options={scale=0.6},ultra thick,dashed] plot coordinates
{(1000, 3.60) (5000, 5.72) (10000, 10.77) (15000, 13.26) (20000, 16.89) (30000, 23.32) (40000, 30.30) (100000, 75.39) (200000, 199.76) (500000, 532.71) (1000000, 1408.46) (2000000, 3359.78) (4530000, 5937.47)};
\addlegendentry{\grinch}

\addplot+[smooth,mark=*,mark options={scale=0.6},ultra thick,dashed] plot coordinates
{(1000, 4.33) (5000, 7.21) (10000, 14.03) (15000, 20.90) (20000, 29.31) (30000, 46.17) (40000, 65.11) (100000, 184.68) (200000, 395.28) (500000, 796.27) (1000000, 2426.96) (2000000, 3671.11)};
\addlegendentry{\perch}

\addplot+[smooth,mark=*,mark options={scale=0.6},ultra thick,dashed] plot coordinates
{(1000, 2.32) (5000, 6.50) (10000, 18.79) (15000, 40.46) (20000, 25.29) (30000, 52.57) (40000, 84.55) (100000, 551.85) (200000, 2137.00) (500000, 11427.65)};
\addlegendentry{\hdbscan}

\addplot+[smooth,mark=*,mark options={scale=0.6},ultra thick,dashed] plot coordinates
{(1000, 1.27) (5000, 27.45) (10000, 94.96) (15000, 212.36) (20000, 384.69) (30000, 944.96) (40000, 1552.92)};
\addlegendentry{AverageL}

\end{axis}
\end{tikzpicture}
\caption{Scalability on subsampled \wikipedia depending on the sample size. \randomCut and \birch are the fastest, while \ouralgo and \hkmeans are close and show similar performance. Other algorithms are substantially worse E.g. the next algorithm \grinch is $3$ times slower. \hdbscan scales on small and medium high-dimensional datasets. HAC methods work only for small datasets. Hyperparameters are chosen for the best performance. Missing points are due to timeouts (5 hours) or memory limits (90 Gb).}
\label{plt:scalability}
\end{figure*}
\section{Scalability}\label{sec:scalability}

We separately show scalability of various methods on the largest used dataset (\wikipedia $n \approx 4.5\cdot10^6$, $d=100$) depending on the sample size (Figure~\ref{plt:scalability}).
We show running time of the top-5 fastest approaches (\randomCut, \birch, \ouralgo, \hkmeans, \grinch) and three examples approaches that scale differently: \perch, Robust Single linkage (\hdbscan) and a quadratic HAC algorithm (\avglink).
Since \oursatalgo uses gradient-descent based implementation with inverse kernel trick, it has similar performance to \ouralgo and therefore is not shown on the plot.
\randomCut and \birch show the best running time, and \ouralgo performs as well as \hkmeans.

\clearpage

\fi
\end{document}